%%%%% ICML 2024 EXAMPLE LATEX SUBMISSION FILE %%%%%%%%%%%%%%%%%

\documentclass{article}

% Recommended, but optional, packages for figures and better typesetting:
\usepackage{microtype}
\usepackage{graphicx}
\usepackage{subfigure}
\usepackage{booktabs} % for professional tables

\usepackage{breakurl}

% hyperref makes hyperlinks in the resulting PDF.
% If your build breaks (sometimes temporarily if a hyperlink spans a page)
% please comment out the following usepackage line and replace
% \usepackage{icml2023} with \usepackage[nohyperref]{icml2023} above.
\usepackage{hyperref}

% Attempt to make hyperref and algorithmic work together better:

%\usepackage{algorithm}% http://ctan.org/pkg/algorithms
%\usepackage{algpseudocode}% http://ctan.org/pkg/algorithmicx

% Use the following line for the initial blind version submitted for review:
%\usepackage{icml2024}

% If accepted, instead use the following line for the camera-ready submission:
\usepackage[accepted]{icml2024}

% For theorems and such
\usepackage{amsmath}
\usepackage{amssymb}
\usepackage{mathtools}
\usepackage{amsthm}
\usepackage{dsfont}

% if you use cleveref..
\usepackage[capitalize,noabbrev]{cleveref}

\usepackage{breakurl}

%defining my own operators - requires amsmath

\DeclareMathOperator{\Hom}{Hom}

\DeclareMathOperator{\Aut}{Aut}

% My packages

%use for inverse diagonal dots
\usepackage{mathdots}
%use for bold
\usepackage{bm}
%use for annotated matrix
\usepackage{nicematrix}
%use for colours
\usepackage{xcolor}

\usepackage{tabularray}
\usepackage{comment}

%use for appendix
\usepackage[title]{appendix}
\usepackage{appendix}
\usepackage{xurl}

%use for young tableaux
\usepackage{ytableau}
%use for tikzit diagrams
\usepackage{tikzit}
% TiKZ style file generated by TikZiT. You may edit this file manually,
% but some things (e.g. comments) may be overwritten. To be readable in
% TikZiT, the only non-comment lines must be of the form:
% \tikzstyle{NAME}=[PROPERTY LIST]

% Node styles
\tikzstyle{node}=[fill=white, draw=black, shape=circle, minimum size=1mm, ultra thick]
\tikzstyle{red node}=[fill=white, draw=red, shape=circle, minimum size=1mm, ultra thick]
\tikzstyle{green node}=[fill=white, draw={rgb,255: red,0; green,138; blue,0}, shape=circle, minimum size=1mm, ultra thick]
\tikzstyle{small box}=[fill=white, draw=black, shape=rectangle, minimum height=0.5cm, minimum width=0.5cm, ultra thick]
\tikzstyle{weyl}=[fill=white, draw={rgb,255: red,0; green,0; blue,109}, shape=rectangle, minimum height=0.5cm, minimum width=0.5cm]
\tikzstyle{filled_node}=[fill=black, draw=black, shape=circle, minimum size=1mm, ultra thick]

% Edge styles
\tikzstyle{thick}=[-, ultra thick]
\tikzstyle{blue_thick}=[-, ultra thick, draw=blue]
\tikzstyle{dashes}=[-, dashed, draw={rgb,255: red,191; green,191; blue,191}, dash pattern=on 2mm off 1mm, fill={rgb,255: red,244; green,228; blue,0}]
\tikzstyle{thick_arrow}=[ultra thick, ->]
\tikzstyle{dash_1}=[-, dashed]
\tikzstyle{dash_2}=[-, dashed, fill={rgb,255: red,246; green,235; blue,255}]
\tikzstyle{dash_3}=[-, dashed, fill={rgb,255: red,229; green,255; blue,181}]
\tikzstyle{dash_4}=[-, dashed, fill={rgb,255: red,255; green,209; blue,153}]
\tikzstyle{red_thick}=[-, ultra thick, draw=red]
\tikzstyle{dash_5}=[-, dashed, fill={rgb,255: red,225; green,255; blue,254}]
\tikzstyle{jellyfish}=[-, ultra thick, fill={rgb,255: red,34; green,48; blue,255}]
\tikzstyle{arrow}=[thick, ->]
\tikzstyle{thick_red_arrow}=[ultra thick, ->, draw=red]
\tikzstyle{thick_blue_double_arrow}=[ultra thick, <->, draw=blue]
\tikzstyle{thick_green_double_arrow}=[ultra thick, <->, draw={rgb,255: red,0; green,138; blue,0}]

%for scalebox
\usepackage{graphicx}
\usepackage[most]{tcolorbox}

\definecolor{melon}{RGB}{227, 168, 105} 

%%%%%%%%%%%%%%%%%%%%%%%%%%%%%%%%
% THEOREMS
%%%%%%%%%%%%%%%%%%%%%%%%%%%%%%%%
\theoremstyle{plain}
\newtheorem{theorem}{Theorem}[section]
\newtheorem{proposition}[theorem]{Proposition}
\newtheorem{lemma}[theorem]{Lemma}
\newtheorem{corollary}[theorem]{Corollary}
\theoremstyle{definition}
\newtheorem{definition}[theorem]{Definition}

\theoremstyle{remark}
\newtheorem{remark}[theorem]{Remark}
\newtheorem{example}[theorem]{Example}

% Todonotes is useful during development; simply uncomment the next line
%    and comment out the line below the next line to turn off comments
%\usepackage[disable,textsize=tiny]{todonotes}
\usepackage[textsize=tiny]{todonotes}

% The \icmltitle you define below is probably too long as a header.
% Therefore, a short form for the running title is supplied here:
\icmltitlerunning{
	Graph Automorphism Group Equivariant Neural Networks
}

\begin{document}

\twocolumn[
\icmltitle{
	Graph Automorphism Group Equivariant Neural Networks
}

% It is OKAY to include author information, even for blind
% submissions: the style file will automatically remove it for you
% unless you've provided the [accepted] option to the icml2024
% package.

% List of affiliations: The first argument should be a (short)
% identifier you will use later to specify author affiliations
% Academic affiliations should list Department, University, City, Region, Country
% Industry affiliations should list Company, City, Region, Country

% You can specify symbols, otherwise they are numbered in order.
% Ideally, you should not use this facility. Affiliations will be numbered
% in order of appearance and this is the preferred way.
%\icmlsetsymbol{equal}{*}

\begin{icmlauthorlist}
	\icmlauthor{Edward Pearce--Crump}{imperial}
	\icmlauthor{William J. Knottenbelt}{imperial}
\end{icmlauthorlist}

\icmlaffiliation{imperial}{Department of Computing, 
Imperial College London, United Kingdom}

\icmlcorrespondingauthor{Edward Pearce--Crump}{ep1011@ic.ac.uk}

% You may provide any keywords that you
% find helpful for describing your paper; these are used to populate
% the "keywords" metadata in the PDF but will not be shown in the document
\icmlkeywords{Machine Learning, ICML}

\vskip 0.3in
]

% this must go after the closing bracket ] following \twocolumn[ ...

% This command actually creates the footnote in the first column
% listing the affiliations and the copyright notice.
% The command takes one argument, which is text to display at the start of the footnote.
% The \icmlEqualContribution command is standard text for equal contribution.
% Remove it (just {}) if you do not need this facility.

\printAffiliationsAndNotice{}  % leave blank if no need to mention equal contribution
%\printAffiliationsAndNotice{\icmlEqualContribution} % otherwise use the standard text.

\begin{abstract}
	Permutation equivariant neural networks are typically used to learn
	from data that lives on a graph.  However, for any graph $G$ that has
	$n$ vertices, using the symmetric group $S_n$ as its group of
	symmetries does not take into account the relations that exist between
	the vertices.  Given that the actual group of symmetries is the
	automorphism group $\Aut(G)$, we show how to construct neural networks
	that are equivariant to $\Aut(G)$ by obtaining a full characterisation
	of the learnable, linear, $\Aut(G)$-equivariant functions between
	layers that are some tensor power of $\mathbb{R}^{n}$.  In particular,
	we find a spanning set of matrices for these layer functions in the
	standard basis of $\mathbb{R}^{n}$.  This result has important
	consequences for learning from data whose group of symmetries is a
	finite group because a theorem by \citet{frucht} showed that any finite
	group is isomorphic to the automorphism group of a graph. 
\end{abstract}

\section{Introduction}

In many cases, the relationships that exist between certain entities can be
structured in the form of a graph.  These include the interactions between
people in a social network \cite{leskovec}, the bonds that connect atoms in a
molecule \cite{gilmer}, and the relationships between users and items in a
recommendation system \cite{konstas}.  
%As a result, there is a strong level of
%motivation to construct neural network architectures that can provide new
%insights into data that has this structure.  
As a result, there is a high level of motivation to design neural network
architectures that can offer new insights into data with this structure.
A number of methods for learning
from graphs exist in the machine learning literature.  These include Graph
Attention Networks (GATs) \cite{velickovic2018graph}, Graph Convolutional
Networks (GCNs) \cite{kipf2017semisupervised}, and Graph Neural Networks (GNNs)
\cite{gori, scarselli}, among others.  In particular, many of these approaches
use some form of equivariance to the symmetric group, which has since been 
characterised fully
%determined previously
by a number of authors \cite{maron2018, ravanbakhsh, pearcecrump}.
However, using neural networks that are equivariant to the symmetric group 
does not take into account the relations that exist between the vertices in a graph. 
Given that the actual group of symmetries of a graph is its automorphism group, 
we would like to construct neural networks that satisfy the stronger condition 
of being equivariant to this group instead.

In this paper, writing the automorphism group of a graph $G$ having some $n$
vertices as $\Aut(G)$, we give a full characterisation of all of the possible
$\Aut(G)$-equivariant neural networks whose layers are some tensor power of
$\mathbb{R}^{n}$ by finding a spanning set of matrices for the learnable,
linear, $\Aut(G)$-equivariant layer functions between such tensor power spaces
in the standard basis of $\mathbb{R}^{n}$.  Our approach is similar to the one
seen in \citet{pearcecrump, pearcecrumpB, pearcecrumpJ}, where they used the
combinatorics of set partitions to characterise the learnable, linear, group
equivariant layer functions between any two tensor power spaces of
$\mathbb{R}^{n}$ for a number of important groups.  However, instead of
calculating the spanning set by studying set partitions, we obtain it by
relating each spanning set element 
with the isomorphism class of
a so-called \textit{bilabelled graph}.
In short, a $(k,l)$--bilabelled graph is a graph that comes with two tuples,
one of length $k$ and the other of length $l$, whose entries are taken from the
vertex set of the graph (with repetitions amongst entries allowed).
Consequently, by looking at the combinatorics of bilabelled graphs, we can
determine the learnable, linear, $\Aut(G)$-equivariant layer
functions 
%that appear in such a neural network.
between any two tensor power spaces of $\mathbb{R}^{n}$.

We leverage the work of \citet{mancinska}, who studied the problem of
determining under what circumstances two graphs are quantum isomorphic.  They
built upon the work of \citet{chassaniol}, who showed that a vertex-transitive
graph has no quantum symmetries.  
We show how their results and methods can be
applied instead for the purpose of learning from data that has an underlying
symmetry to the automorphism group of a graph.

There are important consequences for performing such a characterisation.  A
famous theorem in algebraic graph theory known as Frucht's Theorem
\cite{frucht} states that every finite group is isomorphic to the automorphism
group of a finite undirected graph.  As a result, for any finite group of our
choosing, if we know the graph (having some $n$ vertices) whose automorphism
group is isomorphic to the group in question, then we will be able to
characterise all of the learnable, linear, group equivariant layer functions
between any two tensor power spaces of $\mathbb{R}^{n}$ for that group.  In
particular, we show that we can recover the diagram basis that appears in
\citet{godfrey} for the learnable, linear, $S_n$-equivariant layer functions
between tensor power spaces of $\mathbb{R}^{n}$ in the standard basis of
$\mathbb{R}^{n}$.  We also show that we can determine characterisations for
other groups, such as for $D_4$, considered as a subgroup of $S_4$, which, to
the best of our knowledge, have been missing from the literature.

Furthermore, even if, for a given finite undirected graph, we are unable to
determine the finite group that is isomorphic to the automorphism group of the
graph, we know that we can calculate the learnable, linear, automorphism group
equivariant linear layers using this method, which will be sufficient for
performing learning in situations where we do not need to know what the actual
automorphism group is explicitly.

The main contributions of this paper, which appear in Section
\ref{AutoGraphChar} onwards, are as follows: 
\begin{enumerate} 
	\item
		We are the first to show how the combinatorics
		underlying bilabelled graphs
		%diagrams
		provides the theoretical foundation for
		constructing neural networks that are
		equivariant to the automorphism group of a
		graph $G$ having $n$ vertices where the layers
		are some tensor power of $\mathbb{R}^{n}$.  

	\item 
		In particular, we find a spanning set for the learnable,
		linear, $\Aut(G)$-equivariant layer functions between such
		tensor power spaces in the standard basis of $\mathbb{R}^{n}$.
	\item 
		We show how our approach can be used to recover the diagram basis
		that appears in \citet{godfrey}
		for the learnable, linear, $S_n$-equivariant layer functions 
		between tensor power spaces of $\mathbb{R}^{n}$.
		%in the standard basis of $\mathbb{R}^{n}$ 
\end{enumerate}

\textit{Notation}:
We let $[n]$
represent the set $\{1, \dots, n\}$ throughout this paper.

\section{Graph Theory Essentials}

We begin by recalling some of the fundamentals of graph theory.
%that will appear throughout the rest of this paper.  
For more details, see any standard book on
graph theory, such as \citet{bollobas}. 

\begin{definition}
	A \textbf{graph} $G$ is a tuple $(V(G), E(G))$ of sets,
	where $V(G)$ is a set of vertices for $G$
	and $E(G)$ is a subset of unordered pairs of elements 
	from $V(G) \times V(G)$ denoting the 
	undirected edges between the vertices of $G$.

	We include the possibility that the graph has loops; 
	however, we only allow at most one loop per vertex.
\end{definition}

\begin{definition}
	Let $G$ be a graph having $n$ vertices.  The \textbf{adjacency matrix}
	of $G$, denoted by $A_G$, is the $n \times n$ matrix whose
	$(i,j)$--entry is $1$ if vertex $i$ is adjacent to vertex $j$ in $G$,
	and is $0$ otherwise.  Note that $A_G$ is a symmetric matrix because
	$G$ has undirected edges, and the $(i,i)$-entry is $1$ in $A_G$ if and
	only if $G$ has a loop at vertex $i$.
\end{definition}

\begin{definition}
	Let $G$ be a graph.  The \textbf{complement} of $G$, denoted by
	$\overline{G}$, is the graph having the same vertex set as $G$ and the
	same loops as $G$, but now distinct vertices are adjacent in
	$\overline{G}$ if and only if they are not adjacent in $G$.
\end{definition}

\begin{example}
	The complete graph on $n$ vertices, $K_n$, is the loopless graph where
	every vertex is adjacent to every other vertex.
\end{example}

\begin{example}
	The cycle graph on $n$ vertices, $C_n$, is the loopless graph where
	every vertex $i \in [n]$ is adjacent to $j = i \pm 1 \mod n$, where $j
	\in [n]$.
\end{example}

\begin{definition} 
	Let $H$ and $G$ be graphs.  A \textbf{graph homomorphism} from $H$ to
	$G$ is a function $\phi: V(H) \rightarrow V(G)$ such that if $i$ is
	adjacent to $j$ in $H$, then $\phi(i)$ is adjacent to $\phi(j)$ in $G$.  
\end{definition}

\begin{definition} 
	Let $H$ and $G$ be graphs.  A \textbf{graph isomorphism} from $H$ to
	$G$ is a graph homomorphism that is also a bijection.
	%bijection $\phi: V(H) \rightarrow V(G)$ such that $i$ is
	%adjacent to $j$ in $H$ if and only if $\phi(i)$ is adjacent to
	%$\phi(j)$ in $G$.  
\end{definition}

Consequently, we get that
\begin{definition} 
	Let $G$ be a graph.  An \textbf{automorphism} of $G$ is an isomorphism
	from $G$ to $G$.  The set of all automorphisms of $G$, written
	$\Aut(G)$, can be shown to be a group under composition of functions. 
\end{definition}

\begin{remark}
	If $G$ is a graph having $n$ vertices, then $\Aut(G)$ is, in fact, a
	subgroup of $S_n$.  Specifically, if $\sigma \in S_n$, then it is easy
	to show, viewing $S_n$ as a subgroup of $GL(n)$, that
	\begin{equation}
		\sigma \in \Aut(G) \iff \sigma A_G = A_G \sigma
	\end{equation}
	It is also clear to see that $\Aut(G) \cong \Aut(\overline{G})$.
\end{remark}

\begin{example} \label{autocompletegraph}
	The automorphism group of the complete graph on $n$ vertices,
	$\Aut(K_n)$, is the symmetric group $S_n$.  Consequently, the
	automorphism group of the edgeless graph having $n$ vertices,
	$\Aut(\overline{K_n})$, is also the symmetric group $S_n$.
\end{example}

\begin{example} \label{autocyclegraph}
	The automorphism group of the cycle graph on $n$ vertices, $\Aut(C_n)$, 
	is isomorphic to the dihedral group $D_n$ of order $2n$.
\end{example}

\begin{example} \label{autotwocompletetwo}
	The automorphism group of two copies of the complete graph on two
	vertices, $\Aut(2K_2)$, is isomorphic to the dihedral group $D_4$ of
	order $8$.
\end{example}

\section{Graph Automorphism Group Equivariant Linear Layer Functions}

%Many permutation 
Neural networks that are equivariant to the 
the automorphism group of a graph $G$ 
having $n$ vertices, $\Aut(G)$,
can be constructed
%Many group equivariant neural networks are constructed 
by alternately composing
linear and non-linear equivariant functions between layer spaces that are a 
tensor power of $\mathbb{R}^{n}$ \cite{lim}. 
These layer spaces are representations of 
%the automorphism group of a graph $G$ 
%having $n$ vertices, 
$\Aut(G)$ in the following sense.

Recall first that any $k$-tensor power of $\mathbb{R}^{n}$,
$(\mathbb{R}^{n})^{\otimes k}$,
is a representation of the symmetric group $S_n$, since the elements 
\begin{equation} \label{tensorelementfirst} 
	e_I \coloneqq e_{i_1} \otimes e_{i_2} \otimes \dots \otimes e_{i_k} 
\end{equation} 
for all $I \coloneqq (i_1, i_2, \dots, i_k) \in [n]^k$ 
form a basis of $(\mathbb{R}^{n})^{\otimes k}$, and the
action of $S_n$ that maps a basis element of $(\mathbb{R}^{n})^{\otimes k}$ of
the form (\ref{tensorelementfirst}) to 
\begin{equation} 
	e_{\sigma(I)} \coloneqq 
	e_{\sigma(i_1)} \otimes e_{\sigma(i_2)} \otimes \dots \otimes e_{\sigma(i_k)}
\end{equation} 
can be extended linearly on the basis.

For any graph $G$ having $n$ vertices, as $\Aut(G)$ is a subgroup of
$S_n$, we have that $(\mathbb{R}^{n})^{\otimes k}$ is also a
representation of $\Aut(G)$ that is given by the restriction of the
representation of $S_n$ to $\Aut(G)$.

We denote the representation of $S_n$ by $\rho_{k}$.  We will use the
same notation for the restriction of this representation to $\Aut(G)$,
with the context making clear that it is the restriction of the $S_n$
representation.

Moreover, we have that
\begin{definition} \label{equivariance}
	A map $\phi : 
	(\mathbb{R}^{n})^{\otimes k} \rightarrow
	(\mathbb{R}^{n})^{\otimes l}$
	is said to be \textbf{equivariant} to $\Aut(G)$ if,
	for all $\sigma \in \Aut(G)$ and $v \in 
	(\mathbb{R}^{n})^{\otimes k}$,
	\begin{equation} \label{equivmapdefn}
		\phi(\rho_{k}(\sigma)[v]) = \rho_{l}(\sigma)[\phi(v)]
	\end{equation}
	We denote the set of all \textit{linear} 
	$\Aut(G)$-equivariant maps 
	between $(\mathbb{R}^{n})^{\otimes k}$ and $(\mathbb{R}^{n})^{\otimes l}$ by
	\begin{equation} \label{desiredhomspace}
		\Hom_{\Aut(G)}((\mathbb{R}^{n})^{\otimes k},(\mathbb{R}^{n})^{\otimes l})
	\end{equation}
	It can be shown that 
	(\ref{desiredhomspace})
	is a vector space over $\mathbb{R}$. 
	See \citet{segal} for more details.
	Note that 
	(\ref{desiredhomspace})
	is a subspace of 
	$\Hom((\mathbb{R}^{n})^{\otimes k}, (\mathbb{R}^{n})^{\otimes l})$, 
	the vector space of all linear maps from 
	$(\mathbb{R}^{n})^{\otimes k}$ 
	to $(\mathbb{R}^{n})^{\otimes l}$.
\end{definition}

Our goal is 
to calculate all of the weight matrices
that can appear between any two layers of the 
graph automorphism group equivariant neural networks in question.
It is enough to 
construct a spanning set of matrices for 
$\Hom_{\Aut(G)}((\mathbb{R}^{n})^{\otimes k}, (\mathbb{R}^{n})^{\otimes l})$,
by viewing it as a subspace of 
$\Hom((\mathbb{R}^{n})^{\otimes k}, (\mathbb{R}^{n})^{\otimes l})$
and choosing the standard basis of
$\mathbb{R}^{n}$,
since any weight matrix will be a 
weighted linear combination of these spanning set matrices.

%\section{Characterisation of the Equivariant, Linear Maps
%for the Automorphism Group of a Graph}

\section{Characterisation Result using Bilabelled Graphs}
\label{AutoGraphChar}

%\textbf{NEW SECTION TITLE?}

In this section, we construct a spanning set of matrices for
$\Hom_{\Aut(G)}((\mathbb{R}^{n})^{\otimes k}, (\mathbb{R}^{n})^{\otimes l})$
by relating each spanning set matrix with 
the isomorphism class of
a so-called $(k,l)$--bilabelled graph.
%we give a full characterisation of all of the possible
%$\Aut(G)$-equivariant neural networks whose layers are some tensor power of
%$\mathbb{R}^{n}$ by finding a spanning set of matrices for the learnable,
%linear, $\Aut(G)$-equivariant layer functions between such tensor power spaces
%in the standard basis of $\mathbb{R}^{n}$.  
We leverage the work of
\citet{mancinska} throughout, but begin by stating a result found by
\citet{chassaniol} which describes,
in terms of a generating set of matrices,
the category whose morphisms are the linear layer functions that we wish to
characterise.

\subsection{Chassaniol's Result}

For each group $G$ that is a subgroup of $S_n$, we can define the
following category.  
\begin{definition} \label{catgroupreps} 
	The category $\mathcal{C}(G)$ consists of objects that are the
	$k$-order tensor power spaces of $\mathbb{R}^{n}$, as
	representations of $G$, and morphism spaces between any two
	objects that are the vector spaces
	$\Hom_{G}((\mathbb{R}^{n})^{\otimes k},
	(\mathbb{R}^{n})^{\otimes l})$.

	The vertical composition of morphisms is given by the usual
	composition of linear maps, the tensor product is given by the
	usual tensor product of linear maps, and the unit object 
	is the one-dimensional trivial representation of $G$.  
\end{definition}

\begin{remark} 
	We will sometimes write $\mathcal{C}(G)(k,l)$ for
	$\Hom_{G}((\mathbb{R}^{n})^{\otimes k}, (\mathbb{R}^{n})^{\otimes l})$,
	and reuse the notation $\mathcal{C}(G)$ for the set $\cup_{k,l =
	0}^{\infty} \; \mathcal{C}(G)(k,l)$.  
\end{remark} 

\begin{proposition}
	\label{tenscatduals}
	The category $\mathcal{C}(G)$ is a 
	strict $\mathbb{R}$-linear monoidal category.
\end{proposition}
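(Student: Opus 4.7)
The plan is to verify the four structural properties in turn: category, $\mathbb{R}$-linearity, monoidality, and strictness. Most axioms will be inherited from the ambient category of $\mathbb{R}$-vector spaces and linear maps; only the closure of $\mathcal{C}(G)(k,l)$ under the operations in question requires genuine checking.

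First, I would confirm that $\mathcal{C}(G)$ is a category by showing that the identity map on $(\mathbb{R}^n)^{\otimes k}$ is $G$-equivariant (immediate from (\ref{equivmapdefn})) and that the composition of two $G$-equivariant linear maps is again $G$-equivariant. The latter follows from the one-line chase $\psi \circ \phi \circ \rho_k(\sigma) = \psi \circ \rho_l(\sigma) \circ \phi = \rho_m(\sigma) \circ \psi \circ \phi$ for $\phi \in \mathcal{C}(G)(k,l)$ and $\psi \in \mathcal{C}(G)(l,m)$. Associativity of composition and the identity axiom then carry over from $\mathbf{Vect}_\mathbb{R}$. For $\mathbb{R}$-linearity, the remark following Definition \ref{equivariance} already records that $\mathcal{C}(G)(k,l)$ is a real vector subspace of $\Hom((\mathbb{R}^n)^{\otimes k}, (\mathbb{R}^n)^{\otimes l})$, and bilinearity of composition over $\mathbb{R}$ is inherited from the ambient hom-spaces.

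For the monoidal structure, I would set $(\mathbb{R}^n)^{\otimes k} \otimes (\mathbb{R}^n)^{\otimes l} := (\mathbb{R}^n)^{\otimes (k+l)}$ on objects, take the unit to be the $0$-th tensor power $\mathbb{R}$ equipped with the trivial $G$-action, and take the tensor product of morphisms to be the usual tensor product of linear maps. The key closure check is that if $\phi \in \mathcal{C}(G)(k_1,l_1)$ and $\psi \in \mathcal{C}(G)(k_2,l_2)$ then $\phi \otimes \psi \in \mathcal{C}(G)(k_1+k_2, l_1+l_2)$. This is the factorisation $\rho_{k_1+k_2}(\sigma) = \rho_{k_1}(\sigma) \otimes \rho_{k_2}(\sigma)$ combined with the bifunctoriality of $\otimes$ on vector spaces. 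Bifunctoriality of $\otimes$ on $\mathcal{C}(G)$ and the interchange law then follow from the corresponding facts in $\mathbf{Vect}_\mathbb{R}$.

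For strictness, the choice of convention $(\mathbb{R}^n)^{\otimes k} \otimes (\mathbb{R}^n)^{\otimes l} = (\mathbb{R}^n)^{\otimes (k+l)}$ makes the associator and both unitors literally identity maps, since addition of tensor-power exponents is strictly associative and $0$ is a strict two-sided identity; the pentagon and triangle axioms hold trivially. The main (though minor) obstacle is precisely this strictness bookkeeping: in the standard monoidal structure on $\mathbf{Vect}_\mathbb{R}$, the associator and unitors are only canonical isomorphisms, so one must identify each iterated tensor product such as $((\mathbb{R}^n)^{\otimes k} \otimes (\mathbb{R}^n)^{\otimes l}) \otimes (\mathbb{R}^n)^{\otimes m}$ with the single power $(\mathbb{R}^n)^{\otimes (k+l+m)}$ along the canonical map, and verify that the tensor product of morphisms is compatible with this identification. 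This is routine once the convention is fixed.
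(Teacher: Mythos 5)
Your proposal is correct and follows essentially the same route as the paper's proof: both verify the axioms by inheriting them from the ambient category of vector spaces, noting that the morphism spaces are real vector spaces with $\mathbb{R}$-bilinear composition and that strictness comes from identifying $(\mathbb{R}^{n})^{\otimes k} \otimes (\mathbb{R}^{n})^{\otimes l}$ with $(\mathbb{R}^{n})^{\otimes (k+l)}$. You are somewhat more explicit than the paper about the closure checks (equivariance of identities, composites, and tensor products of equivariant maps, via $\rho_{k_1+k_2}(\sigma) = \rho_{k_1}(\sigma) \otimes \rho_{k_2}(\sigma)$), which the paper treats as immediate, but this is a difference of detail rather than of method.
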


\begin{proof}
	See the Technical Appendix.	
\end{proof}

%\begin{proposition} 
	%\label{tenscatduals}
	%The category $\mathcal{C}(G)$ is a symmetric tensor category with
	%duals, in the sense that 
	%\begin{enumerate} 
		%\item 
			%if $\phi_1$, $\phi_2 \in \mathcal{C}(G)$,
			%then $\phi_1 \otimes \phi_2 \in \mathcal{C}(G)$ 
		%\item
			%if $\phi_1$, $\phi_2 \in \mathcal{C}(G)$ are
			%composable, then $\phi_1 \circ \phi_2 \in \mathcal{C}(G)$ 
		%\item
			%if $\phi \in \mathcal{C}(G)(k,l)$, then $\phi^{*} \in
			%\mathcal{C}(G)(l,k)$ 
		%\item 
			%the identity map $M^{1,1} \in \mathcal{C}(G)(1,1)$ 
		%\item 
			%the swap map $S \in \mathcal{C}(G)(2,2)$, 
			%where $S$ maps $e_i \otimes e_j$
			%to $e_j \otimes e_i$ 
		%\item 
			%the cup map $M^{0,2} \in
			%\mathcal{C}(G)(0,2)$, where $M^{0,2}$ maps $1$ to
			%$\sum_i e_i \otimes e_i$ 
	%\end{enumerate}
%\end{proposition}

%\begin{proof} 
	%See \citet{banica},
	%Proposition 1.2.
%\end{proof}

It will be useful to define a number of spider maps $M^{k,l}$
%as well as 
together with the swap map $S$.
\begin{definition} 
	\label{spiderdefn}
	For all non-negative integers $k, l$,
	the \textbf{spider map} $M^{k,l} \in
	\Hom((\mathbb{R}^{n})^{\otimes k}, (\mathbb{R}^{n})^{\otimes
	l})$ is defined as follows: 
	\begin{itemize} 
		\item if $k, l > 0$, then $M^{k,l}$ maps $e_i^{\otimes k}$ to
		$e_i^{\otimes l}$ for all $i \in [n]$ and maps all
	other vectors to the zero vector.
		\item if $k = 0$
		and $l > 0$, then $M^{0,l}$ maps $1$ to $\sum_i
	e_i^{\otimes l}$. 
		\item if $k > 0$ and $l = 0$, then $M^{k,0}$ maps $e_i^{\otimes k}$ to $1$ for
			all $i \in [n]$ and maps all other vectors to
			$0$.
		\item if $k = 0$
	and $l = 0$, then $M^{0,0} \coloneqq (n).$ \end{itemize}

	The \textbf{swap map} $S \in
	\Hom((\mathbb{R}^{n})^{\otimes 2}, (\mathbb{R}^{n})^{\otimes
	2})$ maps $e_i \otimes e_j$ to $e_j \otimes e_i$ for all $i, j \in [n]$.
\end{definition}

%\begin{remark}
	%In particular, the identity map $M^{1,1}$ and the cup map $M^{0,2}$
	%given in Proposition \ref{tenscatduals}
	%are spider maps, hence the notation that we have used is consistent.
%\end{remark}

For any graph $G$ having $n$ vertices, \citet{chassaniol}
found the following
generating set for the category $\mathcal{C}(\Aut(G))$:

\begin{theorem}[\citet{chassaniol}, Proposition 3.5] \label{chassaniol}
	Let $A_G$ denote the adjacency matrix of the graph $G$. Then 
	\begin{equation}
		\mathcal{C}(\Aut(G)) = 
		\langle M^{0,1}, M^{2,1}, A_G, S \rangle_{+, \circ, \otimes, *} 
	\end{equation} 
	where the right hand side denotes all matrices that can be generated
	from the four matrices using the operations of $\mathbb{R}$-linear
	combinations, matrix product, Kronecker product, and transposition.  
\end{theorem}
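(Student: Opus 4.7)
The plan is to prove the two containments $\langle M^{0,1}, M^{2,1}, A_G, S \rangle \subseteq \mathcal{C}(\Aut(G))$ and $\mathcal{C}(\Aut(G)) \subseteq \langle M^{0,1}, M^{2,1}, A_G, S \rangle$ separately. The first is a routine verification; the second is where the real content lies, and I would handle it via Tannaka--Krein reconstruction of $\Aut(G)$ from the tensor category it acts on.

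For the first containment, I would check each generator in turn. The swap $S$ and the spider maps $M^{0,1}, M^{2,1}$ are $S_n$-equivariant by direct inspection on basis vectors $e_I$, hence a fortiori $\Aut(G)$-equivariant since $\Aut(G) \leq S_n$. For $A_G$, the identity $\sigma A_G = A_G \sigma$ for every $\sigma \in \Aut(G)$ is precisely the remark recorded just after Definition of $\Aut(G)$. Because $\mathcal{C}(\Aut(G))$ is, by Proposition \ref{tenscatduals}, closed under $\mathbb{R}$-linear combinations, composition, and tensor product, and since $\Aut(G) \leq S_n$ consists of orthogonal matrices so that transposition preserves equivariance, every matrix built from the four generators via the listed operations lies in $\mathcal{C}(\Aut(G))$.

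For the harder direction, let $\mathcal{D}$ denote the subcategory generated by the four matrices, and set
\[
H := \{\sigma \in GL(n) : \rho_l(\sigma) f = f \rho_k(\sigma) \text{ for all } f \in \mathcal{D}(k,l), \text{ all } k,l\}.
\]
By the first containment, $\Aut(G) \subseteq H$, while $\mathcal{D} \subseteq \mathcal{C}(H)$ by construction. The Tannaka--Krein reconstruction theorem for finite groups then yields $\mathcal{D} = \mathcal{C}(H)$, so it suffices to show $H = \Aut(G)$. Commuting with $M^{0,1}, M^{2,1}$, and $S$ forces each $\sigma \in H$ to fix the all-ones vector, to have row/column entries in $\{0,1\}$ and at most one non-zero entry per row, i.e., to be a permutation matrix; this is the well-known classical statement that $\mathcal{C}(S_n)$ is generated by $M^{0,1}, M^{2,1}, S$, extractable from the partition-category description of $\End_{S_n}((\mathbb{R}^n)^{\otimes k})$. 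Having pinned $H \subseteq S_n$, commuting additionally with $A_G$ forces $\sigma A_G = A_G \sigma$, which by the equivalence noted after the definition of $\Aut(G)$ gives $\sigma \in \Aut(G)$.

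The main obstacle will be justifying the Tannaka--Krein step, namely that the recovered $H$ satisfies $\mathcal{C}(H) = \mathcal{D}$ and not merely $\mathcal{C}(H) \supseteq \mathcal{D}$. The standard mechanism is a double-commutant argument, which applies cleanly because each $\mathcal{D}(k,l)$ is finite-dimensional and $\mathcal{D}$ is already closed under the relevant monoidal and duality operations (so no further closure is needed before taking commutants). A subsidiary, but non-trivial, ingredient is the classical characterisation $\mathcal{C}(S_n) = \langle M^{0,1}, M^{2,1}, S\rangle$; rather than reprove it, I would cite the partition-algebra literature and the presentations given therein.
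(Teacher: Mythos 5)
The paper does not prove this statement: it is imported verbatim as Proposition 3.5 of \citet{chassaniol} and used as a black box in the proof of Theorem \ref{graphmainthrm}, so there is no in-paper argument to compare yours against. That said, your outline is essentially the standard (and, modulo the quantum-group language, Chassaniol's actual) route: the easy containment by checking equivariance of the four generators and closure of $\mathcal{C}(\Aut(G))$ under the four operations, and the hard containment by Tannakian reconstruction of the commutant group $H$ of the generated category $\mathcal{D}$, followed by the elementary computation that commuting with $M^{0,1}$ and $M^{2,1}$ forces an invertible matrix to be a permutation matrix and that commuting with $A_G$ then forces membership in $\Aut(G)$. Both halves of that computation are correct as you state them; note that for pinning $H\subseteq S_n$ you only need this elementary commutant calculation, not the partition-category description of $\mathcal{C}(S_n)$ that you gesture at, which is in fact the special case $G=\overline{K_n}$ of the very theorem being proved and so should not be smuggled in as an input.

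The one genuine thin spot is the sentence asserting that Tannaka--Krein ``then yields $\mathcal{D}=\mathcal{C}(H)$'' via ``a double-commutant argument.'' This is not the bicommutant theorem applied degree by degree: the statement you need is that a concrete symmetric monoidal $*$-subcategory of $\bigcup_{k,l}\Hom((\mathbb{R}^{n})^{\otimes k},(\mathbb{R}^{n})^{\otimes l})$ that is closed under $+,\circ,\otimes,*$, contains the identities, and contains the duality morphisms (here the cup is $(M^{2,1})^{*}\circ M^{0,1}$, which you should verify is generated) coincides with the full intertwiner category of its commutant group. That is Woronowicz's Tannaka--Krein theorem restricted to the classical case, and its proof relies on the fact that every irreducible representation of $H\le S_n$ occurs in some tensor power of the defining representation --- true here, but a fact that must be invoked rather than folded into the phrase ``double commutant.'' You should also address the base field: Tannakian reconstruction is usually stated over $\mathbb{C}$, whereas $\mathcal{C}(\Aut(G))$ is an $\mathbb{R}$-linear category; since the generators and the group consist of real (indeed permutation) matrices, one can complexify, apply the theorem, and descend via $\Hom_{\Aut(G)}(V,W)\otimes_{\mathbb{R}}\mathbb{C}\cong\Hom_{\Aut(G)}(V_{\mathbb{C}},W_{\mathbb{C}})$, but that step should appear explicitly. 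With those two points filled in (by precise citation if not by proof), the argument is complete.
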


\subsection{The Category of All Bilabelled Graphs}

\citet{mancinska} showed how Chassaniol's generating set for
$\mathcal{C}(\Aut(G))$ can be improved by relating it to the combinatorics of
bilabelled graphs.

\begin{figure}[t] 
	\begin{center}
		\scalebox{0.8}{\tikzfig{bilabelledgraph}} 
		\caption{The $(2,3)$--bilabelled graph diagram 
		that is associated with the isomorphism class of
		%for 
		$\bm{K} \coloneqq (K_3, (3,2), (3,3,1))$.}
	\label{bilabelledgraphex} 
	\end{center} 
\end{figure}

\begin{definition}[Bilabelled Graph] 
	A $\bm{(k,l)}$\textbf{--bilabelled graph} $\bm{H}$ is a triple $(H, \bm{k}, \bm{l})$,
	where $H$ is a graph having some $m$ vertices with labels in $[m]$,
	$\bm{k} \coloneqq (k_1, \dots, k_k)$ is a tuple in $[m]^k$, and $\bm{l}
	\coloneqq (l_1, \dots, l_l)$ is a tuple in $[m]^l$.

	We call $\bm{k}$ and $\bm{l}$ the \textbf{input} and \textbf{output
	tuples} to $\bm{H}$ respectively, and we call $H$ the
	\textbf{underlying graph} of $\bm{H}$.  A vertex in the underlying
	graph $H$ of $\bm{H}$ is said to be \textbf{free} if it does not appear in
	either of the input or output tuples. 
\end{definition}

In order to provide a diagrammatic representation of bilabelled graphs, we need
the following definition.
\begin{definition}
	Let $\bm{H_1} = (H_1, \bm{k}, \bm{l})$ be a $(k,l)$--bilabelled
	graph and let $\bm{H_2} = (H_2, \bm{k'}, \bm{l'})$ be another
	$(k,l)$--bilabelled graph.  Then $\bm{H_1}$ and $\bm{H_2}$ are
	\textbf{isomorphic} as $(k,l)$--bilabelled graphs, written $\bm{H_1} \cong
	\bm{H_2}$, if and only if there is a graph isomorphism from $H_1$ to
	$H_2$ such that $k_i \mapsto k_i'$ for all $i \in [k]$ and $l_j \mapsto
	l_j'$ for all $j \in [l]$.
	We denote $\bm{H_1}$'s \textbf{isomorphism class} by $[\bm{H_1}]$.
\end{definition}

\begin{remark}
	An isomorphism between two $(k,l)$--bilabelled graphs can effectively be
	thought of as a relabelling of the vertices of the \textit{same}
	underlying graph, resulting in an appropriate relabelling of the
	elements of the tuples themselves.  Consequently, the isomorphism can
	be thought of as a relabelling of the same bilabelled graph.
\end{remark}

With this in mind, we can represent the isomorphism class
$[\bm{H}]$ for a $(k,l)$--bilabelled graph $\bm{H} = (H, \bm{k},
\bm{l})$ in diagrammatic form.  We choose $\bm{H}$ as a class
representative of $[\bm{H}]$, and proceed as follows: we draw
\begin{itemize} 
	\item 
		$l$ black vertices on the top row, labelled left to right by
		$1, \dots, l$, 
	\item 
		$k$ black vertices on the bottom row, labelled left to right
		by $l+1, \dots, l+k$, 
	\item 
		the underlying graph $H$ in red, with labelled red vertices and
		red edges, in between the two rows of black vertices, and 
	\item 
		a black line connecting vertex $i$ in the top row to $l_i$, and
		a black line connecting vertex $l + j$ in the bottom row to $k_j$.  
\end{itemize} 
Note that if we drew a diagram for another representative of the same
isomorphism class $[\bm{H}]$, by the definition of an isomorphism for
$(k,l)$--bilabelled graphs, we would obtain exactly the same diagram as for the
first class representative, except the labels of the underlying graph's
vertices (and consequently the elements of the tuples) would be different.  As
a result, the diagram for the isomorphism class $[\bm{H}]$ is independent of
the choice of class representative, and so we choose the class' label,
$\bm{H}$, to draw the diagram throughout, unless otherwise stated.  With the
technicalities being understood, we often refer to the construction defined
above as a $(k,l)$--bilabelled graph diagram \textit{for} $\bm{H}$
\textit{itself}, and use the same notation $\bm{H}$.  We give an example of a
$(2,3)$--bilabelled graph diagram in Figure \ref{bilabelledgraphex}.

We can define a number of \textbf{operations} on isomorphism classes of bilabelled
graphs, namely, composition, tensor product, and involution, as follows.

\begin{definition}[Composition of Bilabelled Graphs] \label{compositiongraph}
	Let $[\bm{H_1}]$ be the isomorphism class of the $(k,l)$--bilabelled
	graph $\bm{H_1} = (H_1, \bm{k}, \bm{l})$, and let $[\bm{H_2}]$ be the
	isomorphism class of the $(l,m)$--bilabelled graph $\bm{H_2} = (H_2,
	\bm{l'}, \bm{m})$.

	Then we define the \textbf{composition} $[\bm{H_2}] \circ [\bm{H_1}]$ to be the
	isomorphism class $[\bm{H}]$ of the $(k,m)$--bilabelled graph $\bm{H} =
	(H, \bm{k}, \bm{m})$ that is obtained as follows: drawing each
	isomorphism class as a diagram, we first relabel the red vertices in
	$H_2$ (and consequently the elements of $\bm{l'}, \bm{m}$) under the
	map $i \mapsto i'$, for all $i \in [V(H_2)]$.  Then, we connect the top
	row of black vertices of $\bm{H_1}$ with the bottom row of black
	vertices of $\bm{H_2}$, and delete the vertices themselves.  We are now
	left with $l$ black lines that are edges between red vertices of the
	underlying graphs $H_1$ and $H_2$.  Next, we contract these, forming
	set unions of the vertex labels where appropriate, and remove any red
	multiedges between red vertices that appear in the contraction to
	obtain the new underlying graph $H$.  Note that we keep any loops that
	appear in the new underlying graph $H$.  Finally, we relabel the vertex
	set of the new underlying graph $H$ so that each vertex is labelled by
	an integer only, and consequently relabel the entries of the tuples
	$\bm{k}$ and $\bm{m}$ accordingly.

	Since this operation has been defined on diagrams, it means that the
	operation is well defined on the isomorphism classes themselves.
	We give an example of this composition in Figure \ref{graphcomp}.
\end{definition}

\begin{remark} 
	In order to compose two isomorphism classes of bilabelled graphs, note
	that only the number of bottom row vertices in the diagram for
	$\bm{H_2}$ needs to be equal to the number of top row vertices in the
	diagram for $\bm{H_1}$.  In particular, the number of vertices in the
	underlying graphs of $\bm{H_1}$ and $\bm{H_2}$ do \textit{not} need to
	be the same.
\end{remark}

\begin{definition}[Tensor Product of Bilabelled Graphs] \label{tensorprodgraph}
	Let $[\bm{H_1}]$ be the isomorphism class of the $(k,l)$--bilabelled
	graph $\bm{H_1} = (H_1, \bm{k}, \bm{l})$ and let $[\bm{H_2}]$ be the
	isomorphism class of the $(q,m)$--bilabelled graph $\bm{H_2} = (H_2,
	\bm{q}, \bm{m})$.

	Then we define the \textbf{tensor product} $[\bm{H_1}] \otimes [\bm{H_2}]$ to be
	the isomorphism class of the $(k+q,l+m)$--bilabelled graph $(H_1 \cup
	H_2, \bm{k}\bm{q}, \bm{l}\bm{m})$ where $\bm{k}\bm{q}$ is the
	$(k+q)$--length tuple obtained by concatenating $\bm{k}$ and $\bm{q}$,
	and likewise for $\bm{l}\bm{m}$.  
\end{definition}

\begin{definition}[Involution of Bilabelled Graphs] \label{involutiongraph}
	Let $[\bm{H}]$ be the isomorphism class of the $(k,l)$--bilabelled
	graph $\bm{H} = (H, \bm{k}, \bm{l})$.  Then we define the \textbf{involution}
	$[\bm{H^{*}}]$ to be the isomorphism class of the $(l,k)$--bilabelled
	graph $(H, \bm{l}, \bm{k})$.
\end{definition}

\begin{figure}[t] 
	\begin{center}
	\scalebox{0.8}{\tikzfig{graphcomp2}}
	\caption{The composition $[\bm{H_2}] \circ [\bm{H_1}]$ of the
	$(2,3)$--bilabelled graph diagram for $\bm{H_1} = (H_1, (3',2'),
	(1',2',3'))$ with the $(3,3)$--bilabelled graph diagram for $\bm{H_2} =
	(H_2, (2,1,2), (1,4,2))$, where $H_1$ is the graph having (relabelled)
	vertex set $[3']$ and edge set $\{(1',2')\}$ and $H_2$ is the graph
	having vertex set $[4]$ and edge set $\{(1,4)\}$.  The vertices of the
	resulting bilabelled graph diagram on the RHS would be relabelled. For
	example, $\{1,2'\}$ could be relabelled as $1$ and $\{2,1',3'\}$
	could be relabelled as $2$ to give the $(2,3)$--bilabelled graph
	diagram for $\bm{H} = (H, (2,1), (1,4,2))$, where $H$ is the graph having
	vertex set $[4]$ and edge set $\{(1,4), (1,2)\}$.}
	\label{graphcomp}
	\end{center} 
\end{figure}

We can form a category for the bilabelled graphs, as follows:

\begin{definition}[Category of Bilabelled Graphs] 
	The \textbf{category of all bilabelled graphs} $\mathcal{G}$ is the category
	whose objects are the non-negative integers,
	%$\mathbb{N}_{\geq 0} = \{0, 1, 2, \dots \}$, 
	and, for any pair of objects $k$ and $l$, the morphism
	space $\mathcal{G}(k,l) \coloneqq \Hom_{\mathcal{G}}(k,l)$ is defined
	to be the $\mathbb{R}$-linear span of the set of all isomorphism
	classes of $(k,l)$--bilabelled graphs.

	The vertical composition of morphisms is the composition of isomorphism
	classes of bilabelled graphs given in Definition~\ref{compositiongraph} that is extended to be $\mathbb{R}$-bilinear, 
	the tensor product of morphisms is the tensor
	product of isomorphism classes of bilabelled graphs given in Definition
	\ref{tensorprodgraph} that is also extended to be $\mathbb{R}$-bilinear, 
	and the unit object is $0$.
\end{definition}

\begin{proposition} \label{catbilgraph}
	The category of all bilabelled graphs, $\mathcal{G}$,
	is a strict $\mathbb{R}$--linear monoidal category.
\end{proposition}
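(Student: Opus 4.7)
The proof decomposes into four checks: that $\mathcal{G}$ is a category, that it is $\mathbb{R}$-linear, that $(\otimes, 0)$ gives a monoidal structure, and that this structure is strict. Each morphism space $\mathcal{G}(k,l)$ is by construction the free $\mathbb{R}$-vector space on the isomorphism classes of $(k,l)$-bilabelled graphs, and both $\circ$ and $\otimes$ are extended $\mathbb{R}$-bilinearly from the isomorphism classes, so $\mathbb{R}$-linearity of the Hom-spaces and bilinearity of both operations are built in.

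The first step is to exhibit identity morphisms. For each $k \geq 0$, I would take $\mathrm{id}_k$ to be the class of the $(k,k)$-bilabelled graph $(I_k, (1,\dots,k), (1,\dots,k))$, where $I_k$ is the edgeless graph on $[k]$ (interpreted as the empty graph when $k=0$). Unwinding Definition \ref{compositiongraph}, composing any $[\bm{H}] \in \mathcal{G}(k,l)$ with $[\mathrm{id}_k]$ on the right just contracts the $k$ black lines into matched pairs of vertices that were isolated on the identity side, introduces no multiedges, and returns $[\bm{H}]$; composition with $[\mathrm{id}_l]$ on the left is symmetric.

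The main content is associativity of $\circ$ and the interchange law. The cleanest strategy is to recast Definition \ref{compositiongraph} set-theoretically: $[\bm{H_2}] \circ [\bm{H_1}]$ is obtained from the disjoint union $V(H_1) \sqcup V(H_2)$ by quotienting by the equivalence relation generated by identifying $l_i$ with $l_i'$ for each $i \in [l]$, equipping the quotient with the edge set inherited from $H_1 \sqcup H_2$ modulo multiedge collapse (loops preserved), and keeping the input tuple $\bm{k}$ and output tuple $\bm{m}$ with entries pushed down to the quotient. In this formulation, a triple composition generates a single equivalence relation from two disjoint collections of pair-identifications, and since equivalence relations can be generated in either order with the same result, associativity of $\circ$ reduces to this fact. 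For the interchange law
\[
([\bm{H_1'}] \otimes [\bm{H_2'}]) \circ ([\bm{H_1}] \otimes [\bm{H_2}]) = ([\bm{H_1'}] \circ [\bm{H_1}]) \otimes ([\bm{H_2'}] \circ [\bm{H_2}]),
\]
the generating identifications on the left split into two disjoint sets of pairs acting on disjoint parts of the combined vertex set, so the quotients can be taken independently, which is exactly what the right-hand side computes.

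For strictness of the monoidal structure, I would take the unit to be $0$ with $\mathrm{id}_0$ the class of the empty bilabelled graph; Definition \ref{tensorprodgraph} then gives $[\bm{H}] \otimes [\mathrm{id}_0] = [\bm{H}] = [\mathrm{id}_0] \otimes [\bm{H}]$ strictly, because disjoint union with the empty graph and concatenation with the empty tuple are both the identity. Strict associativity of $\otimes$ follows from strict associativity of disjoint union of graphs together with strict associativity of tuple concatenation. The main obstacle is purely bookkeeping: translating the diagrammatic composition cleanly into the equivalence-relation description while tracking the cosmetic relabellings that put the output into canonical form. Once that translation is in place, every axiom reduces either to the fact that equivalence relations are generated order-independently or to the strict associativity of disjoint union and concatenation.
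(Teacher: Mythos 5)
Your proposal is correct, and where it overlaps with the paper's argument it takes the same route: strict associativity of $\otimes$ is reduced to associativity of disjoint union of graphs and concatenation of tuples, and $\mathbb{R}$-linearity of the Hom-spaces and bilinearity of $\circ$ and $\otimes$ are observed to hold by construction. The difference is one of completeness rather than strategy. The paper's proof stops there; it does not exhibit identity morphisms, verify associativity of the vertical composition, or check the interchange law, all of which are needed for $\mathcal{G}$ to be a (monoidal) category in the first place. You supply exactly these missing pieces, and your device for doing so --- recasting Definition~\ref{compositiongraph} as a quotient of $V(H_1)\sqcup V(H_2)$ by the equivalence relation generated by the interface identifications, with multiedge collapse deferred to the end --- is the right way to make the diagrammatic contraction rigorous: associativity of $\circ$ and the interchange law then both reduce to the order-independence of generating an equivalence relation from disjoint families of pair-identifications. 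Your identification of $\mathrm{id}_k$ with the class of the edgeless $(k,k)$-bilabelled graph on $(1,\dots,k)$, $(1,\dots,k)$ also checks out, including in the case of repeated entries in the tuple of $\bm{H}$. In short, your proof is a strictly more careful version of the paper's; nothing in it would fail, and the extra bookkeeping you flag (compatibility of multiedge removal with staged quotients) does go through because an edge survives in the final quotient precisely when some preimage edge exists in the disjoint union, independently of intermediate collapses.
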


\begin{proof}
	See the Technical Appendix.
\end{proof}

\subsection{$G$-Homomorphism Matrices}

\citet{mancinska} established a relationship between the abstract and the
concrete: namely, between isomorphism classes of $(k,l)$--bilabelled graphs
and, for a fixed graph $G$ having $n$ vertices, matrices that are linear maps
$(\mathbb{R}^{n})^{\otimes k} \rightarrow (\mathbb{R}^{n})^{\otimes l}$, which
they termed $G$-homomorphism matrices.  We express this relationship more
formally in terms of functors and categories at the end of this section.

\begin{definition}[$G$-Homomorphism Matrix] \label{GHomMatrix}
	Suppose that $G$ is a graph having $n$ vertices, and let $[\bm{H}]$ be
	the isomorphism class of the $(k,l)$--bilabelled graph $\bm{H}
	\coloneqq (H, \bm{k}, \bm{l})$.

	%Then we define the $G$-homomorphism matrix of $[\bm{H}]$ to be the $n^l
	%\times n^k$ matrix that has $(I,J)$ entry given by the number of graph
	%homomorphisms from $H$ to $G$ such that $\bm{l}$ is mapped to $I$ and
	%$\bm{k}$ is mapped to $J$.  
	%We denote this matrix by $X_{\bm{H}}^{G}$.

	The $G$\textbf{-homomorphism matrix} of $[\bm{H}]$ is the $n^l \times n^k$ matrix
	where each $(I,J)$-entry is given by the number of graph
	homomorphisms from $H$ to $G$ such that $\bm{l}$ is mapped to $I$ and
	$\bm{k}$ is mapped to $J$. 
	We denote this matrix by $X_{\bm{H}}^{G}$.
\end{definition}

\begin{remark}
	Note that a $G$-homomorphism matrix $X_{\bm{H}}^{G}$ 
	%that is obtained 
	is independent of the 
	choice of class representative for $[\bm{H}]$, 
	and so we can refer to a $G$-homomorphism matrix 
	\textit{of} $\bm{H}$ \textit{itself},
	with the technicalities being understood.
	Also, any such matrix must have only real entries, by definition.
\end{remark}

In Figure \ref{AutGIJdiagram},
we present an example that shows how to calculate the 
$G$-homomorphism matrix of $[\bm{H}]$ for the graph 
$G$ having vertex set $V(G) = \{1, 2, 3\}$ and edge set
$E(G) = \{(1,2)\}$ and for the $(1,1)$--bilabelled graph
$\bm{H} = (H, (3), (1))$, where $H$ is the graph having vertex set
$V(H) = \{1, 2, 3, 4\}$ and edge set $E(H) = \{(1,2), (3,4)\}$.
To determine the $(I,J)$-entry of $X_{\bm{H}}^{G}$, we superimpose
the values of $I$ onto the top row of black vertices in $\bm{H}$,
and the values of $J$ onto the bottom row of black vertices in $\bm{H}$.
For each red vertex that is connected with a set of black vertices,
we look to update its label. 
If all of the black vertices in the set have the same label, then
we update the red vertex with that label, otherwise we stop and immediately 
determine that the $(I,J)$-entry of $X_{\bm{H}}^{G}$ is $0$.
Assuming that these red vertices can and have been updated, we
determine the number of possible mappings to $G$ for
%number of possibilities where 
the red vertices that have not been 
updated in $\bm{H}$ 
%can be mapped to in $G$ 
such that the overall mapping 
is a graph homomorphism.
The total number of possibilities is the $(I,J)$-entry of $X_{\bm{H}}^{G}$.

We now describe some important examples of $G$-homomorphism matrices
where $G$ is a graph having $n$ vertices throughout.

\begin{example} \label{adjacencyex}
	If $\bm{A}$ is the $(1,1)$--bilabelled graph 
	$(K_2, (1), (2))$, where $K_2$
	is the complete graph on two vertices, then 
	$X_{\bm{A}}^{G}$ is the adjacency matrix $A_G$ of $G$.
\end{example}

\begin{example} \label{spidergraphex}
	If $\bm{M}^{k,l}$ is the $(k,l)$--bilabelled graph 
	$(K_1, \bm{k} = (1, \dots, 1), \bm{l} = (1, \dots, 1))$, where $K_1$
	is the complete graph on one vertex, then 
	$X_{\bm{M}^{k,l}}^{G}$ is the spider matrix 
	$M^{k,l}$ that is given in Definition \ref{spiderdefn}.
\end{example}
			
\begin{example} \label{swapex}
	If $\bm{S}$ is the $(2,2)$--bilabelled graph $(\overline{K_2}, (2, 1),
	(1, 2))$, where $\overline{K_2}$ is the edgeless graph on two vertices,
	then $X_{\bm{S}}^{G}$ is the swap map $S$ 
	that is also given in Definition \ref{spiderdefn}.
	%given in Proposition
	%\ref{tenscatduals}.
\end{example}

%\begin{remark}
	%Given Chassaniol's result and Example \ref{spidergraphex}, the
	%$(0,1)$--bilabelled graph $\bm{M}^{0,1}$ and the $(2,1)$--bilabelled
	%graph $\bm{M}^{2,1}$, in particular, will be important in what follows.
%\end{remark}

%%%%%%%%%%%%%%%%%%%%%%%%%%%%%%%%%%%%%%%%%%%%%%%%%%%%%%%%%%%%%%%%%%%%%%%%%%%%%%%%%%

\begin{figure*}[tb]
	\begin{tcolorbox}[colback=white!02, colframe=black]
	\begin{center}
		\scalebox{0.68}{\tikzfig{AutIJdiagram}}
	\end{center}
	\end{tcolorbox}
	\caption{
		For the graph $G$ having vertex set $V(G) = \{1, 2, 3\}$ and
		edge set $E(G) = \{(1,2)\}$, we show how to calculate the
		$(I,J)$-entries of the $G$-homomorphism matrix $X_{\bm{H}}^{G}$
		corresponding to the isomorphism class $[\bm{H}]$ of the
		$(1,1)$--bilabelled graph $\bm{H} = (H, (3), (1))$ where $H$ is
		the graph having vertex set $V(H) = \{1, 2, 3, 4\}$ and edge
		set $E(H) = \{(1,2), (3,4)\}$.  On the left hand side, we
		calculate the $(1,1)$-entry of $X_{\bm{H}}^{G}$.  Since both of
		the black labelled vertices are being mapped to vertex $1$ in
		$G$, we can superimpose $1$ onto these vertices, and hence also
		onto the red labelled vertices that are connected with them, 
		to determine where the other
		red vertices can be mapped to under a graph homomorphism. In
		this case, the only possible vertex in $G$ that the other red
		vertices can be mapped to is $2$.  Hence there is only one
		possible graph homomorphism from $H$ to $G$ such that $1
		\mapsto 1$ and $3 \mapsto 1$, and so the $(1,1)$-entry of
		$X_{\bm{H}}^{G}$ is $1$.  On the right hand side, we calculate
		the $(3,2)$-entry of $X_{\bm{H}}^{G}$.  We follow the same
		approach by superimposing $3$ in $G$ onto the black vertex
		labelled $1$ in $\bm{H}$, and $2$ in $G$ onto the black vertex
		labelled $3$ in $\bm{H}$.  We relabel the red vertices that the
		black vertices are connected with and see where the other red
		vertices can be mapped to under a graph homomorphism.  
		Whilst 
		one of these red vertices can only be mapped to $1$ in $G$,
		the other red vertex
		%the red vertex that is connected with the red vertex that has
		%been relabelled with $2$ can be mapped to $1$ in $G$, the red
		%vertex that is connected with the red vertex that has been
		%relabelled with $3$ 
		cannot be mapped to any vertex in $G$,
		since the vertex $3$ in $G$ is not connected with any other
		vertex in $G$.  Hence the number of graph homomorphisms from
		$H$ to $G$ such that $1 \mapsto 3$ and $3 \mapsto 2$ is $0$,
		and so the $(3,2)$-entry of $X_{\bm{H}}^{G}$ is $0$.
		}
	\label{AutGIJdiagram}
\end{figure*}

%%%%%%%%%%%%%%%%%%%%%%%%%%%%%%%%%%%%%%%%%%%%%%%%%%%%%%%%%%%%%%%%%%%%%%%%%%%%%%%%%%

For a fixed graph $G$ having $n$ vertices, we can form a category
for the $G$-homomorphism matrices, as follows:

\begin{definition}[Category of $G$-Homomorphism Matrices] 
	For a given graph $G$ having $n$ vertices, \textbf{the category of all}
	$G$\textbf{-homomorphism matrices}, $\mathcal{C}^G$, is the category whose
	objects are the vector spaces $(\mathbb{R}^{n})^{\otimes k}$, and, for
	any pair of objects $(\mathbb{R}^{n})^{\otimes k}$ and
	$(\mathbb{R}^{n})^{\otimes l}$, the morphism space
	$\Hom_{\mathcal{C}^G}((\mathbb{R}^{n})^{\otimes k},
	(\mathbb{R}^{n})^{\otimes l})$ is defined to be the $\mathbb{R}$-linear
	span of the set of all $G$-homomorphism matrices obtained from all
	isomorphism classes of $(k,l)$--bilabelled graphs.

	The vertical composition of morphisms is given by the usual
	multiplication of matrices, the tensor product of morphisms is given by
	the usual Kronecker product of matrices, and the unit object is
	$\mathbb{R}$.
\end{definition}

\begin{proposition} \label{catGhommat}
	The category of all
	$G$-homomorphism matrices for a given graph $G$ having $n$ vertices,
	$\mathcal{C}^G$,
	is a strict $\mathbb{R}$--linear monoidal category.
\end{proposition}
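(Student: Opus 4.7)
The plan is to exploit the fact, essentially due to \citet{mancinska}, that the assignment $[\bm{H}] \mapsto X_{\bm{H}}^G$ is functorial in $\bm{H}$ and extends $\mathbb{R}$-linearly to a full monoidal functor from $\mathcal{G}$ to the category of $\mathbb{R}$-vector spaces, whose image is precisely $\mathcal{C}^G$. Since Proposition \ref{catbilgraph} already gives us that $\mathcal{G}$ is a strict $\mathbb{R}$-linear monoidal category, most of the axioms transport for free; the real content is closure of the morphism spaces under composition and tensor product.

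First I would record the two compatibility identities
\begin{equation}
X_{\bm{H_2} \circ \bm{H_1}}^{G} \; = \; X_{\bm{H_2}}^{G} \cdot X_{\bm{H_1}}^{G},
\qquad
X_{\bm{H_1} \otimes \bm{H_2}}^{G} \; = \; X_{\bm{H_1}}^{G} \otimes X_{\bm{H_2}}^{G},
\end{equation}
together with $X_{\bm{H^*}}^G = (X_{\bm{H}}^G)^{\top}$. Each of these is a direct combinatorial unpacking of Definition \ref{GHomMatrix}: the $(I,J)$-entry of the matrix product counts pairs of homomorphisms from $H_1$ and $H_2$ that agree on the shared labelling, which is exactly the number of homomorphisms out of the composite underlying graph produced in Definition \ref{compositiongraph}; the Kronecker identity follows because the bilabelled tensor product places the two underlying graphs on disjoint vertex sets, and homomorphism counting factorises over disjoint unions. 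I would state these as (citable) facts from \citet{mancinska} and give only a sentence or two of justification.

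From these identities, closure is immediate: the product of two $\mathbb{R}$-linear combinations of $G$-homomorphism matrices expands to an $\mathbb{R}$-linear combination of $X_{\bm{H_2} \circ \bm{H_1}}^G$, each of which lies in $\mathcal{C}^G(k,m)$ by construction, and similarly for the Kronecker product. Bilinearity of both composition and tensor product over $\mathbb{R}$ then follows from the corresponding bilinearity of matrix multiplication and Kronecker product, so each $\Hom_{\mathcal{C}^G}$ is an $\mathbb{R}$-vector space and the two operations are $\mathbb{R}$-bilinear. The strict monoidal axioms—associativity of composition and of tensor product, the interchange law, and the unit laws with unit object $\mathbb{R}$—are inherited from the standard strict monoidal structure on matrices under usual multiplication and Kronecker product, which is a routine check; identity morphisms $I_{n^k} \in \mathcal{C}^G(k,k)$ are realised by the $G$-homomorphism matrix of the isomorphism class represented by $k$ isolated labelled vertices with $\bm{k} = \bm{l} = (1,2,\dots,k)$.

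The only step requiring care, and the one I expect to be the main obstacle in writing, is the composition identity: one has to check that the contraction of black edges followed by removal of red multi-edges in Definition \ref{compositiongraph} is exactly what one needs so that homomorphism counts multiply correctly; in particular the suppression of multi-edges is harmless because $G$ is a simple graph (modulo loops), and the formation of set unions of vertex labels corresponds to the identification of pre-images forced by summing over a shared index. Once this is carefully pinned down, the rest is bookkeeping, and I would defer the detailed verification to the Technical Appendix.
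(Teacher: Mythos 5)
Your proposal is correct, but it takes a different (and more demanding) route than the paper. The paper's own proof is a one-line deferral: it says the argument is ``effectively the same'' as the proof of Proposition~\ref{tenscatduals}, replacing linear maps by matrices --- i.e.\ it checks only associativity of the Kronecker product and $\mathbb{R}$-bilinearity of matrix multiplication, and does not explicitly address closure of the morphism spaces. You instead identify closure under composition and tensor product as ``the real content'' and derive it from the identities $X_{\bm{H_2}\circ\bm{H_1}}^G = X_{\bm{H_2}}^G X_{\bm{H_1}}^G$ and $X_{\bm{H_1}\otimes\bm{H_2}}^G = X_{\bm{H_1}}^G \otimes X_{\bm{H_2}}^G$, which the paper only states afterwards as Lemmas~3.21 and~3.23 of \citet{mancinska}. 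Your emphasis is well placed: for $\mathcal{C}(\Aut(G))$ closure is automatic because composites of equivariant maps are equivariant, but for $\mathcal{C}^G$ the morphism spaces are defined as $\mathbb{R}$-linear spans of a specific family of matrices, and it is not a priori clear that the product of two such matrices stays in the span --- so the analogy ``same proof with matrices'' silently leans on exactly the lemmas you invoke. What the paper's approach buys is brevity (and it can afford it because the lemmas appear on the next page); what yours buys is a self-contained and logically complete argument, including the correct identification of the identity morphism as the $G$-homomorphism matrix of $k$ isolated bilabelled vertices and a correct diagnosis of why multi-edge suppression in Definition~\ref{compositiongraph} is harmless (namely that $G$ is simple apart from loops). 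No gap.
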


\begin{proof}
	See the Technical Appendix.
\end{proof}

%Mančinska and Roberson 
\citet{mancinska} showed that the operations given
on isomorphism classes of bilabelled graphs $\bm{H}$
correspond bijectively with the matrix operations on
$G$-homomorphism matrices $X_{\bm{H}}^G$.  This is
stated more formally in the following 
three lemmas:

\begin{lemma}[\citet{mancinska}, Lemma 3.21]
	Suppose that $G$ is a graph having $n$ vertices.  Let $[\bm{H_1}]$ be
	the isomorphism class of the $(k,l)$--bilabelled graph $\bm{H_1} =
	(H_1, \bm{k}, \bm{l})$, and let $[\bm{H_2}]$ be the isomorphism class
	of the $(l,m)$--bilabelled graph $\bm{H_2} = (H_2, \bm{l'}, \bm{m})$.
	Then
	\begin{equation}
		X_{\bm{H_2}}^G X_{\bm{H_1}}^G 
		=
		X_{\bm{H_2 \circ H_1}}^G
	\end{equation}
\end{lemma}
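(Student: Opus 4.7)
The plan is to prove the identity entry-wise: fix arbitrary multi-indices $I \in [n]^m$ and $J \in [n]^k$, compute both $(I,J)$-entries as counts of certain collections of graph homomorphisms, and exhibit a bijection between those collections. By the definition of matrix multiplication,
\begin{equation}
(X_{\bm{H_2}}^G X_{\bm{H_1}}^G)_{I,J} = \sum_{K \in [n]^l} (X_{\bm{H_2}}^G)_{I,K}\,(X_{\bm{H_1}}^G)_{K,J},
\end{equation}
and by Definition~\ref{GHomMatrix} each summand equals the number of pairs $(\phi_1,\phi_2)$, where $\phi_1 : H_1 \to G$ is a graph homomorphism with $\phi_1(\bm{k}) = J$ and $\phi_1(\bm{l}) = K$, and $\phi_2 : H_2 \to G$ is a graph homomorphism with $\phi_2(\bm{l'}) = K$ and $\phi_2(\bm{m}) = I$. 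Summing over $K$ therefore counts pairs $(\phi_1,\phi_2)$ of graph homomorphisms satisfying the boundary conditions $\phi_1(\bm{k}) = J$, $\phi_2(\bm{m}) = I$, together with the interface condition $\phi_1(l_i) = \phi_2(l'_i)$ for every $i \in [l]$.

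Next I would interpret the RHS. Let $H$ denote the underlying graph of $\bm{H_2 \circ H_1}$, constructed as in Definition~\ref{compositiongraph} by taking the disjoint union of (a relabelled copy of) $H_2$ with $H_1$, identifying each $l'_i$ with $l_i$, and then deleting the resulting multi-edges. Then $(X_{\bm{H_2 \circ H_1}}^G)_{I,J}$ counts graph homomorphisms $\psi : H \to G$ with $\psi(\bm{k}) = J$ and $\psi(\bm{m}) = I$, where the $\bm{k}$ and $\bm{m}$ tuples in $\bm{H_2 \circ H_1}$ correspond, under the vertex relabelling, exactly to the $\bm{k}$ of $\bm{H_1}$ and the $\bm{m}$ of $\bm{H_2}$. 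The universal property of the quotient $H_1 \sqcup H_2 \twoheadrightarrow H$ that identifies $l_i$ with $l'_i$ gives a bijection between set maps $H \to V(G)$ and pairs of set maps $(\phi_1,\phi_2)$ agreeing on the identified vertices. I would show this restricts to a bijection on graph homomorphisms: a pair $(\phi_1,\phi_2)$ of graph homomorphisms induces a well-defined $\psi$ because each edge of $H$ comes from an edge of $H_1$ or $H_2$ (loops included, since they are preserved under composition), and conversely any $\psi$ restricts to graph homomorphisms on each summand.

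The one technical point that actually needs argument, and is the main potential obstacle, is that deletion of multi-edges in the construction of $H$ does not change the count of graph homomorphisms into $G$. This is immediate once noted: the adjacency condition $u \sim v \implies \psi(u) \sim \psi(v)$ depends only on whether $u$ and $v$ are adjacent in the source, not on the multiplicity of edges between them, so the multi-graph formed before edge deletion and the simple graph $H$ admit exactly the same set of graph homomorphisms to the simple graph $G$. With this observation in hand, the bijection between pairs $(\phi_1,\phi_2)$ satisfying the interface and boundary conditions on the one side, and graph homomorphisms $\psi : H \to G$ respecting the boundary on the other side, yields equality of the $(I,J)$-entries for all $I,J$, proving the lemma.
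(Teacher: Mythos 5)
Your proof is correct, but note that the paper does not actually prove this lemma itself: it is imported verbatim from the cited reference (Man\v{c}inska--Roberson, Lemma 3.21), so there is no in-paper argument to compare against. Your entry-wise argument is essentially the standard proof of this fact. The three points that genuinely need attention are all present: (i) expanding the matrix product as a sum over the interface index $K$ and recognising each summand as a count of pairs $(\phi_1,\phi_2)$ agreeing on the tuples $\bm{l}$ and $\bm{l'}$; (ii) invoking the universal property of the quotient $H_1 \sqcup H_2 \twoheadrightarrow H$, which correctly handles the transitive closure of the identifications $l_i \sim l'_i$ (including the case where several vertices of a single $H_j$ get merged, turning internal edges into loops of $H$ --- consistent with Definition~\ref{compositiongraph}, which keeps such loops, and with the forced condition $\phi_j(u)\sim\phi_j(u)$ on the pair side); and (iii) the observation that deleting multi-edges does not change the homomorphism count, since the homomorphism condition depends only on adjacency. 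I would only suggest making point (ii) slightly more explicit about why an edge of $H_j$ whose endpoints are identified forces a loop at the common image in $G$, but as written the argument is sound.
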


\begin{lemma} [\citet{mancinska}, Lemma 3.23] \label{gHomtensorprod}
	Suppose that $G$ is a graph having $n$ vertices.  Let $[\bm{H_1}]$ be
	the isomorphism class of the $(k,l)$--bilabelled graph $\bm{H_1} =
	(H_1, \bm{k}, \bm{l})$, and let $[\bm{H_2}]$ be the isomorphism class
	of the $(q,m)$--bilabelled graph $\bm{H_2} = (H_2, \bm{q}, \bm{m})$.
	Then
	\begin{equation}
		X_{\bm{H_1}}^G \otimes X_{\bm{H_2}}^G 
		=
		X_{\bm{H_1 \otimes H_2}}^G
	\end{equation}
\end{lemma}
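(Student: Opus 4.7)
The plan is to prove the identity entrywise, using the fact that graph homomorphisms out of a disjoint union factor as a pair of independent homomorphisms out of the components.

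First, I would fix multi-indices and compute both sides at a general entry. The right-hand side $X_{\bm{H_1} \otimes \bm{H_2}}^{G}$ is an $n^{l+m} \times n^{k+q}$ matrix whose rows are indexed by $(l+m)$-tuples and columns by $(k+q)$-tuples in $[n]$. Split such indices as $\bm{I} = (I_1, I_2)$ and $\bm{J} = (J_1, J_2)$ with $I_1 \in [n]^l$, $I_2 \in [n]^m$, $J_1 \in [n]^k$, $J_2 \in [n]^q$, chosen so that the identification matches the concatenations $\bm{l}\bm{m}$ and $\bm{k}\bm{q}$ appearing in the definition of $\bm{H_1} \otimes \bm{H_2}$. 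Under the standard Kronecker-product convention on tensor indices, the left-hand side entry at $((I_1, I_2), (J_1, J_2))$ is exactly $(X_{\bm{H_1}}^{G})_{I_1, J_1} \cdot (X_{\bm{H_2}}^{G})_{I_2, J_2}$, so it suffices to match the corresponding entry of the RHS.

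Next, by Definition \ref{GHomMatrix}, the $(\bm{I}, \bm{J})$-entry of $X_{\bm{H_1} \otimes \bm{H_2}}^{G}$ counts graph homomorphisms $\phi : H_1 \cup H_2 \to G$ sending the concatenated output tuple $\bm{l}\bm{m}$ to $\bm{I}$ and the concatenated input tuple $\bm{k}\bm{q}$ to $\bm{J}$. Because $H_1 \cup H_2$ is a disjoint union, specifying a graph homomorphism is the same as specifying a pair of graph homomorphisms $\phi_1 : H_1 \to G$ and $\phi_2 : H_2 \to G$ independently (there are no edges between the two components to respect). Moreover, the tuple constraints split: $\phi_1$ must send $\bm{k} \mapsto J_1$ and $\bm{l} \mapsto I_1$, while $\phi_2$ must send $\bm{q} \mapsto J_2$ and $\bm{m} \mapsto I_2$. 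Counting such pairs gives exactly the product of the corresponding counts for $\bm{H_1}$ and $\bm{H_2}$, which is the product of entries from the LHS.

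Putting these together yields the desired matrix identity. The only delicate step is bookkeeping: one must verify that the tensor-index ordering used to identify $(\mathbb{R}^n)^{\otimes(l+m)}$ with $(\mathbb{R}^n)^{\otimes l} \otimes (\mathbb{R}^n)^{\otimes m}$ in the Kronecker product matches the concatenation convention $\bm{l}\bm{m}$ in Definition \ref{tensorprodgraph}, and similarly for the columns. Once this convention is fixed consistently, the factoring argument above is immediate and the proof is complete. The main (mild) obstacle is thus purely notational; the underlying combinatorial content, that homomorphisms out of a disjoint union are products of homomorphisms out of the parts, is entirely straightforward.
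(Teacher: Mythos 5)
Your proof is correct. The paper does not reprove this lemma---it is cited directly from \citet{mancinska}---but your argument is the standard one and matches the proof given there: the $(\bm{I},\bm{J})$-entry of the Kronecker product factors as the product of the individual entries, and graph homomorphisms out of the disjoint union $H_1 \cup H_2$ correspond bijectively to independent pairs of homomorphisms out of $H_1$ and $H_2$ with the tuple constraints splitting accordingly; your attention to the index-ordering convention matching the concatenation $\bm{l}\bm{m}$, $\bm{k}\bm{q}$ of Definition~\ref{tensorprodgraph} is exactly the right point of care.
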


\begin{lemma}[\citet{mancinska}, Lemma 3.24]
	Suppose that $G$ is a graph having $n$ vertices, and let $[\bm{H}]$ be
	the isomorphism class of the $(k,l)$--bilabelled graph $\bm{H} = (H,
	\bm{k}, \bm{l})$.
	Then
	\begin{equation}
		(X_{\bm{H}}^G)^{*}
		=
		X_{\bm{H^{*}}}^G
	\end{equation}
	where $(X_{\bm{H}}^G)^{*}$ is the transpose of the 
	matrix $X_{\bm{H}}^G$.
\end{lemma}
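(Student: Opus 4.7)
The plan is to unpack the definitions and match $(J,I)$-entries on both sides. By Definition \ref{GHomMatrix}, the matrix $X_{\bm{H}}^{G}$ is $n^l \times n^k$, and for $I \in [n]^l$ and $J \in [n]^k$ the $(I,J)$-entry counts graph homomorphisms $\phi \colon H \to G$ such that $\phi(\bm{l}) = I$ and $\phi(\bm{k}) = J$, where the condition $\phi(\bm{l}) = I$ means $\phi(l_i) = I_i$ for all $i \in [l]$, and similarly for $\bm{k}$.

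First, I would compute the $(J,I)$-entry of the transpose $(X_{\bm{H}}^{G})^{*}$, which by definition of matrix transpose equals the $(I,J)$-entry of $X_{\bm{H}}^{G}$, namely the number of graph homomorphisms $\phi \colon H \to G$ with $\phi(\bm{l}) = I$ and $\phi(\bm{k}) = J$.

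Next, I would compute the $(J,I)$-entry of $X_{\bm{H^{*}}}^{G}$. By Definition \ref{involutiongraph}, $\bm{H^{*}} = (H, \bm{l}, \bm{k})$ is an $(l,k)$-bilabelled graph whose underlying graph is still $H$, but now the input tuple is $\bm{l}$ and the output tuple is $\bm{k}$. Therefore $X_{\bm{H^{*}}}^{G}$ is an $n^k \times n^l$ matrix whose $(J,I)$-entry counts graph homomorphisms $\phi \colon H \to G$ such that $\phi(\bm{k}) = J$ (output condition) and $\phi(\bm{l}) = I$ (input condition). These two conditions are exactly the same set of constraints as in the previous paragraph, so the counts agree.

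Since the $(J,I)$-entries match for every $J \in [n]^k$ and $I \in [n]^l$, the two matrices are equal. There is essentially no obstacle here; the only subtlety is being careful that the roles of ``input'' and ``output'' tuples are swapped between $\bm{H}$ and $\bm{H^{*}}$, which precisely mirrors the row/column swap effected by matrix transposition, so the bookkeeping matches automatically.
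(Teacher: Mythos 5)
Your proof is correct: it is the direct definitional argument, matching the $(J,I)$-entry of the transpose with the $(J,I)$-entry of $X_{\bm{H^{*}}}^{G}$, both of which count the same set of homomorphisms $\phi \colon H \to G$ with $\phi(\bm{l}) = I$ and $\phi(\bm{k}) = J$. The paper itself states this lemma as a citation of \citet{mancinska} without reproducing a proof, and your argument is exactly the routine verification one would expect there.
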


Consequently, we obtain the following theorem, 
expressed in terms of functors and categories.

\begin{theorem} \label{graphfunctor}
	Suppose that $G$ is a graph having $n$ vertices.
	Then there exists a full, strict $\mathbb{R}$--linear monoidal functor
	\begin{equation}
		\mathcal{F}^G : \mathcal{G} \rightarrow \mathcal{C}^G
	\end{equation}
	that is defined on the objects of $\mathcal{G}$ by
	$\mathcal{F}^G(k) \coloneqq (\mathbb{R}^{n})^{\otimes k}$
	and, for any objects $k,l$ of $\mathcal{G}$, the map
	\begin{equation}
		\Hom_{\mathcal{G}}(k,l)
		\rightarrow
		\Hom_{\mathcal{C}^G}(\mathcal{F}^G(k),\mathcal{F}^G(l))
	\end{equation}
	is given by
	\begin{equation}
		[\bm{H}] \mapsto X_{\bm{H}}^G 
	\end{equation}
	for all isomorphism classes of $(k,l)$--bilabelled graphs. 
\end{theorem}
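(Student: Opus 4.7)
The plan is to define $\mathcal{F}^G$ on objects by $k \mapsto (\mathbb{R}^{n})^{\otimes k}$ and on morphism spaces first on the distinguished spanning set, sending $[\bm{H}] \mapsto X_{\bm{H}}^G$, and then extending $\mathbb{R}$-linearly to all of $\Hom_{\mathcal{G}}(k,l)$. The well-definedness of this assignment on isomorphism classes is noted in the remark following Definition \ref{GHomMatrix}, since a relabelling of the underlying graph that induces an isomorphism of $(k,l)$--bilabelled graphs leaves the count of graph homomorphisms in the $(I,J)$-entry of $X_{\bm{H}}^{G}$ unchanged. Linearity of $\mathcal{F}^G$ then holds by construction.

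The verification of functoriality breaks into two steps. Preservation of vertical composition is the bilinear extension of the identity $X_{\bm{H_2}}^{G} X_{\bm{H_1}}^{G} = X_{\bm{H_2 \circ H_1}}^{G}$ from \citet{mancinska}'s Lemma 3.21 already cited in the excerpt. Preservation of identities requires first identifying the identity morphism on the object $k$ in $\mathcal{G}$, which is the isomorphism class of the $(k,k)$--bilabelled graph whose underlying graph consists of $k$ isolated (loopless) vertices labelled $1, \ldots, k$ and whose input and output tuples are both $(1, 2, \ldots, k)$; its diagram is the ``straight through'' picture. Computing its $G$-homomorphism matrix directly against Definition \ref{GHomMatrix}, one sees that the $(I,J)$-entry is $1$ exactly when $I = J$ and $0$ otherwise, i.e.\ it is the identity matrix on $(\mathbb{R}^{n})^{\otimes k}$.

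For the strict monoidal structure, preservation of the tensor product on morphisms is precisely Lemma \ref{gHomtensorprod} extended bilinearly, while on objects one has $\mathcal{F}^G(k) \otimes \mathcal{F}^G(l) = (\mathbb{R}^{n})^{\otimes k} \otimes (\mathbb{R}^{n})^{\otimes l} = (\mathbb{R}^{n})^{\otimes (k+l)} = \mathcal{F}^G(k \otimes l)$ on the nose, which is what ``strict'' requires. The unit object $0$ of $\mathcal{G}$ is sent to $(\mathbb{R}^{n})^{\otimes 0} = \mathbb{R}$, which is the unit of $\mathcal{C}^G$. Fullness is essentially automatic from the definition of $\mathcal{C}^G$: the morphism space $\Hom_{\mathcal{C}^G}((\mathbb{R}^{n})^{\otimes k}, (\mathbb{R}^{n})^{\otimes l})$ is \emph{defined} as the $\mathbb{R}$-linear span of the $G$-homomorphism matrices $X_{\bm{H}}^{G}$ arising from all isomorphism classes of $(k,l)$--bilabelled graphs, and these are exactly the images under $\mathcal{F}^G$ of the spanning set of $\Hom_{\mathcal{G}}(k,l)$.

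The only step that requires genuine care, rather than an appeal to one of the three cited lemmas or to the definitions, is the identification of the identity morphism in $\mathcal{G}(k,k)$ and the direct check that its $G$-homomorphism matrix is the identity on $(\mathbb{R}^{n})^{\otimes k}$; the rest of the argument is bookkeeping to assemble the functor from the matrix identities already in hand. Note that we do not claim faithfulness: distinct isomorphism classes of bilabelled graphs can collapse to the same $G$-homomorphism matrix (for instance when $G$ is small and free vertices of the underlying graph are forced to vanish), so the statement of the theorem is deliberately only fullness.
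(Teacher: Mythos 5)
Your proof is correct and follows essentially the same route as the paper's: verifying the strict $\mathbb{R}$--linear monoidal functor axioms via the three cited lemmas of \citet{mancinska} and reading off fullness from the definition of $\mathcal{C}^G$. You are in fact somewhat more careful than the paper's own proof, which does not explicitly identify the identity morphism in $\mathcal{G}(k,k)$ or verify that its $G$-homomorphism matrix is the identity on $(\mathbb{R}^{n})^{\otimes k}$; your direct check of that point (and your remark that only fullness, not faithfulness, is claimed) is a welcome addition rather than a deviation.
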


\begin{proof}
	See the Technical Appendix.
\end{proof}

\subsection{A Spanning Set of Matrices for
the Learnable, Linear, $\Aut(G)$-Equivariant Layer Functions}

Compare the following proposition with Chassaniol's
result, given in Theorem \ref{chassaniol}.

\begin{proposition}[\citet{mancinska}, Theorem 8.4] \label{graphcatgenset}
	We have that 
	\begin{equation} 
		\mathcal{G} = 
		\langle [\bm{M}^{0,1}], [\bm{M}^{2,1}], [\bm{A}], [\bm{S}] \rangle_{\circ, \otimes, *}
	\end{equation} 
	where $\bm{M}^{0,1}, \bm{M}^{2,1}, \bm{A}, \bm{S}$ are
	the bilabelled graphs defined in Examples \ref{adjacencyex},
	\ref{spidergraphex}, and \ref{swapex}, and the operations 
	$\circ, \otimes, *$ on bilabelled graphs are those that are given in Definitions
	\ref{compositiongraph}, \ref{tensorprodgraph} and
	\ref{involutiongraph}, respectively.
\end{proposition}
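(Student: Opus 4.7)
The plan is to show by an explicit diagrammatic construction that every isomorphism class $[\bm{H}]$ of a $(k,l)$-bilabelled graph can be written as a combination of $[\bm{M}^{0,1}], [\bm{M}^{2,1}], [\bm{A}], [\bm{S}]$ under $\circ, \otimes, *$. Since $*$ is allowed as an operation, we automatically have $[\bm{M}^{1,0}]$ and $[\bm{M}^{1,2}]$ available, and $[\bm{A}], [\bm{S}]$ are fixed (up to isomorphism) by $*$. As preliminary identities, the single-vertex identity $[\bm{M}^{1,1}]$ is obtained as $[\bm{M}^{2,1}] \circ [\bm{M}^{1,2}]$ (the two black lines produced in the composition contract the two copies of the lone red vertex into one, with no red multiedges arising), and the empty-boundary single free vertex $[\bm{M}^{0,0}]$ as $[\bm{M}^{1,0}] \circ [\bm{M}^{0,1}]$.

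Next, I would build the full family of spiders $[\bm{M}^{p,q}]$ for all $p,q \geq 0$ by induction on $p+q$. Every $[\bm{M}^{p+1,q}]$ is obtained from $[\bm{M}^{p,q}]$ by pre-composing with $[\bm{M}^{2,1}]$ tensored with identity spiders (interleaved with appropriate permutations), since the composition rule of Definition \ref{compositiongraph} merges the single red vertex of $[\bm{M}^{2,1}]$ with that of $[\bm{M}^{p,q}]$, introducing one additional input leg; the symmetric step with $[\bm{M}^{1,2}]$ controls $q$. Tensor powers of $[\bm{S}]$ with identity spiders then realise the full symmetric-group action on both the input and the output legs, which is needed to produce arbitrary orderings of the tuples.

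With spiders and leg-permutations in hand, I would construct a general $[\bm{H}]$ for $\bm{H} = (H, \bm{k}, \bm{l})$ as follows. For each $v \in V(H)$ let $p_v, q_v$ denote the multiplicities of $v$ in $\bm{k}, \bm{l}$, and let $d_v$ be the number of half-edges of $v$ in $H$ (counting a loop at $v$ twice). Begin by tensoring $|E(H)|$ copies of $[\bm{A}]$ together with one spider factor $[\bm{M}^{p_v + d_v, q_v}]$ per vertex $v$; this produces a bilabelled graph with one red vertex per vertex of $H$ and one extra $K_2$ per edge, plus the correct number of external legs. Then post-compose with further spiders (and identity morphisms) that fuse each auxiliary leg of an $[\bm{A}]$-copy with the corresponding leg on the spider of one of its endpoints, so that the black-line contraction glues them into a single red vertex carrying precisely the intended adjacencies (and loops, which are preserved by Definition \ref{compositiongraph}). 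Finally, post-compose with a permutation morphism, built from $[\bm{S}]$ and identities, that reorders the remaining external legs to agree with $\bm{k}$ and $\bm{l}$. Since the morphism space $\mathcal{G}(k,l)$ is the $\mathbb{R}$-linear span of isomorphism classes, extending this construction $\mathbb{R}$-bilinearly (which is automatic from the definition of composition and tensor product in $\mathcal{G}$) yields the full proposition.

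The main obstacle is the combinatorial bookkeeping attached to this construction: one must verify at each $\circ$ step that the black-line contractions merge exactly the intended red vertices, that the resulting underlying graph has precisely the edge set of $H$ once multiedges are removed, that loops at a vertex are produced as expected, and that the input/output tuple multiplicities and ordering coincide with $\bm{k}$ and $\bm{l}$. Handling free vertices (via tensoring in $[\bm{M}^{0,0}]$), loops (via an $[\bm{A}]$ both of whose legs attach to the same spider), and vertices repeated in the tuples simultaneously is where the argument is most delicate; this is the essential content of the construction carried out by \citet{mancinska}, and organising it in the order vertices first, then edges, then external tuple ordering is what makes the verification manageable.
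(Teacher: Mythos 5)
The paper does not actually prove this proposition: it is imported verbatim as Theorem 8.4 of \citet{mancinska}, with no argument given in the main text or the appendix, so there is no in-paper proof to compare against. Your sketch reconstructs the standard generation argument from that reference, and its overall strategy is sound: derive the identity $[\bm{M}^{1,1}]$ and then all spiders $[\bm{M}^{p,q}]$ from $[\bm{M}^{0,1}]$, $[\bm{M}^{2,1}]$ and $*$, realise arbitrary leg permutations from $[\bm{S}]$, and assemble a general $(H,\bm{k},\bm{l})$ from one spider per vertex and one copy of $[\bm{A}]$ per edge, fixing the tuple order by a final permutation. Two points need tightening. First, the inductive step ``pre-compose with $[\bm{M}^{2,1}]$ tensored with identities'' has no leg to attach to when $p=q=0$; the $q=0$ family must be seeded by $[\bm{M}^{1,0}]=[\bm{M}^{0,1}]^*$ before the induction applies. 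Second, and more substantively, in your final assembly the $d_v$ edge-legs sit on the \emph{input} side of each vertex spider while each copy of $[\bm{A}]$ contributes one input and one output leg, so the required identifications pair an input leg with an output leg of the same tensor product; this cannot be achieved by post-composition alone, as post-composition only touches output legs. You need the cup and cap spiders $[\bm{M}^{2,0}]$ and $[\bm{M}^{0,2}]$ (available from the spider family you have already built), or equivalently both a pre- and a post-composition, to bend legs around --- precisely the Frobenius-duality mechanism the paper itself exploits in Proposition \ref{FrobDuality}. With that repair the bookkeeping you describe does go through: Definition \ref{compositiongraph} preserves loops under contraction (so an $[\bm{A}]$ with both legs fused to the same spider yields a loop), free vertices are supplied by $[\bm{M}^{0,0}]$, and simplicity of $H$ ensures the multiedge-removal step never deletes an intended edge.
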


%We have come to the main result of this paper.
We have come to the main results of this paper, 
namely the following theorem and its corollary.

\begin{theorem} \label{graphmainthrm}
	Suppose that $G$ is a graph having $n$ vertices.  
	%Then t
	The vector space
	of all $\Aut(G)$-equivariant, linear maps between tensor power spaces
	of $\mathbb{R}^{n}$, $ \Hom_{\Aut(G)}((\mathbb{R}^{n})^{\otimes k},
	(\mathbb{R}^{n})^{\otimes l})$, when the standard basis of
	$\mathbb{R}^{n}$ is chosen, is spanned by all of the $G$-homomorphism matrices
	$X_{\bm{H}}^{G}$ that are obtained from all of the isomorphism classes of
	$(k,l)$--bilabelled graphs.
\end{theorem}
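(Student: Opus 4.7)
The plan is to prove the theorem by exhibiting the morphism space $\Hom_{\Aut(G)}((\mathbb{R}^{n})^{\otimes k}, (\mathbb{R}^{n})^{\otimes l})$ as precisely the image under $\mathcal{F}^G$ of $\Hom_{\mathcal{G}}(k,l)$. Since the latter is by definition the $\mathbb{R}$-linear span of isomorphism classes of $(k,l)$--bilabelled graphs, and $\mathcal{F}^G$ sends $[\bm{H}]$ to $X_{\bm{H}}^G$, this immediately yields the spanning set claim. So the proof reduces to establishing the equality of images, which I would do by a two-sided containment.

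For the inclusion $\mathrm{Im}(\mathcal{F}^G)(k,l) \subseteq \Hom_{\Aut(G)}((\mathbb{R}^{n})^{\otimes k}, (\mathbb{R}^{n})^{\otimes l})$, it suffices to show each individual $G$-homomorphism matrix $X_{\bm{H}}^G$ is $\Aut(G)$-equivariant, since the $\Aut(G)$-equivariant maps form an $\mathbb{R}$-linear subspace. Given $\sigma \in \Aut(G)$, the map sending a graph homomorphism $\phi : H \to G$ to $\sigma \circ \phi$ is a bijection on the set of all graph homomorphisms $H \to G$, and it sends a homomorphism with $\bm{l} \mapsto I$ and $\bm{k} \mapsto J$ to one with $\bm{l} \mapsto \sigma(I)$ and $\bm{k} \mapsto \sigma(J)$. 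Reading this off the defining formula in Definition \ref{GHomMatrix} gives the matrix identity $\rho_l(\sigma) X_{\bm{H}}^G = X_{\bm{H}}^G \rho_k(\sigma)$, which is exactly equivariance.

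For the reverse inclusion, I would invoke Chassaniol's result (Theorem \ref{chassaniol}) together with Proposition \ref{graphcatgenset} and Theorem \ref{graphfunctor}. By Examples \ref{adjacencyex}, \ref{spidergraphex}, and \ref{swapex}, the functor $\mathcal{F}^G$ sends the four bilabelled graphs $\bm{M}^{0,1}, \bm{M}^{2,1}, \bm{A}, \bm{S}$ to the four matrices $M^{0,1}, M^{2,1}, A_G, S$. Because $\mathcal{F}^G$ is strict $\mathbb{R}$-linear monoidal and preserves the involution (by the three lemmas of Mancinska cited above), its image is closed under $+, \circ, \otimes, *$. Starting from the bilabelled-graph generators given by Proposition \ref{graphcatgenset}, any matrix generated from $M^{0,1}, M^{2,1}, A_G, S$ under $+, \circ, \otimes, *$ lies in the image of $\mathcal{F}^G$ at the appropriate grade. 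Chassaniol's theorem now states that this exhausts $\mathcal{C}(\Aut(G))(k,l) = \Hom_{\Aut(G)}((\mathbb{R}^{n})^{\otimes k}, (\mathbb{R}^{n})^{\otimes l})$, giving the reverse containment.

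The main obstacle is a bookkeeping one rather than a conceptual one: Chassaniol's generating set closes under $+$ in addition to the three categorical operations $\circ, \otimes, *$, so one must be careful that $\mathcal{F}^G$'s $\mathbb{R}$-linearity (rather than just monoidality) is what transports $\mathbb{R}$-linear combinations of bilabelled-graph classes to $\mathbb{R}$-linear combinations of the corresponding $G$-homomorphism matrices. Once this and the compatibility of $\mathcal{F}^G$ with $*$ are in hand, the argument is essentially a diagram chase between the generators identified by Chassaniol and those identified by Mancinska, and both inclusions combine to give exactly the spanning statement claimed.
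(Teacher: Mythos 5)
Your proposal is correct and follows essentially the same route as the paper: both arguments hinge on matching Chassaniol's generators $M^{0,1}, M^{2,1}, A_G, S$ with the images under $\mathcal{F}^G$ of the bilabelled-graph generators from Proposition \ref{graphcatgenset}, and then using the (linear, monoidal, involution-compatible) functoriality of $\mathcal{F}^G$ to identify $\mathcal{C}(\Aut(G))(k,l)$ with the span of the $X_{\bm{H}}^G$. Your explicit verification that each $X_{\bm{H}}^G$ is $\Aut(G)$-equivariant is a nice self-contained touch, but it is already subsumed by the equality in Chassaniol's theorem, so it does not change the substance of the argument.
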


\begin{proof}
	The proof of this theorem is 
	inspired by 
	\citet{mancinska},
	Theorem 8.5. 

	We know, by Chassaniol, that
	\begin{equation}
		\mathcal{C}(\Aut(G)) = 
		\langle M^{0,1}, M^{2,1}, A_G, S \rangle_{+, \circ, \otimes, *} 
	\end{equation} 
	By Examples \ref{adjacencyex}, \ref{spidergraphex}, and \ref{swapex},
	we get that $\mathcal{C}(\Aut(G))$ is equal to
	\begin{equation}
		\{X_{\bm{H}}^{G} \mid [\bm{H}] \in \langle [\bm{M}^{0,1}], [\bm{M}^{2,1}], [\bm{A}], [\bm{S}] \rangle_{+, \circ, \otimes, *} \}
	\end{equation} 
	and so, by Proposition \ref{graphcatgenset}, we have that 
	\begin{equation}
		\mathcal{C}(\Aut(G)) = 
		\mathbb{R}\text{--span}\{X_{\bm{H}}^{G} \mid [\bm{H}] \in \mathcal{G}\}
	\end{equation} 
	Consequently, for any $k,l$, we get that
	\begin{equation} \label{autspanres}
		\mathcal{C}(\Aut(G))(k,l)
		= 
		\mathbb{R}\text{--span}\{X_{\bm{H}}^{G} \mid [\bm{H}] \in \mathcal{G}(k,l)\}
	\end{equation}
	As the LHS of (\ref{autspanres}) is equivalent notation for	
	$\Hom_{\Aut(G)}((\mathbb{R}^{n})^{\otimes k}, (\mathbb{R}^{n})^{\otimes l})$, 
	we obtain our result.
\end{proof}

%Theorem \ref{graphmainthrm} immediately implies the following corollary.
%We have that
\begin{corollary}
	For all non-negative integers $l$ and $k$, if $G$ is a graph having $n$ vertices,
	then the weight matrix $W$ that appears in an $\Aut(G)$-equivariant linear
	layer function from 
	$(\mathbb{R}^{n})^{\otimes k}$ to
	$(\mathbb{R}^{n})^{\otimes l}$
	must be of the form
	\begin{equation}
		W =
		\sum_{[\bm{H}] \in \mathcal{G}(k,l)} \lambda_{\bm{H}} X_{\bm{H}}^{G}
	\end{equation}
	for weights $\lambda_{\bm{H}} \in \mathbb{R}$.
\end{corollary}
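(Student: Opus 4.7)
The plan is to deduce the corollary as an essentially immediate consequence of Theorem \ref{graphmainthrm}. The only content beyond that theorem is the translation between ``an $\Aut(G)$-equivariant linear layer function'' and ``an element of $\Hom_{\Aut(G)}((\mathbb{R}^{n})^{\otimes k},(\mathbb{R}^{n})^{\otimes l})$,'' together with the observation that, once the standard basis of $\mathbb{R}^n$ is fixed on each tensor power, such a map is faithfully represented by its matrix $W$ of size $n^l \times n^k$.

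First I would note that an $\Aut(G)$-equivariant linear layer function from $(\mathbb{R}^{n})^{\otimes k}$ to $(\mathbb{R}^{n})^{\otimes l}$ is, by Definition \ref{equivariance}, exactly a linear map satisfying (\ref{equivmapdefn}) for every $\sigma \in \Aut(G)$, and therefore lies in $\Hom_{\Aut(G)}((\mathbb{R}^{n})^{\otimes k},(\mathbb{R}^{n})^{\otimes l})$. After choosing the standard basis of $\mathbb{R}^n$ (and hence the induced tensor basis on each tensor power), this linear map is uniquely represented by its weight matrix $W$, and conversely any such $W$ determines a unique equivariant linear map. Hence $W$ is an element of the vector space $\Hom_{\Aut(G)}((\mathbb{R}^{n})^{\otimes k},(\mathbb{R}^{n})^{\otimes l})$ when the latter is identified with its matrix realisation in the standard basis.

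Next I would apply Theorem \ref{graphmainthrm}, which states that this vector space is spanned by the collection of $G$-homomorphism matrices $\{X_{\bm{H}}^{G} \mid [\bm{H}] \in \mathcal{G}(k,l)\}$. By the definition of spanning set, there exist real scalars $\lambda_{\bm{H}}$, indexed by isomorphism classes $[\bm{H}]$ of $(k,l)$--bilabelled graphs, only finitely many of which are non-zero, such that
\begin{equation}
        W = \sum_{[\bm{H}] \in \mathcal{G}(k,l)} \lambda_{\bm{H}} X_{\bm{H}}^{G},
\end{equation}
which is the required expression.

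There is no real obstacle here; the claim is a restatement of the preceding theorem at the level of weight matrices rather than abstract equivariant maps. The only point to be careful about is the convention that the sum is finite: since the set of isomorphism classes of $(k,l)$--bilabelled graphs is infinite (one may always adjoin free vertices to the underlying graph), one should read the displayed sum as having finitely many non-zero coefficients, or equivalently as being indexed by some finite subset of $\mathcal{G}(k,l)$ chosen large enough to span $W$. This is consistent with viewing $\mathcal{G}(k,l)$ as an $\mathbb{R}$-linear span inside its morphism space and is the same convention used implicitly in the statement of Theorem \ref{graphmainthrm}.
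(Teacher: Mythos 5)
Your proposal is correct and matches the paper's (implicit) reasoning exactly: the paper states this corollary without proof as an immediate restatement of Theorem \ref{graphmainthrm} at the level of weight matrices, which is precisely the translation you carry out. Your additional remark about the sum having only finitely many non-zero coefficients is a sensible clarification of a convention the paper leaves tacit, but it does not change the argument.
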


\begin{remark}
	In particular, Theorem \ref{graphmainthrm} shows that 
	$\mathcal{C}^G$ 
	%\cong
	is isomorphic to
	$\mathcal{C}(\Aut(G))$ 
	as categories, 
	and that the objects of the category $\mathcal{C}^G$
	%$(\mathbb{R}^{n})^{\otimes k}$, 
	%are actually 
	come from
	representations of $\Aut(G)$. 
	%as defined in Section \ref{repRnAutG}.
\end{remark}

Theorem \ref{graphmainthrm} is especially powerful when it is combined
with Frucht's Theorem \cite{frucht}. Frucht's Theorem states
that every finite group is isomorphic to the automorphism group
of a finite undirected graph. 
%And so
Hence, for any finite group,
by determining a graph (having $n$ vertices) whose automorphism group is isomorphic 
to the group in question,
%, if the number of vertices in the graph is $n$,
we can determine the weight matrix that appears between
any two layers that are some tensor power of $\mathbb{R}^{n}$ 
in a neural network that is equivariant to the group.

\begin{remark}
	It is very important to note the following.  If $H$ is a group that is
	isomorphic to the automorphism group of a graph $G$ having $n$
	vertices, then the spanning set that we obtain for
	$\Hom_{H}((\mathbb{R}^{n})^{\otimes k}, (\mathbb{R}^{n})^{\otimes l})$
	using Theorem \ref{graphmainthrm} 
	%(and hence the subsequent weight matrix that we obtain)
	depends on how the automorphism group
	is embedded in the symmetric group $S_n$, itself thought of as a
	subgroup of matrices in $GL(n)$ having chosen the standard basis of
	$\mathbb{R}^{n}$.  The spanning set is determined not only by how the
	vertices of the underlying graph $G$ are labelled (up to all
	automorphisms), but also by what the edges are in $G$.
\end{remark}

	For example, in the Technical Appendix, we show not only that $D_4$ has
	three embeddings in $S_4$, but also that each embedding can be obtained
	separately from two graphs that are the complement of each other, where
	both graphs have the same labelling of the vertices.  All six instances
	(an embedding of $D_4$ coming from a graph with a certain labelling of
	its vertices) 
	give rise to different, albeit related, spanning sets (in
	fact, bases) of $\Hom_{D_4}(\mathbb{R}^{4}, \mathbb{R}^{4})$. 
	Hence,
	%that is,
	we obtain a basis for each specific embedding and labelled graph that
	is chosen.

%Theorem \ref{graphmainthrm} states that, 
We have seen that 
we can find a spanning set for
$\Hom_{\Aut(G)}((\mathbb{R}^{n})^{\otimes k}, (\mathbb{R}^{n})^{\otimes l})$
by considering all isomorphism classes
of $(k,l)$--bilabelled graphs.
However, 
with the following result,
we can reduce the number of elements
that appear in the spanning set by reducing the number of isomorphism classes
that we need to consider.

\begin{proposition} \label{disconngraph}
	Let $\bm{H}$ be a $(k,l)$--bilabelled graph whose underlying
	graph $H$ contains a subset of free vertices that, whilst they may have
	edges amongst themselves, are entirely disconnected from any vertices
	that are in a connected component containing a non-free vertex.
	
	Then $X_{\bm{H}}^{G}$ is a scalar multiple of $X_{\bm{H'}}^{G}$, where
	$\bm{H'}$ is the $(k,l)$--bilabelled graph that is 
	obtained from
	$\bm{H}$ by removing the subset.
	%having the subset removed.
\end{proposition}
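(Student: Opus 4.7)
The plan is to decompose any graph homomorphism $\phi \colon H \to G$ into independent parts corresponding to the disconnected subset of free vertices and the remainder of $H$, exploiting the fact that the tuple constraints defining the $(I,J)$-entry only see the non-free portion.

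Let $F$ denote the subset of free vertices in the hypothesis, and let $H'$ be the graph obtained from $H$ by deleting all vertices in $F$ (and their incident edges, although by hypothesis $F$ has no edges to $V(H) \setminus F$). Since $F$ consists entirely of free vertices, the tuples $\bm{k}$ and $\bm{l}$ take values in $V(H')$, so $\bm{H'} \coloneqq (H', \bm{k}, \bm{l})$ is a well-defined $(k,l)$-bilabelled graph (after the usual relabelling of vertices). Let $H[F]$ denote the subgraph of $H$ induced on $F$.

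I would then argue, for each pair of multi-indices $I \in [n]^l$ and $J \in [n]^k$, as follows. By the hypothesis, there are no edges of $H$ between $F$ and $V(H')$, so $E(H)$ is the disjoint union of $E(H')$ and $E(H[F])$. Any function $\phi \colon V(H) \to V(G)$ is uniquely determined by its restrictions $\phi' \coloneqq \phi|_{V(H')}$ and $\phi_F \coloneqq \phi|_{F}$, and $\phi$ is a graph homomorphism if and only if both $\phi'$ and $\phi_F$ are (since no edge of $H$ has one endpoint in $F$ and the other in $V(H')$). Furthermore, the conditions $\phi(\bm{l}) = I$ and $\phi(\bm{k}) = J$ constrain only $\phi'$, because all entries of $\bm{l}$ and $\bm{k}$ lie in $V(H')$. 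Consequently, the set of homomorphisms $\phi \colon H \to G$ with $\phi(\bm{l}) = I$ and $\phi(\bm{k}) = J$ is in bijection with the product
\begin{equation}
\{\phi' \colon H' \to G \mid \phi'(\bm{l}) = I,\ \phi'(\bm{k}) = J\} \times \Hom(H[F], G).
\end{equation}
Taking cardinalities gives $(X_{\bm{H}}^G)_{I,J} = |\Hom(H[F], G)| \cdot (X_{\bm{H'}}^G)_{I,J}$, and since the scalar $c \coloneqq |\Hom(H[F], G)|$ does not depend on $I$ or $J$, we conclude that $X_{\bm{H}}^G = c \cdot X_{\bm{H'}}^G$.

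This argument is essentially the multiplicativity of graph homomorphism counts across disjoint unions, adapted to the bilabelled setting. I do not anticipate any serious obstacle; the only point requiring care is verifying that removing $F$ genuinely yields a valid bilabelled graph, which is immediate from the freeness of the vertices in $F$, and that the relabelling step implicit in Definition of composition does not affect $X_{\bm{H'}}^G$, which follows from the remark that $G$-homomorphism matrices are independent of the choice of class representative.
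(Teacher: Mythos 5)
Your proposal is correct and follows essentially the same route as the paper: both decompose $H$ as the disjoint union of the free, disconnected subset (with its internal edges) and the rest, and observe that the homomorphism count factorises, with the scalar being the number of homomorphisms from the removed subgraph to $G$. The paper asserts this factorisation "is clear"; your bijection argument simply supplies the details.
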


\begin{proof}
	See the Technical Appendix.
\end{proof}

%We also have the following useful result: 
We also have the following result that is useful for generating
the isomorphism classes of $(k,l)$--bilabelled graphs.
\begin{proposition}[Frobenius Duality] \label{FrobDuality}
	Suppose that we use Theorem \ref{graphmainthrm} to obtain a spanning
	set for $\Hom_{\Aut(G)}((\mathbb{R}^{n})^{\otimes k},
	(\mathbb{R}^{n})^{\otimes l})$.  Then we can immediately obtain a
	spanning set for $\Hom_{\Aut(G)}((\mathbb{R}^{n})^{\otimes q},
	(\mathbb{R}^{n})^{\otimes m})$, for any $q, m \geq 0$ such that $q + m
	= k + l$.
\end{proposition}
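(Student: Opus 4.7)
The plan is to exploit the self-duality of $\mathbb{R}^n$ as a representation of $\Aut(G)$, which manifests in $\mathcal{C}(\Aut(G))$ through the cup and cap spider maps. Both $M^{0,2}$ and $M^{2,0}$ from Definition~\ref{spiderdefn} lie in $\mathcal{C}(\Aut(G))$, since we can write $M^{0,2} = (M^{2,1})^{*} \circ M^{0,1}$ and $M^{2,0} = (M^{0,1})^{*} \circ M^{2,1}$ using Chassaniol's generators from Theorem~\ref{chassaniol}; a direct computation on the standard basis then verifies the snake identity $(\mathrm{id} \otimes M^{2,0}) \circ (M^{0,2} \otimes \mathrm{id}) = \mathrm{id}$, together with its mirror. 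Hence $\mathbb{R}^n$ is a self-dual object in the strict $\mathbb{R}$-linear monoidal category $\mathcal{C}(\Aut(G))$.

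From this self-duality, the standard Frobenius reciprocity argument in a rigid monoidal category yields a natural $\mathbb{R}$-linear isomorphism
\[
\Hom_{\Aut(G)}((\mathbb{R}^{n})^{\otimes k}, (\mathbb{R}^{n})^{\otimes l})
\;\cong\;
\Hom_{\Aut(G)}((\mathbb{R}^{n})^{\otimes q}, (\mathbb{R}^{n})^{\otimes m})
\]
whenever $q + m = k + l$. This isomorphism is implemented concretely by iteratively tensoring with identity morphisms and composing with a cup or a cap to ``bend'' a strand from the input side to the output side, or vice versa; the snake identities guarantee that the two bending operations are mutually inverse, so we obtain an honest vector space isomorphism.

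To convert this into a statement about spanning sets, I would translate strand-bending into the combinatorics of bilabelled graphs via the monoidal functor $\mathcal{F}^G$ of Theorem~\ref{graphfunctor}. By Example~\ref{spidergraphex}, the cup and cap maps are the $G$-homomorphism matrices of the $(0,2)$- and $(2,0)$-bilabelled graphs $\bm{M}^{0,2}$ and $\bm{M}^{2,0}$ whose underlying graph is $K_1$. Applying the composition rule of Definition~\ref{compositiongraph} together with Lemmas~3.21 and~3.23 of \citet{mancinska}, one verifies that tensoring $X_{\bm{H}}^G$ with identities and composing with a cup or cap amounts, at the bilabelled graph level, to transferring a single entry between the input tuple $\bm{k}$ and the output tuple $\bm{l}$ of $\bm{H}$. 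Iterating, every isomorphism class of $(q,m)$-bilabelled graphs with $q + m = k + l$ is obtained from some isomorphism class of $(k,l)$-bilabelled graphs by re-partitioning the concatenated $(k+l)$-tuple $(\bm{k}, \bm{l})$ into an input $q$-tuple and an output $m$-tuple, and this descends to a bijection on isomorphism classes.

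The main obstacle is the careful bookkeeping needed to confirm that a single strand-bend at the matrix level really does correspond to a single entry-transfer at the bilabelled-graph level; the three Mancinska lemmas (composition, tensor product, and involution) do the heavy lifting here, together with the strict monoidality of $\mathcal{F}^G$. Once this correspondence is in hand, applying Theorem~\ref{graphmainthrm} to the re-partitioned bilabelled graphs immediately produces a spanning set for $\Hom_{\Aut(G)}((\mathbb{R}^{n})^{\otimes q}, (\mathbb{R}^{n})^{\otimes m})$ from the one for $\Hom_{\Aut(G)}((\mathbb{R}^{n})^{\otimes k}, (\mathbb{R}^{n})^{\otimes l})$, as claimed.
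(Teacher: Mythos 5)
Your proposal is correct and takes essentially the same route as the paper: both implement Frobenius duality by bending legs with the cup/cap spider maps and use the strict monoidality of $\mathcal{F}^G$ together with Theorem \ref{graphmainthrm} to transport the resulting $\mathbb{R}$-linear isomorphism of $\Hom$-spaces onto the spanning sets of $G$-homomorphism matrices. The only cosmetic difference is that the paper routes through $\Hom_{\Aut(G)}((\mathbb{R}^{n})^{\otimes l+k},\mathbb{R})$ via the explicit diagrams $\bm{MM}_l$ and $\bm{UU}_l$ and a commuting square, whereas you bend strands directly from $(k,l)$ to $(q,m)$ after first verifying the self-duality of $\mathbb{R}^{n}$ from Chassaniol's generators and the snake identities.
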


\begin{proof}
	See the Technical Appendix.
\end{proof}

\begin{remark}
	In the Technical Appendix, we show that we can recover the diagram
	basis for $\Hom_{S_n}((\mathbb{R}^{n})^{\otimes k},
	(\mathbb{R}^{n})^{\otimes l})$ by looking at the homomorphism matrices for
	the complement of the complete graph on $n$ vertices, $\overline{K_n}$.
	
	This implies that, for a graph $G$ having $n$
	vertices, the spanning set of $\Hom_{\Aut(G)}((\mathbb{R}^{n})^{\otimes
	k}, (\mathbb{R}^{n})^{\otimes l})$
	%, given in (\ref{autspanres}),
	automatically includes the image, under the functor $\mathcal{F}^G$, of all set
	partitions of $[l+k]$ 
	%having at most $n$ blocks 
	%in their equivalent $(k,l)$--bilabelled graph form, 
	expressed as $(k,l)$--bilabelled graph diagrams,
	since $\Aut(G)$ is a subgroup
	of $S_n$.
\end{remark}

We use all of the results given above to obtain a procedure 
for constructing 
the weight matrix for an $\Aut(G)$-equivariant linear layer function
from $(\mathbb{R}^{n})^{\otimes k}$ to $(\mathbb{R}^{n})^{\otimes l}$
%a spanning set for 
%$\Hom_{\Aut(G)}((\mathbb{R}^{n})^{\otimes k}, (\mathbb{R}^{n})^{\otimes l})$
from isomorphism classes of $(k,l)$--bilabelled graphs.
In the procedure,
we create all $(q,0)$--bilabelled graph diagrams
that are appropriate for the graph $G$, 
where $q = k+l$, and then use Frobenius duality 
and the functor $\mathcal{F}^G$
%given in Theorem \ref{graphfunctor}
to obtain a spanning set of matrices for
$\Hom_{\Aut(G)}((\mathbb{R}^{n})^{\otimes k},
(\mathbb{R}^{n})^{\otimes l})$.
To capture all of the $(q,0)$--bilabelled graph diagrams that are appropriate for $G$,
we first need to calculate the length of the longest path $m$ 
between any two (not necessarily distinct) vertices in $G$ 
where each edge in $G$ is traversed at most once.
This is the longest path that needs to be mapped onto by
a graph homomorphism from the underlying graph 
of a $(q,0)$--bilabelled graph diagram to $G$.
We use $m$ to generate the all of the 
$(q,0)$--bilabelled graph diagrams as follows. 
We focus on the \textbf{non-free red vertices}
%that are connected with a set of black vertices 
since they play a key role in
%are used to define 
how the entries of 
a spanning set matrix $X_{\bm{H}}^{G}$ are determined.
%calculated.
%found.
%Call these red vertices \textbf{attached}.
%Call any red vertex that is not connected with a set of black vertices 
%\textbf{unattached}.
We use the following ideas 
to 
generate all of the 
$(q,0)$--bilabelled graph diagrams from the set partitions of $[l+k]$
expressed as $(q,0)$--bilabelled graph diagrams.
%in their equivalent $(q,0)$--bilabelled graph form.
\begin{itemize}
	\item Between any pair of non-free red vertices, 
		we must consider all paths of lengths $0$ to $2m$ between
		them consisting solely of free red vertices, since such a path of length
		$2m$ is a double cover of the longest path in $G$.
	\item Starting with a single non-free red vertex that is not connected with any 
		other red vertex, we must consider
		all paths of lengths $0$ to $m$ consisting solely of 
		free red vertices from this non-free red vertex, 
		since such a path of length $m$ is a single cover of the longest path in $G$.
	%\item We also need to consider the impact of loops on  
		%$(q,0)$--bilabelled graphs if $G$ has loops (see Step 5 of the procedure).
		%apply loops to each non-free red vertex if $G$ has vertices with loops.
\end{itemize}
		We also need to consider the impact of loops on  
		$(q,0)$--bilabelled graph diagrams if $G$ has loops (see Step 5 of the procedure).
These points
%highlights
show
%highlights, in particular, how
that the set of $(q,0)$--bilabelled graph diagrams that we need to 
consider
%construct 
depends on the graph $G$.

%Indeed, for two black vertices connected with two different red vertices
%in a $(q,0)$--bilabelled graph,
%we need to consider all possible numbers of red edges between these red vertices
%up to the maximum value, which is $2m$, because in effect we need to consider 
%the possible mappings of these red vertices.

%\textbf{DESCRIBE MORE ABOUT HOW WE OBTAINED THE ALGORITHM, E.G START WITH SPIDER $(q,0)$ AND GO FROM THERE}

%Firstly, in a graph homomorphism from the underlying graph 
%$H$ of a $(q,0)$--bilabelled graph $\bm{H}$ to $G$,
%for a single red vertex that is connected with a set of black vertices,
%the longest number of red edges coming out from this red vertex
%such that no black vertices are connected with the red vertices
%that appear in these red edges (what we have called ``external" red edges)
%can be at most $m$.
%Secondly, in a graph homomorphism from the underlying graph 
%$H$ of a $(q,0)$--bilabelled graph $\bm{H}$ to $G$,
%for a pair of red vertices
%where each red vertex is connected with separate sets of black vertices,
%the longest number of red edges between these red vertices
%such that no black vertices are connected with the red vertices
%that appear in these red edges 
%(what we have called ``internal" red edges)
%can be at most $2m$.
%\textbf{WHAT ABOUT LOOPS?}
%We use these two \textbf{CHECK} ideas to 
%generate all of the 
%$(q,0)$--bilabelled graphs from the set partitions of $[l+k]$ in
%their equivalent $(q,0)$--bilabelled graph form.

\begin{remark}
	In the Technical Appendix,
	we have provided a number of examples for how to calculate a spanning
	set of $\Hom_{\Aut(G)}((\mathbb{R}^{n})^{\otimes k},
	(\mathbb{R}^{n})^{\otimes l})$ 
	using the procedure,
	for different graphs $G$ and for low
	order tensor powers of $\mathbb{R}^{n}$.
\end{remark}

%\begin{remark}
	%We can adapt the characterisation for the case where the feature dimension
	%of the layers is greater than one, and we can include bias terms
	%in the layer functions themselves such that the equivariance property
	%is maintained. See the Technical Appendix for more details.
%\end{remark}

%\begin{remark}
	%We appreciate that there will be some technical challenges when 
	%implementing the neural networks that we have characterised 
	%given the current state of computer hardware. 
	%We discuss this in more detail in the Technical Appendix.
	%\textbf{THEY WANT MORE HERE}
%\end{remark}

\section{Limitations and Feasibility}

Given the current limitations of hardware, 
%it is important to 
we recognize that
there will be difficulties when implementing the neural networks that are
discussed in this paper.  
%Specifically, c
Considerable engineering efforts will
be needed to achieve the required scale due to the non-trivial task of
storing high-order tensors and the weight matrices in memory.  
\citet{clebschgordan} demonstrated this
by developing custom CUDA kernels to implement their tensor product-based
neural networks.  However, we anticipate that as computing power continues to
improve, higher-order group equivariant neural networks will become more
prominent in practical applications.  
%It is worth noting that while the
%dimension of tensor power spaces grows exponentially with their order, the
%dimension of the space of equivariant maps between such tensor power spaces is
%often much smaller.  Therefore, although storing these matrices may present
%some technical difficulties, it should be feasible with the current computing
%power that is available.

\section{Conclusion}

We are the first to show how the combinatorics underlying bilabelled graphs
provides the theoretical background for constructing neural networks that are
equivariant to the automorphism group of a graph having $n$ vertices where the
layers are some tensor power of $\mathbb{R}^{n}$.  We found the form of the
learnable, linear, $\Aut(G)$-equivariant layer functions between such tensor
power spaces in the standard basis of $\mathbb{R}^{n}$ by finding a spanning
set for the $\Hom_{\Aut(G)}$--spaces in which these layer functions live.  However, given
that the number of isomorphism classes of $(k,l)$--bilabelled graphs increases
exponentially, both as the number of vertices in the graph $G$ increases and as
$k$ and $l$ increase, resulting in the number of spanning set elements
increasing exponentially, it would be useful to find further ways of reducing
the number of isomorphism classes of $(k,l)$--bilabelled graphs that we need to
consider.  We leave this to future work.

%%%%%%%%%%%%%%%%%%%%%%%%%%%%%%%%%%%%%%%%%%%%%%%%%%%%%%%%%%%%%%%%%%%%%%%%%%%%%%%%%%%
\begin{figure*}[htb!]
	\begin{tcolorbox}[colback=melon!10, colframe=melon!40, coltitle=black, 
	%\begin{tcolorbox}[colback=white!02, colframe=black, coltitle=white, 
		title={\bfseries 
		%A General Procedure for Calculating 
		%the Weight Matrix of
		%an $\Aut(G)$-Equivariant Linear Layer Function from 
		%$(\mathbb{R}^{n})^{\otimes k}$
		%to 
		%$(\mathbb{R}^{n})^{\otimes l}$.
		Procedure: Weight Matrix for
		an $\Aut(G)$-Equivariant Linear Layer Function from 
		$(\mathbb{R}^{n})^{\otimes k}$
		to 
		$(\mathbb{R}^{n})^{\otimes l}$.
		},
		fonttitle=\bfseries
		]
	%First, c
		Calculate all $(q,0)$--bilabelled graph diagrams 
		that are appropriate for $G$ where $q = l+k$, as follows.
	Let $m$ be the maximum length of a path between any two vertices in $G$ 
	such that each edge in $G$ is traversed at most once.
	%Note that, in calculating $m$, these two vertices can be the same.
	Then
	\begin{enumerate}
		\item Calculate all set partitions of $\{1, \dots, q=l+k\}$, and express
			them as $(q,0)$--bilabelled graph diagrams. 
			For uniformity, we choose
			to order the blocks in a set partition by their size, 
			in descending order.
			Each block $B_i$ of size $b_i$
			corresponds to a spider diagram 
			having a single red vertex
			and $b_i$ black vertices 
			connected to it with black edges. The vertices
			in the spider diagram for $B_i$
			are labelled with the elements of $B_i$ 
			in ascending order, from left to right.
			If $m = 0$, go to Step $6$. Otherwise:
		\item 
			From all of the 
			%$(q,0)$--bilabelled graphs 
			diagrams
			found in Step $1$, 
			create all possible $(q,0)$--bilabelled graph diagrams
			that have only internal red edges between red vertices, 
			as follows.
			For each 
			%$(q,0)$--bilabelled graph 
			diagram
			$\bm{H}$ found in Step $1$:
			\begin{itemize}
				\item Let $c$ be the number of red vertices in $\bm{H}$. 
				%\item 
					Let $e \coloneqq \frac{c(c-1)}{2}$: this is the number of edges in the complete graph $K_c$ having $c$ vertices.
				%\item 
					Create all possible $e$ length strings having values in $0 \rightarrow 2m$. Each position in the string represents a different pair of distinct red vertices in $\bm{H}$. Remove the all $0$ string.
				\item For each string, create a new $(q,0)$--bilabelled graph diagram as follows:
					for each position in the string, if $t$ is 
					its value,
					%in that position, 
					then, assuming that the position represents the pair of red vertices $v$ and $w$, insert $t$ new edges between $v$ and $w$ into $\bm{H}$, adding in $t-1$ new red vertices to make these $t$ new edges possible.
		
			\end{itemize}
		\item Also, from all of the 
			%$(q,0)$--bilabelled graphs 
			diagrams
			found 
			in Step $1$, create all possible $(q,0)$--bilabelled graph 
			diagrams
			that have only external red edges, as follows.
			For each 
			%$(q,0)$--bilabelled graph 
			diagram
			$\bm{H}$ found in Step $1$:
			\begin{itemize}
				\item Let $c$ be the number of red vertices in $\bm{H}$. 
				%\item 
					Create all possible $c$ length strings having values in $0 \rightarrow m$, where each position in the string represents a red vertex in $\bm{H}$. Remove the all $0$ string.	
					%\textbf{DOES THIS ALL WORK FOR LOOPS? allow -1 as well?}
				\item For each string, create a new $(q,0)$--bilabelled graph diagram as follows:
					for each position in the string, if $t$ is 
					its
					value,
					%in that position, 
					then, assuming that the position represents the red vertex $v$, add $t$ new edges outwards from $v$, adding in $t-1$ new red vertices to make these $t$ new edges possible.
			\end{itemize}
		\item 
			%Finally, f
			From all of the 
			%$(q,0)$--bilabelled graphs 
			diagrams
			found 
			in Step $2$, create all possible $(q,0)$--bilabelled graph diagrams having both internal and external red edges, as follows.
			For each 
			%$(q,0)$--bilabelled graph 
			diagram
			$\bm{H_I}$ found in Step $2$:
			\begin{itemize}
				\item Let $\bm{H}$ be the $(q,0)$--bilabelled graph diagram from 
					Step $1$ that $\bm{H_I}$ came from. 
				%\item 
					Let $c$ be the number of red vertices in $\bm{H}$,
					and label these vertices in $\bm{H_I}$ as $1, \dots, c$.
				%\item 
					Create an empty set named \textit{originals}, 
					and add only the vertices $1, \dots, c$ in $\bm{H_I}$
					that are not connected to any other red vertex in $\bm{H_I}$.
				\item If \textit{originals} is empty, 
					then no new diagrams come from $\bm{H_I}$. Otherwise, create all possible $c$ length strings,
					indexed by the vertices $1, \dots, c$, 
					allowing 
					only the values in the positions in
					%those positions in 
					\textit{originals} to range from $0 \rightarrow m$, 
					with the rest being $0$.
					%and the rest to only be $0$. 
					Remove the all $0$ string.
					%\textbf{DOES THIS ALL WORK FOR LOOPS? allow -1 as well?}
				%\item 
					%from consideration. 
					%For each string, 
					Now follow the instructions given in Step $3.2$.
					%create a new $(q,0)$--bilabelled graph as follows:
					%for each position in the string, if $t$ is the value in that position, then, assuming the position represents the red vertex $v$, add $t$ new edges outwards from $v$, adding in $t-1$ new red vertices to make these $t$ new edges possible.
			\end{itemize}
		\item Finally, if $G$ has loops, then, for each $(q,0)$--bilabelled graph diagram found
			in Steps $1$--$4$:
			\begin{itemize}
				\item If $c$ is the number of red vertices in $\bm{H}$, label these vertices as $1, \dots, c$, and create all binary strings of length $c$.
				%\item 
					For each string, create a new $(q,0)$--bilabelled graph diagram as follows: for each position $i$ in the string, if the value is $1$, then attach a loop to the red vertex labelled as $i$.
			\end{itemize}
	\end{enumerate}

	%Then t
	The weight matrix is obtained from the set of all 
	$(q,0)$--bilabelled graph diagrams found in Steps $1$--$5$,
	as follows:
	\begin{enumerate}
		\setcounter{enumi}{5}
	\item Apply Frobenius duality to each $(q,0)$--bilabelled graph diagram $\bm{H}$ 
		to obtain
		its form as a $(k,l)$--bilabelled graph diagram. 
			%In effect, t
			This is the same
		as dragging the labelled black vertices in $\bm{H}$ 
			into two rows of vertices
		such that the vertices in the top row are ordered $1, \dots, l$ 
		and the vertices in the bottom row are ordered $l + 1, \dots, l+k$.
	\item Apply the strict $\mathbb{R}$--linear monoidal functor 
		$\mathcal{F}^G$ to each $(k,l)$--bilabelled graph diagram
		to obtain the spanning set matrices in 
			$\Hom_{\Aut(G)}((\mathbb{R}^{n})^{\otimes k}, 
			(\mathbb{R}^{n})^{\otimes l})$. 
			Remove all duplicate matrices as well as the all zero matrices from this set.
	%\item Finally, w
		Weight each matrix that remains in the set by a parameter, 
			and then add them 
		all together to give the overall $\Aut(G)$-equivariant weight matrix.
			%\textbf{DOUBLE CHECK RE ALL ZERO MATRIX.}
	\end{enumerate}
	\end{tcolorbox}
  	\label{AutGsummaryprocedure}
\end{figure*}

%%%%%%%%%%%%%%%%%%%%%%%%%%%%%%%%%%%%%%%%%%%%%%%%%%%%%%%%%%%%%%%%%%%%%%%%%%%%%%%%%%%

\section*{Acknowledgements}

%The first author would like to thank his PhD
%supervisor Professor William J. Knottenbelt for
%being generous with his time throughout the
%author's period of research prior to the
%publication of this paper.

This work was funded by the Doctoral
Scholarship for Applied Research which was
awarded to the first author under Imperial College
London's Department of Computing Applied
Research scheme.  This work will form part of
the first author's PhD thesis at Imperial College
London.

\section*{Impact Statement}

This paper presents work that is primarily a theoretical contribution; hence we do not expect profound societal impact in the short term. However, in the medium term, a number of applications may well emerge from the theory having high levels of impact.

%This paper presents work whose goal is to advance the field of Machine
%Learning. There are many potential societal consequences of our work, none
%which we feel must be specifically highlighted here.

% In the unusual situation where you want a
% paper to appear in the references without
% citing it in the main text, use \nocite
\nocite{*} \bibliography{references}
\bibliographystyle{icml2024}

%%%%%%%%%%%%%%%%%%%%%%%%%%%%%%%%%%%%%%%%%%%%%%%%%%%%%%%%%%%%%%%%%%%%%%%%%%%%%%%
%%%%%%%%%%%%%%%%%%%%%%%%%%%%%%%%%%%%%%%%%%%%%%%%%%%%%%%%%%%%%%%%%%%%%%%%%%%%%%%
				% APPENDIX
%%%%%%%%%%%%%%%%%%%%%%%%%%%%%%%%%%%%%%%%%%%%%%%%%%%%%%%%%%%%%%%%%%%%%%%%%%%%%%%
%%%%%%%%%%%%%%%%%%%%%%%%%%%%%%%%%%%%%%%%%%%%%%%%%%%%%%%%%%%%%%%%%%%%%%%%%%%%%%%
\newpage \appendix \onecolumn

\section{Proofs}

To prove Propositions \ref{tenscatduals}, \ref{catbilgraph}, 
and \ref{catGhommat},
we first need to define
a strict $\mathbb{R}$--linear monoidal category. 
We assume throughout that all categories are \textbf{locally small}, which 
means that the collection of morphisms between any two objects is a set.
In fact, all of the categories that we consider throughout have 
morphism sets that are vector spaces.
Hence, the morphisms between objects become linear maps.

\begin{definition}  \label{categorystrictmonoidal}
	A category $\mathcal{C}$ is said to be \textbf{strict monoidal} if it comes with
		a bifunctor $\otimes: \mathcal{C} \times \mathcal{C} \rightarrow \mathcal{C}$, called the tensor product, and
	a unit object $\mathds{1}$,
	such that, for all objects $X, Y, Z$ in $\mathcal{C}$, we have that
	\begin{equation}
		(X \otimes Y) \otimes Z = X \otimes (Y \otimes Z)
	\end{equation}
	\begin{equation}
		(\mathds{1} \otimes X) = X = (X \otimes \mathds{1})
	\end{equation}
	and, for all morphisms $f, g, h$ in $\mathcal{C}$, we have that
	\begin{equation} \label{assocbifunctor}
		(f \otimes g) \otimes h = f \otimes (g \otimes h)
	\end{equation}
	\begin{equation}
		(1_\mathds{1} \otimes f) = f = (f \otimes 1_\mathds{1})
	\end{equation}
	where $1_\mathds{1}$ is the identity morphism $\mathds{1} \rightarrow \mathds{1}$.
	We often use the tuple
	$(\mathcal{C}, \otimes_\mathcal{C}, \mathds{1}_\mathcal{C})$
	to refer to the strict monoidal category $\mathcal{C}$.
\end{definition}

We can assume that all monoidal categories are \textbf{strict} 
(nonstrict monoidal categories would have isomorphisms 
where there are equalities in Definition \ref{categorystrictmonoidal}) 
owing to a technical result known as Mac Lane's Coherence Theorem. 
See \citet{maclane} for more details.

\begin{definition} \label{categorylinear}
	A category $\mathcal{C}$ is said to be $\mathbb{R}$\textbf{--linear} if,
	for any two objects $X, Y$ in $\mathcal{C}$, the morphism space 
	$\Hom_{\mathcal{C}}(X,Y)$ is a vector space over $\mathbb{R}$, and
	the composition of morphisms is $\mathbb{R}$--bilinear.
\end{definition}

Combining Definitions \ref{categorystrictmonoidal} and \ref{categorylinear},
we get
\begin{definition}
	A category $\mathcal{C}$ is said to be \textbf{strict} $\mathbb{R}$\textbf{--linear monoidal} if it is a category that is both strict monoidal and $\mathbb{R}$--linear, such that the bifunctor $\otimes$ is $\mathbb{R}$--bilinear.
\end{definition}

\begin{proof}[Proof of Proposition \ref{tenscatduals}]
	$\mathcal{C}(G)$ can immediately be seen to be a strict monoidal category
	by the associativity of the tensor product on vector spaces and 
	by the associativity of the tensor product on linear maps between tensor spaces.

	$\mathcal{C}(G)$ is $\mathbb{R}$--linear because the morphism space 
	$\Hom_{G}((\mathbb{R}^{n})^{\otimes k},
	(\mathbb{R}^{n})^{\otimes l})$, for any two objects 
	$(\mathbb{R}^{n})^{\otimes k}$ and
	$(\mathbb{R}^{n})^{\otimes l}$ in $\mathcal{C}(G)$,
	is a vector space over $\mathbb{R}$ 
	and the composition of morphisms is $\mathbb{R}$-bilinear because
	composition is $\mathbb{R}$-bilinear for linear maps on vector spaces.
	It is also clear that the bifunctor $\otimes$ is $\mathbb{R}$--bilinear
	since it is the standard tensor product for vector spaces.
\end{proof}

\begin{proof} [Proof of Proposition \ref{catbilgraph}]
	$\mathcal{G}$ is a strict monoidal category
	because the bifunctor on objects reduces to the addition operation on 
	natural numbers, which is associative, and
	the bifunctor on morphisms is 
	the tensor product of isomorphism classes of 
	bilabelled graphs given in Definition \ref{tensorprodgraph}, which 
	is associative because both the concatenation of tuples and the (set) unions
	of graphs are associative operations.

	$\mathcal{G}$ is $\mathbb{R}$--linear because the morphism space 
	between any two objects is by definition a vector space,
	and the composition of morphisms is $\mathbb{R}$-bilinear 
	by definition.
	%owing to the properties of multiplication in $\mathbb{R}$.
	For the same reason, the bifunctor is also $\mathbb{R}$--bilinear.
\end{proof}

\begin{proof} [Proof of Proposition \ref{catGhommat}]
	This is effectively the same proof as the one that is given 
	for Proposition \ref{tenscatduals},
	except we replace linear maps by matrices.
\end{proof}

In order to prove Theorem \ref{graphfunctor},
we first need to recall 
the definition of a strict $\mathbb{R}$--linear monoidal functor. 
	
\begin{definition} \label{monoidalfunctordefn}
	Suppose that
	$(\mathcal{C}, \otimes_\mathcal{C}, \mathds{1}_\mathcal{C})$
	and
	$(\mathcal{D}, \otimes_\mathcal{D}, \mathds{1}_\mathcal{D})$
	are two strict $\mathbb{R}$--linear monoidal categories.

	A \textbf{strict} $\mathbb{R}$\textbf{--linear monoidal functor} 
	from $\mathcal{C}$ to $\mathcal{D}$ is a functor 
	$\mathcal{F}: \mathcal{C} \rightarrow \mathcal{D}$ 
	such that
	\begin{enumerate}
		\item for all objects $X, Y$ in $\mathcal{C}$,
			$\mathcal{F}(X \otimes_\mathcal{C} Y) =
			\mathcal{F}(X) \otimes_\mathcal{D} \mathcal{F}(Y)$
		\item for all morphisms $f, g$ in $\mathcal{C}$,
			$\mathcal{F}(f \otimes_\mathcal{C} g) =
			\mathcal{F}(f) \otimes_\mathcal{D} \mathcal{F}(g)$
		\item $\mathcal{F}(\mathds{1}_\mathcal{C}) = \mathds{1}_\mathcal{D}$, and
		\item for all objects $X, Y$ in $\mathcal{C}$, the map
		\begin{equation} \label{maphomsets}
			\Hom_{\mathcal{C}}(X,Y) 
			\rightarrow 
			\Hom_{\mathcal{D}}(\mathcal{F}(X),\mathcal{F}(Y))
		\end{equation}
		given by
		$f \mapsto \mathcal{F}(f)$
		is $\mathbb{R}$--linear.
	\end{enumerate}
\end{definition}

\begin{proof}[Proof of Theorem \ref{graphfunctor}]
	Let $G$ be a graph having $n$ vertices. We show each of the 
	four conditions of Definition \ref{monoidalfunctordefn} in turn.
\begin{enumerate}
	\item Let $k, l$ be any two objects in $\mathcal{G}$. Then 
	\begin{equation}
		\mathcal{F}^G(k \otimes l) 
		 = \mathcal{F}^G(k + l) 
		 = (\mathbb{R}^{n})^{\otimes k+l} 
		 = (\mathbb{R}^{n})^{\otimes k} \otimes (\mathbb{R}^{n})^{\otimes l} 
		 = \mathcal{F}^G(k) \otimes \mathcal{F}^G(l)
	\end{equation}
	\item This is Lemma \ref{gHomtensorprod}.
	\item It is clear from the statement of the theorem that $\mathcal{F}^G$ 
		sends the unit object $0$ in $\mathcal{G}$ to $\mathbb{R}$, 
		which is the unit object in $\mathcal{C}^G$.
	\item This is immediate from the definition of a $G$-homomorphism matrix.
\end{enumerate}
	It is clear that the functor $\mathcal{F}^G$ is full, once again by 
	the definition of a $G$-homomorphism matrix.
\end{proof}

\begin{proof}[Proof of Proposition \ref{disconngraph}]
	Suppose that 
	$\bm{H} = (H, \bm{k}, \bm{l})$.
	Let $H_2$ be the subgraph consisting of the subset of free vertices
	(including the edges that are solely between these vertices) 
	that are entirely disconnected from any vertices that are 
	in a connected component containing a non-free vertex.
	Define $H_1$ to be the graph that is $H$ without $H_2$.

	Then, by construction, $H$ is a disjoint union 
	of $H_1$ and $H_2$, as graphs,
	and, if we define
	$\bm{H_1} \coloneqq (H_1, \bm{k}, \bm{l})$, 
	then it is clear that
	\begin{equation}
		X_{\bm{H}}^G
		= 
		c_{\bm{H_2}}^G
		X_{\bm{H_1}}^G
	\end{equation}
	where $c_{\bm{H_2}}^G$ is a constant denoting the number of graph 
	homomorphisms from $H_2$ to $G$, as required.
\end{proof}

\begin{proof}[Proof of Proposition \ref{FrobDuality}]
	Firstly, there is 
	an $\mathbb{R}$--linear isomorphism
	\begin{equation}
		\Hom_{\mathcal{G}}(k,l)
		\rightarrow
		\Hom_{\mathcal{G}}(l+k,0)
	\end{equation}
	that is given on bilabelled graph diagrams by
	\begin{equation}\label{frobdual}
		\scalebox{0.6}{\tikzfig{graphfrob1}}
	\end{equation}
	with inverse given by
	\begin{equation}\label{frobdualinv}
		\begin{aligned}
			\scalebox{0.6}{\tikzfig{graphfrob2}}
		\end{aligned}
	\end{equation}

	Similarly, for any graph $G$ having $n$ vertices,
	there is 
	an $\mathbb{R}$-linear isomorphism
	\begin{equation}
		\Hom_{\Aut(G)}((\mathbb{R}^{n})^{\otimes
		k}, (\mathbb{R}^{n})^{\otimes l})
		\rightarrow
		\Hom_{\Aut(G)}((\mathbb{R}^{n})^{\otimes
		l+k}, \mathbb{R})
	\end{equation}
	that is given by
	\begin{equation}
		X_{\bm{H}}^G 
		\mapsto
		\mathcal{F}^G([\bm{MM}_l])
		\circ
		(Id^{\otimes k} \otimes X_{\bm{H}}^G)
	\end{equation}
	where $Id$ is the $n \times n$ identity matrix,
	with inverse given by
	\begin{equation}
		X_{\bm{H}}^G 
		\mapsto
		(Id^{\otimes l} \otimes X_{\bm{H}}^G)
		\circ
		(\mathcal{F}^G([\bm{UU}_l]) \otimes Id^{\otimes k})
	\end{equation}
	Here, $\bm{MM}_l$ is the $(2l,0)$--bilabelled graph diagram
	\begin{equation}
		\scalebox{0.6}{\tikzfig{graphfrobhat}}
	\end{equation}
	and $\bm{UU}_l$ is its involution, as defined in Definition
	\ref{involutiongraph}.

	Since, for any graph $G$ having $n$ vertices,
	the functor $\mathcal{F}^G$ is strict monoidal, 
	by Theorem \ref{graphfunctor},
	we get that the following diagram commutes:

	\begin{equation}
		\begin{aligned}
			\tikzfig{commsquare}
		\end{aligned}
	\end{equation}

	Since $\mathcal{F}^G$ gives a bijective correspondence
	between all isomorphism classes of bilabelled graphs
	and the spanning set elements for $\Hom_{\Aut(G)}$, as shown
	in Theorem \ref{graphmainthrm},
	we can use the commuting square to obtain a spanning set for
	$\Hom_{\Aut(G)}((\mathbb{R}^{n})^{\otimes
	k+l}, \mathbb{R})$
	from the spanning set for
	$\Hom_{\Aut(G)}((\mathbb{R}^{n})^{\otimes
	k}, (\mathbb{R}^{n})^{\otimes l})$.
	
	Now, fixing $q, m \geq 0$ such that
	$q + m = k + l$,
	we can run the arrows of the commuting square
	in reverse
	to obtain a spanning set for 
	$\Hom_{\Aut(G)}((\mathbb{R}^{n})^{\otimes
	q}, (\mathbb{R}^{n})^{\otimes m})$,
	as required.
\end{proof}

\section{Recovery of the Characterisation of the Equivariant, Linear
Maps for the Symmetric Group} \label{recoverysymm}

We saw in Example \ref{autocompletegraph} that the automorphism group of the
complement of the complete graph, $\Aut(\overline{K_n})$, is the symmetric
group $S_n$.  As a result of applying Theorem \ref{graphmainthrm} to this case,
we can recover the diagram basis for $\Hom_{S_n}((\mathbb{R}^{n})^{\otimes k},
(\mathbb{R}^{n})^{\otimes l})$ for any non-negative integers $k$ and $l$ that first
appeared in \citet{godfrey}.

To recover the diagram basis, we reduce the set of isomorphism classes of
$(k,l)$--bilabelled graphs $\bm{H}$ that we need to consider in
(\ref{autspanres}), as follows.  Firstly, we only need to consider isomorphism
classes of $(k,l)$--bilabelled graphs $\bm{H}$ whose underlying graphs $H$ are
edgeless, since if $H$ has either some edge between two distinct vertices or a
loop, then for there to be a graph homomorphism from $H$ to $\overline{K_n}$,
the graph $\overline{K_n}$ would need to have an edge between two distinct
vertices or a loop. 
Secondly, we only need to consider isomorphism classes of $(k,l)$--bilabelled
graphs $\bm{H}$ whose underlying graph $H$ is edgeless, having at most $n$
vertices, since the image under $\mathcal{F}^{G}$ of any $(k,l)$--bilabelled
graph whose underlying graph is edgeless having more than $n$ vertices would
correspond to a scalar multiple of the image of a $(k,l)$--bilabelled graph
whose underlying graph is edgeless having at most $n$ vertices, Thirdly, we
only need to consider isomorphism classes of $(k,l)$--bilabelled graphs
$\bm{H}$ whose underlying graph $H$ is edgeless having at most $n$ vertices
where no vertex is left \textit{free} in $\bm{H}$, that is, there is no red
vertex in $H$ that is not attached by a black edge to some black vertex, by
Proposition \ref{disconngraph}.

But this subset of isomorphism classes of $(k,l)$--bilabelled graphs is
precisely all set partitions of $\{1, \dots, l+k\}$ having at most $n$
blocks!  By construction, the image of this subset under $\mathcal{F}^{G}$ is a
spanning set of $\Hom_{S_n}((\mathbb{R}^{n})^{\otimes k},
(\mathbb{R}^{n})^{\otimes l})$ and, by a dimension count, the image, in fact,
forms a basis, which is precisely the diagram basis of \citet{godfrey}.

\section{Spanning Set Examples} 

\begin{example}[Diagram Basis for
$
\Hom_{S_n}(\mathbb{R}^{n},
\mathbb{R}^{n})
$]\label{diagbasis11}

Suppose we take the complement of the complete graph having $n$ vertices,
$\overline{K_n}$. There is only one possible way to label the vertices
of this graph, up to all automorphisms: 
\begin{center} 
	\scalebox{0.9}{\tikzfig{compKn}} 
\end{center} 
In Example \ref{autocompletegraph},
we saw that its automorphism group, $\Aut(\overline{K_n})$, is the symmetric group $S_n$.

Since the order of each tensor power space is $1$, by the arguments in Section
	\ref{recoverysymm}, it is enough to consider the isomorphism classes of
	$(1,1)$--bilabelled graphs $\bm{H}$ whose underlying graph $H$ is
	edgeless having at most $n$ vertices where no vertex is left free in
	$\bm{H}$.  Because only at most two vertices cannot be left free for
	$(1,1)$--bilabelled graphs $\bm{H}$, we only need to consider
	isomorphism classes of $(1,1)$--bilabelled graphs $\bm{H}$ whose
	underlying graph $H$ is edgeless having at most two vertices.

This leads to two possibilities, namely $[\bm{H_1}]$ and $[\bm{H_2}]$, where
	$\bm{H_1} = (K_1, (1), (1))$ and where $\bm{H_2} = (\overline{K_2},
	(1), (2))$.  
	%We present their corresponding diagrams 
	They are given in the left hand
	column of Figure \ref{S4diagbasis}.  (Note, for example, that the
	bilabelled graph $(\overline{K_2}, (2), (1))$ is in the same
	isomorphism class as $\bm{H_2}$.)

In the right hand column of Figure \ref{S4diagbasis}, we show the corresponding
	$G$-homomorphism matrices for the case where $G = \overline{K_4}$, that
	is, $n = 4$, with the case for general $n$ being very similar.  Here we
	have used Definition \ref{GHomMatrix} to find the entries of each of
	the matrices. 
To be clear, for general $n$, the $(i,j)$-entry of
	$X_{\bm{H_1}}^{\overline{K_n}}$,
	%(corresponding to the first
	%$(1,1)$--bilabelled graph), 
	by definition, is equal to the
	number of graph homomorphisms from $K_1$ to $\overline{K_n}$ such that
	$1$ is mapped to $i$ and $1$ is mapped to $j$, which is $\delta_{i,j}$.
	By contrast, the $(i,j)$-entry of $X_{\bm{H_2}}^{\overline{K_n}}$,
	%(corresponding to the second $(1,1)$--bilabelled graph), 
	is equal to the number of graph homomorphisms from $\overline{K_2}$ to
	$\overline{K_n}$ such that $2$ is mapped to $i$ and $1$ is mapped to
	$j$, which is $1$ for all $i, j \in [n]$.

In the middle column, we have also shown the equivalent set partition diagrams
	that appear in \citet{pearcecrump} to highlight how the $(1,1)$--bilabelled
	graph diagrams are related to set partition diagrams in this case.
	This is also consistent with the procedure that is given in the
	orange box since the longest path in $\overline{K_n}$ is $0$. Hence,
	only the set partitions of $\{1, 2\}$, expressed as $(2,0)$--bilabelled
	graph diagrams that are then mapped under Frobenius duality to
	$(1,1)$--bilabelled graph diagrams, are needed to obtain a spanning set ---
	%that is, in fact, a
	which is actually a basis ---
	%basis 
	of $\Hom_{S_n}(\mathbb{R}^{n}, \mathbb{R}^{n})$.

\begin{figure}[tb]
\begin{center}
\begin{tblr}{
		colspec = {X[c]X[c]X[c]},
  stretch = 0,
  rowsep = 5pt,
  hlines = {1pt},
  vlines = {1pt},
}
	{$(1,1)$--Bilabelled Graph Diagram $\bm{H}$} & 
	{Set Partition Diagram} & 
	{Standard Basis Element $X_{\bm{H}}^{\overline{K_4}}$ } \\
	\scalebox{0.6}{\tikzfig{graph11sq1}} & 
	\scalebox{0.6}{\tikzfig{orbit11sq1}} & 
	\scalebox{0.85}{
	$
	\NiceMatrixOptions{code-for-first-row = \scriptstyle \color{blue},
                   	   code-for-first-col = \scriptstyle \color{blue}
	}
	\begin{bNiceArray}{*{2}{c}*{2}{c}}[first-row,first-col]
				& 1 	& 2	& 3	& 4 \\
		1		& 1	& 0	& 0	& 0	\\
		2		& 0	& 1	& 0	& 0	\\
		3		& 0	& 0	& 1	& 0	\\
		4		& 0	& 0	& 0	& 1	
	\end{bNiceArray}
	$}
	\\
	\scalebox{0.6}{\tikzfig{graph11sq2}}	& 
	\scalebox{0.6}{\tikzfig{orbit11sq2}} & 
	\scalebox{0.85}{
	$
	\NiceMatrixOptions{code-for-first-row = \scriptstyle \color{blue},
                   	   code-for-first-col = \scriptstyle \color{blue}
	}
	\begin{bNiceArray}{*{2}{c}*{2}{c}}[first-row,first-col]
				& 1 	& 2	& 3	& 4 	\\
		1		& 1	& 1	& 1	& 1	\\
		2		& 1	& 1	& 1	& 1	\\
		3		& 1	& 1	& 1	& 1	\\
		4		& 1	& 1	& 1	& 1	
	\end{bNiceArray}
	$}
\end{tblr}
	\caption{
		We use Theorem \ref{graphmainthrm}
		to obtain the diagram basis of
	$\Hom_{S_4}(\mathbb{R}^{4}, \mathbb{R}^{4})$
	from all of the 
	%isomorphism classes of 
	$(1,1)$--bilabelled graph diagrams.}
	\label{S4diagbasis}
	\end{center}
\end{figure}
\end{example}

\begin{example}[Basis for
$
	\Hom_{D_4}(\mathbb{R}^{4},
	\mathbb{R}^{4})
$] 
\label{D4k1l1ex}

	In Example \ref{autotwocompletetwo}, we said that the automorphism group of two
copies of the complete graph on two vertices, $\Aut(2K_2)$, is
isomorphic to the dihedral group $D_4$ of order $8$.

There are three different ways to label the vertices of the graph $2K_2$, up to all automorphisms, namely:
	\begin{center}
		\scalebox{0.9}{\tikzfig{comp2K2}}
	\end{center}
We will refer to these graphs as $A$, $B$ and $C$, respectively, throughout the rest of this example.

The automorphism group of each graph is isomorphic to $D_4$, but each automorphism group is a
different embedding of $D_4$ in $S_4$.
Said differently, while each automorphism group is isomorphic to $D_4$, the elements in each automorphism group are not the same when thought of as elements of $S_4$.
Specifically, the group corresponding to the first diagram is 
$\langle (1423), (13)(24) \rangle$, 
whereas the second is
$\langle (1432), (12)(34) \rangle$,
and the third is 
$\langle (1342), (12)(34) \rangle$. 

Consequently, when we would like to find
%finding 
a spanning set for
$
	\Hom_{D_4}(\mathbb{R}^{4},
	\mathbb{R}^{4})
$,
we need to consider which embedding of $D_4$ in $S_4$ we are referring to, or, more simply, which labels we have used for the graph $2K_2$.
As a result, we will obtain a spanning set (which we will show is actually a basis) for
$
	\Hom_{D_4}(\mathbb{R}^{4},
	\mathbb{R}^{4})
$,
one for each embedding of $D_4$ in $S_4$. 
Note that these spanning sets, whilst different, will all be isomorphic under a change of basis that reorders the standard basis vectors in
$\mathbb{R}^{4}$. 
This is a consequence of all of the automorphism groups being isomorphic to each other.

We present the bases for each instance of $2K_2$ in Figure \ref{D4diagbasis}.
			
\begin{figure}[tb]
\begin{center}
\begin{tblr}{
		colspec = {X[c]X[c]X[c]X[c]},
  stretch = 0,
  rowsep = 5pt,
  hlines = {1pt},
  vlines = {1pt},
}
	{$(1,1)$--Bilabelled Graph Diagram $\bm{H}$} & 
	{Standard Basis Element $X_{\bm{H}}^{A}$ } &
	{Standard Basis Element $X_{\bm{H}}^{B}$ } &
	{Standard Basis Element $X_{\bm{H}}^{C}$ } \\
	\scalebox{0.6}{\tikzfig{graph11sq1}} & 
	\scalebox{0.85}{
	$
	\NiceMatrixOptions{code-for-first-row = \scriptstyle \color{blue},
                   	   code-for-first-col = \scriptstyle \color{blue}
	}
	\begin{bNiceArray}{*{2}{c}*{2}{c}}[first-row,first-col]
				& 1 	& 2	& 3	& 4 \\
		1		& 1	& 0	& 0	& 0	\\
		2		& 0	& 1	& 0	& 0	\\
		3		& 0	& 0	& 1	& 0	\\
		4		& 0	& 0	& 0	& 1	
	\end{bNiceArray}
	$} &
	\scalebox{0.85}{
	$
	\NiceMatrixOptions{code-for-first-row = \scriptstyle \color{blue},
                   	   code-for-first-col = \scriptstyle \color{blue}
	}
	\begin{bNiceArray}{*{2}{c}*{2}{c}}[first-row,first-col]
				& 1 	& 2	& 3	& 4 \\
		1		& 1	& 0	& 0	& 0	\\
		2		& 0	& 1	& 0	& 0	\\
		3		& 0	& 0	& 1	& 0	\\
		4		& 0	& 0	& 0	& 1	
	\end{bNiceArray}
	$} &
	\scalebox{0.85}{
	$
	\NiceMatrixOptions{code-for-first-row = \scriptstyle \color{blue},
                   	   code-for-first-col = \scriptstyle \color{blue}
	}
	\begin{bNiceArray}{*{2}{c}*{2}{c}}[first-row,first-col]
				& 1 	& 2	& 3	& 4 \\
		1		& 1	& 0	& 0	& 0	\\
		2		& 0	& 1	& 0	& 0	\\
		3		& 0	& 0	& 1	& 0	\\
		4		& 0	& 0	& 0	& 1	
	\end{bNiceArray}
	$}
	\\
	\scalebox{0.6}{\tikzfig{graph11sq2}}	& 
	\scalebox{0.85}{
	$
	\NiceMatrixOptions{code-for-first-row = \scriptstyle \color{blue},
                   	   code-for-first-col = \scriptstyle \color{blue}
	}
	\begin{bNiceArray}{*{2}{c}*{2}{c}}[first-row,first-col]
				& 1 	& 2	& 3	& 4 	\\
		1		& 1	& 1	& 1	& 1	\\
		2		& 1	& 1	& 1	& 1	\\
		3		& 1	& 1	& 1	& 1	\\
		4		& 1	& 1	& 1	& 1	
	\end{bNiceArray}
	$} &
	\scalebox{0.85}{
	$
	\NiceMatrixOptions{code-for-first-row = \scriptstyle \color{blue},
                   	   code-for-first-col = \scriptstyle \color{blue}
	}
	\begin{bNiceArray}{*{2}{c}*{2}{c}}[first-row,first-col]
				& 1 	& 2	& 3	& 4 	\\
		1		& 1	& 1	& 1	& 1	\\
		2		& 1	& 1	& 1	& 1	\\
		3		& 1	& 1	& 1	& 1	\\
		4		& 1	& 1	& 1	& 1	
	\end{bNiceArray}
	$} &
	\scalebox{0.85}{
	$
	\NiceMatrixOptions{code-for-first-row = \scriptstyle \color{blue},
                   	   code-for-first-col = \scriptstyle \color{blue}
	}
	\begin{bNiceArray}{*{2}{c}*{2}{c}}[first-row,first-col]
				& 1 	& 2	& 3	& 4 	\\
		1		& 1	& 1	& 1	& 1	\\
		2		& 1	& 1	& 1	& 1	\\
		3		& 1	& 1	& 1	& 1	\\
		4		& 1	& 1	& 1	& 1	
	\end{bNiceArray}
	$}
	\\
	\scalebox{0.6}{\tikzfig{graph11sq3}}	& 
	\scalebox{0.85}{
	$
	\NiceMatrixOptions{code-for-first-row = \scriptstyle \color{blue},
                   	   code-for-first-col = \scriptstyle \color{blue}
	}
	\begin{bNiceArray}{*{2}{c}*{2}{c}}[first-row,first-col]
				& 1 	& 2	& 3	& 4 	\\
		1		& 0	& 1	& 0	& 0	\\
		2		& 1	& 0	& 0	& 0	\\
		3		& 0	& 0	& 0	& 1	\\
		4		& 0	& 0	& 1	& 0	
	\end{bNiceArray}
	$} &
	\scalebox{0.85}{
	$
	\NiceMatrixOptions{code-for-first-row = \scriptstyle \color{blue},
                   	   code-for-first-col = \scriptstyle \color{blue}
	}
	\begin{bNiceArray}{*{2}{c}*{2}{c}}[first-row,first-col]
				& 1 	& 2	& 3	& 4 	\\
		1		& 0	& 0	& 1	& 0	\\
		2		& 0	& 0	& 0	& 1	\\
		3		& 1	& 0	& 0	& 0	\\
		4		& 0	& 1	& 0	& 0	
	\end{bNiceArray}
	$} &
	\scalebox{0.85}{
	$
	\NiceMatrixOptions{code-for-first-row = \scriptstyle \color{blue},
                   	   code-for-first-col = \scriptstyle \color{blue}
	}
	\begin{bNiceArray}{*{2}{c}*{2}{c}}[first-row,first-col]
				& 1 	& 2	& 3	& 4 	\\
		1		& 0	& 0	& 0	& 1	\\
		2		& 0	& 0	& 1	& 0	\\
		3		& 0	& 1	& 0	& 0	\\
		4		& 1	& 0	& 0	& 0	
	\end{bNiceArray}
	$}
\end{tblr}
	\caption{
	Depending on how $D_4$ is embedded as a subgroup of $S_4$, we obtain a basis
	of $\Hom_{D_4}(\mathbb{R}^{4}, \mathbb{R}^{4})$, where here $D_4$ refers to the specific
	embedding in $S_4$ that is obtained from the different labellings of the vertices of $2K_2$, considered up to
	all automorphisms.}
	\label{D4diagbasis}
	\end{center}
\end{figure}

We focus on finding the basis for the automorphism group of the graph $A$, 
noting that the approach for the other graphs is similar.
We follow the procedure that is given in the orange box.
We see that the longest path $m$ between any two vertices in $A$ is $1$.

\textbf{Step 1:} We calculate all set partitions of $[l+k] = \{1,2\}$ and
express them as $(2,0)$--bilabelled graph diagrams, labelling only the black vertices.

They are given by
\begin{equation} \label{D4setparts}
	\begin{aligned}
		\bm{A_0} = \scalebox{0.6}{\tikzfig{graph11sq1line}}
		\; \text{and} \;
		\bm{B_0} = \scalebox{0.6}{\tikzfig{graph11sq2line}}
	\end{aligned}
\end{equation}

\textbf{Step 2:} From $\bm{A_0}$ and $\bm{B_0}$, we calculate all possible 
$(2,0)$--bilabelled graph diagrams that have only internal red edges between red vertices.

Since the number of red vertices $c$ in $\bm{A_0}$ is $1$,
this implies that the number of edges in the complete graph $K_1$ is $0$,
and so we do not obtain any new $(2,0)$--bilabelled graph diagrams from $\bm{A_0}$.

However, for $\bm{B_0}$:
\begin{itemize}
	\item The number of red vertices, $c$, in $\bm{B_0}$ is $2$. Hence the number of edges in the complete graph $K_2$, $e$, is $1$. 
	\item We now create all possible length $e=1$ strings having values in $0 \rightarrow 2m = 2$. Remove the all $0$ string. 
\end{itemize}
Hence we obtain the length one strings $1$ and $2$.
We use each of these strings to create a new $(2,0)$--bilabelled graph diagram from $\bm{B_0}$,
which we shall call $\bm{B_{1,1}}$ and $\bm{B_{1,2}}$, by inserting $t$ new edges 
between the two red vertices, where $t$ equals $1$ and $2$ respectively.
Hence, $\bm{B_{1,1}}$ and $\bm{B_{1,2}}$ are given as follows:

\begin{equation} \label{D4internaledges}
	\begin{aligned}
		\bm{B_{1,1}} = \scalebox{0.6}{\tikzfig{graph11sq3line}}
		\; \text{;} \;
		\bm{B_{1,2}} = \scalebox{0.6}{\tikzfig{graph11sq4line}}
	\end{aligned}
\end{equation}

\textbf{Step 3:} From $\bm{A_0}$ and $\bm{B_0}$,
we now calculate all possible $(2,0)$--bilabelled graph diagrams that
have only external red edges between red vertices.

For $\bm{A_0}$, as the number of red vertices is $1$, we create all length one strings
having entries in $0$ to $m=1$, removing the all $0$ string. We create a new
$(2,0)$--bilabelled graph diagram from the string $1$ by adding $1$ new red edge outwards from
the red vertex in $\bm{A_0}$, adding in a new red vertex to make this new red 
edge possible.
Hence we obtain
\begin{equation} \label{D4step1externaledges1}
	\begin{aligned}
		\bm{A_{2}} = \scalebox{0.6}{\tikzfig{graph11sq5line}}
	\end{aligned}
\end{equation}

For $\bm{B_0}$, as the number of red vertices is $2$, we create all length two strings
having entries in $0$ to $m=1$, removing the all $0$ string. Hence we create three new
$(2,0)$--bilabelled graph diagrams from the strings $01$, $10$ and $11$, by adding $1$ 
new red edge outwards from the red vertices in $\bm{A_0}$ corresponding to the ones in the string, adding in a new red vertex for each new red edge.
Hence we obtain
\begin{equation} \label{D4step1externaledges2}
	\begin{aligned}
		\bm{B_{2,1}} = \scalebox{0.6}{\tikzfig{graph11sq6line}}
		\; \text{;} \;
		\bm{B_{2,2}} = \scalebox{0.6}{\tikzfig{graph11sq7line}}
		\; \text{;} \;
		\bm{B_{2,3}} = \scalebox{0.6}{\tikzfig{graph11sq8line}}
	\end{aligned}
\end{equation}

\textbf{Step 4:} We calculate all possible $(2,0)$--bilabelled graph diagrams that
have external red edges from $\bm{B_{1,1}}$ and $\bm{B_{2,2}}$.

We do not create any new $(2,0)$--bilabelled graph diagrams in this step as 
the set \textit{originals} is empty for 
%each of 
$\bm{B_{1,1}}$ and $\bm{B_{2,2}}$.

\textbf{Step 5:} As $A$ does not have any loops, we immediately move onto Step 6.

\textbf{Step 6:} We now apply Frobenius duality to each $(2,0)$--bilabelled graph diagram
found in Steps $1$ to $5$ inclusive to obtain their form as $(1,1)$--bilabelled graph diagrams.

This is equivalent to dragging the black vertex labelled $1$ up to the top row.
At this stage, we choose to arbitrarily label the red vertices as well.
%We then label the red vertices instead, with it being understood that the black
%vertex in the top row has the label $1$ and the black vertex in the bottom row
%has the label $2$.
Using the same names for the bilabelled graph diagrams, we obtain:

\begin{equation} \label{D411i}
	\begin{aligned}
		\bm{A_0} = \scalebox{0.6}{\tikzfig{graph11sq1}}
		\; \text{;} \;
		\bm{B_0} = \scalebox{0.6}{\tikzfig{graph11sq2}}
		\; \text{;} \;
		\bm{B_{1,1}} = \scalebox{0.6}{\tikzfig{graph11sq3}}
		\; \text{;} \;
		\bm{B_{1,2}} = \scalebox{0.6}{\tikzfig{graph11sq4}}
	\end{aligned}
\end{equation}

\begin{equation} \label{D411ii}
	\begin{aligned}
		\bm{A_{2}} = \scalebox{0.6}{\tikzfig{graph11sq5}}	
		\; \text{;} \;
		\bm{B_{2,1}} = \scalebox{0.6}{\tikzfig{graph11sq6}}
		\; \text{;} \;
		\bm{B_{2,2}} = \scalebox{0.6}{\tikzfig{graph11sq7}}
		\; \text{;} \;
		\bm{B_{2,3}} = \scalebox{0.6}{\tikzfig{graph11sq8}}
	\end{aligned}
\end{equation}

\textbf{Step 7:} We calculate the $A$-homomorphism matrices 
that
correspond to the $(1,1)$--bilabelled graph diagrams given in Step 6.
We remove all duplicate matrices as well as any all zero matrices, 
weight those that remain, and then add
them together to obtain the weight matrix for an $\Aut(A) \cong D_4$-equivariant linear
layer function from 
$\mathbb{R}^4$ to 
$\mathbb{R}^4$.

The $A$-homomorphism matrices that correspond to the $(1,1)$--bilabelled graph diagrams
given in (\ref{D411i}) are

	\begin{equation}
	\NiceMatrixOptions{code-for-first-row = \scriptstyle \color{blue},
                   	   code-for-first-col = \scriptstyle \color{blue}
	}
	\begin{bNiceArray}{*{4}{c}}[first-row,first-col]
				& 1 	& 2	& 3	& 4	 	\\
		1		& 1	& 0	& 0	& 0		\\
		2		& 0	& 1	& 0	& 0		\\
		3		& 0	& 0	& 1	& 0		\\	
		4		& 0	& 0	& 0	& 1
	\end{bNiceArray}
				\; \text{; } \;
	\NiceMatrixOptions{code-for-first-row = \scriptstyle \color{blue},
                   	   code-for-first-col = \scriptstyle \color{blue}
	}
	\begin{bNiceArray}{*{4}{c}}[first-row,first-col]
				& 1 	& 2	& 3	& 4	 	\\
		1		& 1	& 1	& 1	& 1		\\
		2		& 1	& 1	& 1	& 1		\\
		3		& 1	& 1	& 1	& 1		\\	
		4		& 1	& 1	& 1	& 1
	\end{bNiceArray}
				\; \text{; } \;
	\NiceMatrixOptions{code-for-first-row = \scriptstyle \color{blue},
                   	   code-for-first-col = \scriptstyle \color{blue}
	}
	\begin{bNiceArray}{*{4}{c}}[first-row,first-col]
				& 1 	& 2	& 3	& 4	 	\\
		1		& 0	& 1	& 0	& 0		\\
		2		& 1	& 0	& 0	& 0		\\
		3		& 0	& 0	& 0	& 1		\\	
		4		& 0	& 0	& 1	& 0
	\end{bNiceArray}
				\; \text{and } \;
	\NiceMatrixOptions{code-for-first-row = \scriptstyle \color{blue},
                   	   code-for-first-col = \scriptstyle \color{blue}
	}
	\begin{bNiceArray}{*{4}{c}}[first-row,first-col]
				& 1 	& 2	& 3	& 4	 	\\
		1		& 1	& 0	& 0	& 0		\\
		2		& 0	& 1	& 0	& 0		\\
		3		& 0	& 0	& 1	& 0		\\	
		4		& 0	& 0	& 0	& 1
	\end{bNiceArray}
	\end{equation}

whereas the $A$-homomorphism matrices that correspond to the $(1,1)$--bilabelled graph diagrams given in (\ref{D411ii}) are
	\begin{equation} 
	%$
	\NiceMatrixOptions{code-for-first-row = \scriptstyle \color{blue},
                   	   code-for-first-col = \scriptstyle \color{blue}
	}
	\begin{bNiceArray}{*{4}{c}}[first-row,first-col]
				& 1 	& 2	& 3	& 4	 	\\
		1		& 1	& 0	& 0	& 0		\\
		2		& 0	& 1	& 0	& 0		\\
		3		& 0	& 0	& 1	& 0		\\	
		4		& 0	& 0	& 0	& 1
	\end{bNiceArray}
				\; \text{; } \;
	\NiceMatrixOptions{code-for-first-row = \scriptstyle \color{blue},
                   	   code-for-first-col = \scriptstyle \color{blue}
	}
	\begin{bNiceArray}{*{4}{c}}[first-row,first-col]
				& 1 	& 2	& 3	& 4	 	\\
		1		& 1	& 1	& 1	& 1		\\
		2		& 1	& 1	& 1	& 1		\\
		3		& 1	& 1	& 1	& 1		\\	
		4		& 1	& 1	& 1	& 1
	\end{bNiceArray}
				\; \text{; } \;
	\NiceMatrixOptions{code-for-first-row = \scriptstyle \color{blue},
                   	   code-for-first-col = \scriptstyle \color{blue}
	}
	\begin{bNiceArray}{*{4}{c}}[first-row,first-col]
				& 1 	& 2	& 3	& 4	 	\\
		1		& 1	& 1	& 1	& 1		\\
		2		& 1	& 1	& 1	& 1		\\
		3		& 1	& 1	& 1	& 1		\\	
		4		& 1	& 1	& 1	& 1
	\end{bNiceArray}
				\; \text{and } \;
	\NiceMatrixOptions{code-for-first-row = \scriptstyle \color{blue},
                   	   code-for-first-col = \scriptstyle \color{blue}
	}
	\begin{bNiceArray}{*{4}{c}}[first-row,first-col]
				& 1 	& 2	& 3	& 4	 	\\
		1		& 1	& 1	& 1	& 1		\\
		2		& 1	& 1	& 1	& 1		\\
		3		& 1	& 1	& 1	& 1		\\	
		4		& 1	& 1	& 1	& 1
	\end{bNiceArray}
	\end{equation}
respectively.

Clearly, only the $A$-homomorphism matrices corresponding to
$\bm{A_0}$, $\bm{B_0}$ and $\bm{B_{1,1}}$ are unique, and so
they form a spanning set for
$
	\Hom_{D_4}(\mathbb{R}^{4},
	\mathbb{R}^{4})
$,
for $D_4 = \Aut(A) = \langle (1423), (12)(34) \rangle$, considered as a subgroup of $S_4$.

Hence the weight matrix that we obtain for an $\Aut(A) \cong D_4$-equivariant linear
layer function from 
$\mathbb{R}^4$ to 
$\mathbb{R}^4$ is
\begin{equation}
	\NiceMatrixOptions{code-for-first-row = \scriptstyle \color{blue},
                   	   code-for-first-col = \scriptstyle \color{blue}
	}
	\begin{bNiceArray}{*{4}{c}}[first-row,first-col]
				& 1 	& 2	& 3	& 4	 	\\
		1		& \lambda_{1,2}	& \lambda_{2,3}	& \lambda_2	& \lambda_2		\\
		2		& \lambda_{2,3}	& \lambda_{1,2}	& \lambda_2	& \lambda_2		\\
		3		& \lambda_2	& \lambda_2	& \lambda_{1,2}	& \lambda_{2,3}		\\	
		4		& \lambda_2	& \lambda_2	& \lambda_{2,3}	& \lambda_{1,2}
	\end{bNiceArray}
\end{equation}
for weights $\lambda_1, \lambda_2, \lambda_3 \in \mathbb{R}$, where
$\lambda_{i,j}$ means $\lambda_i + \lambda_j$.

Note that, for this example, each spanning set corresponding to an instance of $2K_2$ happens to be a basis of
$\Hom_{D_4}(\mathbb{R}^{4}, \mathbb{R}^{4})$.
Indeed, for each instance of $2K_2$, the subset consisting of the first two matrices in each column in Figure \ref{D4diagbasis}
is a basis for 
	$\Hom_{S_4}(\mathbb{R}^{4}, \mathbb{R}^{4})$,
	by Example \ref{diagbasis11},
and since $D_4$ is a proper subgroup of $S_4$ and the spanning set for 
	$\Hom_{D_4}(\mathbb{R}^{4}, \mathbb{R}^{4})$
	includes one further matrix that is not a linear combination of these two matrices, this implies that it must be a basis. 

	Note further, by Example \ref{autocyclegraph},
	that we could have used the cycle graph $C_4$ to obtain automorphism groups that are isomorphic to $D_4$. In fact, if we label the cycle graphs as follows, we obtain
		\begin{center}
			\scalebox{0.9}{\tikzfig{compC4}}
		\end{center}
	which are the complements of the three instances $A, B, C$ of $2K_2$.
	This is clear since $C_4 = \overline{2K_2}$. 
	In fact, in this case, we actually have that $\Aut(C_4) = \Aut(2K_2)$, considered as subgroups of $S_4$, for each of the three possible labellings of the vertices.
	
	However, the basis for
	$\Hom_{D_4}(\mathbb{R}^{4}, \mathbb{R}^{4})$
	for each version of $D_4$ coming from the three instances $\overline{A}, \overline{B}, \overline{C}$ of $C_4$ is different, but related, to each of those for the three instances $A, B, C$ of $2K_2$.
	We present these bases in Figure \ref{D4diagbasis2}.
	Note that only the matrix corresponding to the third $(1,1)$--bilabelled graph diagram $\bm{B_{1,1}}$ has changed, so even though the embeddings of $D_4$ in $S_4$ are the same for a given labelling of the vertices, how those vertices are connected (resulting in either the square $C_4$ or its complement $2K_2$) affects the basis that we obtain for 
	$\Hom_{D_4}(\mathbb{R}^{4}, \mathbb{R}^{4})$.
	In fact, it is clear that, for each of $G = A, B, C$, 
	%since the $(1,1)$--bilabelled graph diagrams are, in order, $\bm{M}^{2,1} \circ \bm{M}^{1,2}$, $\bm{M}^{1,0} \otimes \bm{M}^{0,1}$, and $\bm{A}$,
	we have that
	\begin{equation}
		X_{\bm{B_{1,1}}}^{\overline{G}} 
		=
		J - I - 
		X_{\bm{B_{1,1}}}^{G} 
	\end{equation}
	where $J$ is the all ones matrix, and $I$ is the identity matrix.
	By Example \ref{adjacencyex},
	we see that this is, in fact, the equation
	\begin{equation}
		A_{\overline{G}} 
		=
		J - I - 
		A_{G} 
	\end{equation}
	which describes the commonly known result of how to obtain the adjacency matrix of the complement of a graph $G$ from the adjacency matrix of the original graph $G$!

\begin{figure}[tb]
\begin{center}
\begin{tblr}{
		colspec = {X[c]X[c]X[c]X[c]},
  stretch = 0,
  rowsep = 5pt,
  hlines = {1pt},
  vlines = {1pt},
}
	{$(1,1)$--Bilabelled Graph Diagram $\bm{H}$} & 
	{Standard Basis Element $X_{\bm{H}}^{\overline{A}}$ } &
	{Standard Basis Element $X_{\bm{H}}^{\overline{B}}$ } &
	{Standard Basis Element $X_{\bm{H}}^{\overline{C}}$ } \\
	\scalebox{0.6}{\tikzfig{graph11sq1}} & 
	\scalebox{0.85}{
	$
	\NiceMatrixOptions{code-for-first-row = \scriptstyle \color{blue},
                   	   code-for-first-col = \scriptstyle \color{blue}
	}
	\begin{bNiceArray}{*{2}{c}*{2}{c}}[first-row,first-col]
				& 1 	& 2	& 3	& 4 \\
		1		& 1	& 0	& 0	& 0	\\
		2		& 0	& 1	& 0	& 0	\\
		3		& 0	& 0	& 1	& 0	\\
		4		& 0	& 0	& 0	& 1	
	\end{bNiceArray}
	$} &
	\scalebox{0.85}{
	$
	\NiceMatrixOptions{code-for-first-row = \scriptstyle \color{blue},
                   	   code-for-first-col = \scriptstyle \color{blue}
	}
	\begin{bNiceArray}{*{2}{c}*{2}{c}}[first-row,first-col]
				& 1 	& 2	& 3	& 4 \\
		1		& 1	& 0	& 0	& 0	\\
		2		& 0	& 1	& 0	& 0	\\
		3		& 0	& 0	& 1	& 0	\\
		4		& 0	& 0	& 0	& 1	
	\end{bNiceArray}
	$} &
	\scalebox{0.85}{
	$
	\NiceMatrixOptions{code-for-first-row = \scriptstyle \color{blue},
                   	   code-for-first-col = \scriptstyle \color{blue}
	}
	\begin{bNiceArray}{*{2}{c}*{2}{c}}[first-row,first-col]
				& 1 	& 2	& 3	& 4 \\
		1		& 1	& 0	& 0	& 0	\\
		2		& 0	& 1	& 0	& 0	\\
		3		& 0	& 0	& 1	& 0	\\
		4		& 0	& 0	& 0	& 1	
	\end{bNiceArray}
	$}
	\\
	\scalebox{0.6}{\tikzfig{graph11sq2}}	& 
	\scalebox{0.85}{
	$
	\NiceMatrixOptions{code-for-first-row = \scriptstyle \color{blue},
                   	   code-for-first-col = \scriptstyle \color{blue}
	}
	\begin{bNiceArray}{*{2}{c}*{2}{c}}[first-row,first-col]
				& 1 	& 2	& 3	& 4 	\\
		1		& 1	& 1	& 1	& 1	\\
		2		& 1	& 1	& 1	& 1	\\
		3		& 1	& 1	& 1	& 1	\\
		4		& 1	& 1	& 1	& 1	
	\end{bNiceArray}
	$} &
	\scalebox{0.85}{
	$
	\NiceMatrixOptions{code-for-first-row = \scriptstyle \color{blue},
                   	   code-for-first-col = \scriptstyle \color{blue}
	}
	\begin{bNiceArray}{*{2}{c}*{2}{c}}[first-row,first-col]
				& 1 	& 2	& 3	& 4 	\\
		1		& 1	& 1	& 1	& 1	\\
		2		& 1	& 1	& 1	& 1	\\
		3		& 1	& 1	& 1	& 1	\\
		4		& 1	& 1	& 1	& 1	
	\end{bNiceArray}
	$} &
	\scalebox{0.85}{
	$
	\NiceMatrixOptions{code-for-first-row = \scriptstyle \color{blue},
                   	   code-for-first-col = \scriptstyle \color{blue}
	}
	\begin{bNiceArray}{*{2}{c}*{2}{c}}[first-row,first-col]
				& 1 	& 2	& 3	& 4 	\\
		1		& 1	& 1	& 1	& 1	\\
		2		& 1	& 1	& 1	& 1	\\
		3		& 1	& 1	& 1	& 1	\\
		4		& 1	& 1	& 1	& 1	
	\end{bNiceArray}
	$}
	\\
	\scalebox{0.6}{\tikzfig{graph11sq3}}	& 
	\scalebox{0.85}{
	$
	\NiceMatrixOptions{code-for-first-row = \scriptstyle \color{blue},
                   	   code-for-first-col = \scriptstyle \color{blue}
	}
	\begin{bNiceArray}{*{2}{c}*{2}{c}}[first-row,first-col]
				& 1 	& 2	& 3	& 4 	\\
		1		& 0	& 0 	& 1	& 1	\\
		2		& 0	& 0	& 1	& 1	\\
		3		& 1	& 1	& 0	& 0	\\
		4		& 1	& 1	& 0	& 0	
	\end{bNiceArray}
	$} &
	\scalebox{0.85}{
	$
	\NiceMatrixOptions{code-for-first-row = \scriptstyle \color{blue},
                   	   code-for-first-col = \scriptstyle \color{blue}
	}
	\begin{bNiceArray}{*{2}{c}*{2}{c}}[first-row,first-col]
				& 1 	& 2	& 3	& 4 	\\
		1		& 0	& 1	& 0	& 1	\\
		2		& 1	& 0	& 1	& 0	\\
		3		& 0	& 1	& 0	& 1	\\
		4		& 1	& 0	& 1	& 0	
	\end{bNiceArray}
	$} &
	\scalebox{0.85}{
	$
	\NiceMatrixOptions{code-for-first-row = \scriptstyle \color{blue},
                   	   code-for-first-col = \scriptstyle \color{blue}
	}
	\begin{bNiceArray}{*{2}{c}*{2}{c}}[first-row,first-col]
				& 1 	& 2	& 3	& 4 	\\
		1		& 0	& 1	& 1	& 0	\\
		2		& 1	& 0	& 0	& 1	\\
		3		& 1	& 0	& 0	& 1	\\
		4		& 0	& 1	& 1	& 0	
	\end{bNiceArray}
	$}
\end{tblr}
	\caption{
		Choosing the cycle graph $C_4$ instead to obtain an embedding of $D_4$ inside $S_4$,
		we obtain a basis of
		$\Hom_{D_4}(\mathbb{R}^{4}, \mathbb{R}^{4})$, one for each possible labelling of the vertices
		of $C_4$,
		that is related to the basis obtained from the complement graph, $2K_2$.
	}
	\label{D4diagbasis2}
	\end{center}
\end{figure}

\end{example}

\begin{example}[Spanning Set for
$
	\Hom_{S_2}(\mathbb{R}^{3},
	\mathbb{R}^{3})
$]
\label{S2basisR3}

	It is clear that, for a graph $G$ having $n = 3$ vertices such that its
automorphism group is $S_2$,
there are three different ways to label the vertices of the graph,
up to all automorphisms, namely:
\begin{center}
	\scalebox{0.9}{\tikzfig{compS2}}
\end{center}
We will refer to these graphs as $A$, $B$ and $C$, respectively, throughout the rest of this example.

We focus on graph $A$, with the approach for the other graphs being similar.
In this case, 
$S_2 = \Aut(A) = \{id, (12)\}$ as a subgroup of $S_3$.

Given that the longest path $m$ between any two vertices in $A$ is $1$,
and that $l = k = 1$, we see that, by following the procedure that is given in
the orange box, Steps $1$ to $6$ inclusive are exactly the same as in
Example \ref{D4k1l1ex}.
Hence the $(1,1)$--bilabelled graph diagrams that we need to consider are
those that are given in (\ref{D411i}) and (\ref{D411ii}).

However, given that the graph $A$ is different from the one that appears in Example \ref{D4k1l1ex},
the $A$-homomorphism matrices that correspond to the $(1,1)$--bilabelled graph diagrams
given in (\ref{D411i}) are now
\begin{equation}
	%$
	\NiceMatrixOptions{code-for-first-row = \scriptstyle \color{blue},
                   	   code-for-first-col = \scriptstyle \color{blue}
	}
	\begin{bNiceArray}{*{3}{c}}[first-row,first-col]
				& 1 	& 2	& 3	 	\\
		1		& 1	& 0	& 0		\\
		2		& 0	& 1	& 0		\\
		3		& 0	& 0	& 1	
	\end{bNiceArray}
				\; \text{; } \;
	\NiceMatrixOptions{code-for-first-row = \scriptstyle \color{blue},
                   	   code-for-first-col = \scriptstyle \color{blue}
	}
	\begin{bNiceArray}{*{3}{c}}[first-row,first-col]
				& 1 	& 2	& 3	 	\\
		1		& 1	& 1	& 1		\\
		2		& 1	& 1	& 1		\\
		3		& 1	& 1	& 1	
	\end{bNiceArray}
				\; \text{; } \;
	\NiceMatrixOptions{code-for-first-row = \scriptstyle \color{blue},
                   	   code-for-first-col = \scriptstyle \color{blue}
	}
	\begin{bNiceArray}{*{3}{c}}[first-row,first-col]
				& 1 	& 2	& 3	 	\\
		1		& 0	& 1	& 0		\\
		2		& 1	& 0	& 0		\\
		3		& 0	& 0	& 0	
	\end{bNiceArray}
				\; \text{and } \;
		\NiceMatrixOptions{code-for-first-row = \scriptstyle \color{blue},
                   	   code-for-first-col = \scriptstyle \color{blue}
	}
	\begin{bNiceArray}{*{3}{c}}[first-row,first-col]
				& 1 	& 2	& 3	 	\\
		1		& 1	& 0	& 0		\\
		2		& 0	& 1	& 0		\\
		3		& 0	& 0	& 0	
	\end{bNiceArray}
\end{equation}
and the $A$-homomorphism matrices that correspond to the $(1,1)$--bilabelled graph diagrams given in (\ref{D411ii}) are now
\begin{equation}
	%$
	\NiceMatrixOptions{code-for-first-row = \scriptstyle \color{blue},
                   	   code-for-first-col = \scriptstyle \color{blue}
	}
	\begin{bNiceArray}{*{3}{c}}[first-row,first-col]
				& 1 	& 2	& 3	 	\\
		1		& 1	& 0	& 0		\\
		2		& 0	& 1	& 0		\\
		3		& 0	& 0	& 0	
	\end{bNiceArray}
				\; \text{; } \;
	\NiceMatrixOptions{code-for-first-row = \scriptstyle \color{blue},
                   	   code-for-first-col = \scriptstyle \color{blue}
	}
	\begin{bNiceArray}{*{3}{c}}[first-row,first-col]
				& 1 	& 2	& 3	 	\\
		1		& 1	& 1	& 0		\\
		2		& 1	& 1	& 0		\\
		3		& 1	& 1	& 0	
	\end{bNiceArray}
				\; \text{; } \;
	\NiceMatrixOptions{code-for-first-row = \scriptstyle \color{blue},
                   	   code-for-first-col = \scriptstyle \color{blue}
	}
	\begin{bNiceArray}{*{3}{c}}[first-row,first-col]
				& 1 	& 2	& 3	 	\\
		1		& 1	& 1	& 1		\\
		2		& 1	& 1	& 1		\\
		3		& 0	& 0	& 0	
	\end{bNiceArray}
				\; \text{and } \;
	\NiceMatrixOptions{code-for-first-row = \scriptstyle \color{blue},
                   	   code-for-first-col = \scriptstyle \color{blue}
	}
	\begin{bNiceArray}{*{3}{c}}[first-row,first-col]
				& 1 	& 2	& 3	 	\\
		1		& 1	& 1	& 0		\\
		2		& 1	& 1	& 0		\\
		3		& 0	& 0	& 0	
	\end{bNiceArray}
\end{equation}

All of the $A$-homomorphism matrices are unique except for those coming from
$\bm{B_{1,2}}$ and $\bm{A_2}$, and so by removing the matrix corresponding to $\bm{A_2}$,
we obtain a spanning set
$
	\Hom_{S_2}(\mathbb{R}^{3},
	\mathbb{R}^{3})
$,
for $S_2 = \Aut(A) = \langle (id, (12) \rangle$, as a subgroup of $S_3$.
We label the remaining matrices in the order in which they are presented
using the index set $\{1, \dots, 7\}$.

Hence the weight matrix that we obtain for an $\Aut(A) \cong S_2$-equivariant linear
layer function from 
$\mathbb{R}^3$ to 
$\mathbb{R}^3$ is
\begin{equation}
	\NiceMatrixOptions{code-for-first-row = \scriptstyle \color{blue},
                   	   code-for-first-col = \scriptstyle \color{blue}
	}
	\begin{bNiceArray}{*{3}{c}}[first-row,first-col]
				& 1 	& 2	& 3	\\
		1		& \lambda_{1,2,4,5,6,7}	& \lambda_{2,3,5,6,7}	& \lambda_{2,6}	\\
		2		& \lambda_{2,3,5,6,7}	& \lambda_{1,2,4,5,6,7}	& \lambda_{2,6}	\\
		3		& \lambda_{2,5}	& \lambda_{2,5}	& \lambda_{1,2}	
	\end{bNiceArray}
\end{equation}
for weights $\lambda_1, \dots, \lambda_7 \in \mathbb{R}$. 

We can find the elements of the spanning set for
$
	\Hom_{S_2}(\mathbb{R}^{3},
	\mathbb{R}^{3})
$
for the other two embeddings of $S_2$ in $S_3$, given by $\Aut(B)$ and $\Aut(C)$.
For $\Aut(B)$, we perform the permutation $(13)$ on each index of the rows and columns 
of the spanning set matrices found for $\Aut(A)$, and for $\Aut(C)$,
we perform the permutation $(23)$ instead.

\end{example}

\begin{example}[Spanning Set for
$
	\Hom_{S_2}((\mathbb{R}^{3})^{\otimes 2},
	\mathbb{R}^{3})
$]

	We refer to the same three graphs, $A, B, C$, that were given in Example \ref{S2basisR3},
and, once again, focus on graph $A$, with the approach for the other graphs being similar.

Recall that, in this case, $S_2 = \Aut(A) = \{id, (12)\}$, as a subgroup of $S_3$. 
We still have that the longest path in $A$ is $1$.
We follow the procedure that is given in the orange box.
However, in order to keep the $A$-homomorphism matrices 
in close proximity with
the bilabelled graph diagrams that they correspond to, 
%on the page,
we have chosen to apply Step 6 and the first part of Step 7 during Steps 1 through 5
inclusive.

\textbf{Step 1:} We calculate all set partitions of $[l+k] = \{1,2,3\}$ and
express them as $(3,0)$--bilabelled graph diagrams, labelling only the black vertices.

They are given by

\begin{equation} \label{S2setparts21}
	\begin{aligned}
		\bm{A_0} = \scalebox{0.6}{\tikzfig{graph21sq1line}}
		\; \text{;} \;
		\bm{B_0} = \scalebox{0.6}{\tikzfig{graph21sq2line}}
		\; \text{;} \;
		\bm{C_0} = \scalebox{0.6}{\tikzfig{graph21sq3line}}
		\; \text{;} \;
		\bm{D_0} = \scalebox{0.6}{\tikzfig{graph21sq4line}}
		\; \text{and} \;
		\bm{E_0} = \scalebox{0.6}{\tikzfig{graph21sq5line}}
	\end{aligned}
\end{equation}

and correspond, after Frobenius duality, to the $A$-homomorphism matrices

\begin{equation}
	\scalebox{0.75}{
	$
	\NiceMatrixOptions{code-for-first-row = \scriptstyle \color{blue},
                   	   code-for-first-col = \scriptstyle \color{blue}
	}
	\begin{bNiceArray}{*{3}{c}|*{3}{c}|*{3}{c}}[first-row,first-col]
				& 1,1 	& 1,2	& 1,3	& 2,1 	& 2,2	& 2,3	& 3,1 	& 3,2	& 3,3	\\
		1		& 1	& 0	& 0	& 0	& 0	& 0	& 0	& 0	& 0	\\
		2		& 0	& 0	& 0	& 0	& 1	& 0	& 0	& 0	& 0	\\
		3		& 0	& 0	& 0	& 0	& 0	& 0	& 0	& 0	& 1
	\end{bNiceArray}
	$}
	\; \text{;} \;
	\scalebox{0.75}{
	$
	\NiceMatrixOptions{code-for-first-row = \scriptstyle \color{blue},
                   	   code-for-first-col = \scriptstyle \color{blue}
	}
	\begin{bNiceArray}{*{3}{c}|*{3}{c}|*{3}{c}}[first-row,first-col]
				& 1,1 	& 1,2	& 1,3	& 2,1 	& 2,2	& 2,3	& 3,1 	& 3,2	& 3,3	\\
		1		& 1	& 1	& 1	& 0	& 0	& 0	& 0	& 0	& 0	\\
		2		& 0	& 0	& 0	& 1	& 1	& 1	& 0	& 0	& 0	\\
		3		& 0	& 0	& 0	& 0	& 0	& 0	& 1	& 1	& 1
	\end{bNiceArray}
	$}
	\; \text{;} \;
	\scalebox{0.75}{
	$
	\NiceMatrixOptions{code-for-first-row = \scriptstyle \color{blue},
                   	   code-for-first-col = \scriptstyle \color{blue}
	}
	\begin{bNiceArray}{*{3}{c}|*{3}{c}|*{3}{c}}[first-row,first-col]
				& 1,1 	& 1,2	& 1,3	& 2,1 	& 2,2	& 2,3	& 3,1 	& 3,2	& 3,3	\\
		1		& 1	& 0	& 0	& 1	& 0	& 0	& 1	& 0	& 0	\\
		2		& 0	& 1	& 0	& 0	& 1	& 0	& 0	& 1	& 0	\\
		3		& 0	& 0	& 1	& 0	& 0	& 1	& 0	& 0	& 1
	\end{bNiceArray}
	$}
\end{equation}
\begin{equation}
	\scalebox{0.75}{
	$
	\NiceMatrixOptions{code-for-first-row = \scriptstyle \color{blue},
                   	   code-for-first-col = \scriptstyle \color{blue}
	}
	\begin{bNiceArray}{*{3}{c}|*{3}{c}|*{3}{c}}[first-row,first-col]
				& 1,1 	& 1,2	& 1,3	& 2,1 	& 2,2	& 2,3	& 3,1 	& 3,2	& 3,3	\\
		1		& 1	& 0	& 0	& 0	& 1	& 0	& 0	& 0	& 1	\\
		2		& 1	& 0	& 0	& 0	& 1	& 0	& 0	& 0	& 1	\\
		3		& 1	& 0	& 0	& 0	& 1	& 0	& 0	& 0	& 1
	\end{bNiceArray}
	$}
	\; \text{and} \;
		\scalebox{0.75}{
	$
	\NiceMatrixOptions{code-for-first-row = \scriptstyle \color{blue},
                   	   code-for-first-col = \scriptstyle \color{blue}
	}
	\begin{bNiceArray}{*{3}{c}|*{3}{c}|*{3}{c}}[first-row,first-col]
				& 1,1 	& 1,2	& 1,3	& 2,1 	& 2,2	& 2,3	& 3,1 	& 3,2	& 3,3	\\
		1		& 1	& 1	& 1	& 1	& 1	& 1	& 1	& 1	& 1	\\
		2		& 1	& 1	& 1	& 1	& 1	& 1	& 1	& 1	& 1	\\
		3		& 1	& 1	& 1	& 1	& 1	& 1	& 1	& 1	& 1
	\end{bNiceArray}
	$}
\end{equation}

\textbf{Step 2:} We calculate all possible $(3,0)$--bilabelled graph diagrams that
have only internal red edges between red vertices from the five bilabelled graph diagrams 
given in Step 1.

Clearly there are no new bilabelled graph diagrams coming from $\bm{A_0}$ in this step.

However, for $\bm{B_0}$, $\bm{C_0}$ and $\bm{D_0}$, calling these generically by $\bm{H}$,
we have that
\begin{itemize}
	\item The number of red vertices, $c$, in $\bm{H}$ is $2$. Hence the number of edges in the complete graph $K_2$, $e$, is $1$. 
	\item We now create all possible length $e=1$ strings having values in $0 \rightarrow 2m = 2$. Remove the all $0$ string. 
\end{itemize}
Hence we obtain the length one strings $1$ and $2$.
We use these strings to create two new $(3,0)$--bilabelled graph diagrams from each of
$\bm{B_0}$, $\bm{C_0}$ and $\bm{D_0}$,
by inserting $t$ new edges between the two red vertices,
where $t$ equals $1$ and $2$, respectively.
The new $(3,0)$--bilabelled graph diagrams are given as follows:

\begin{equation} \label{S2internaledgesi}
	\begin{aligned}
		\bm{B_{1,1}} = \scalebox{0.6}{\tikzfig{graph21sq6line}}
		\; \text{;} \;
		\bm{B_{1,2}} = \scalebox{0.6}{\tikzfig{graph21sq7line}}
		\; \text{;} \;
		\bm{C_{1,1}} = \scalebox{0.6}{\tikzfig{graph21sq8line}}
	\end{aligned}
\end{equation}

\begin{equation} \label{S2internaledgesii}
	\begin{aligned}
		\bm{C_{1,2}} = \scalebox{0.6}{\tikzfig{graph21sq9line}}
		\; \text{;} \;
		\bm{D_{1,1}} = \scalebox{0.6}{\tikzfig{graph21sq10line}}
		\; \text{;} \;
		\bm{D_{1,2}} = \scalebox{0.6}{\tikzfig{graph21sq11line}}
	\end{aligned}
\end{equation}

They correspond, after Frobenius duality, to the $A$-homomorphism matrices

\begin{equation}
	\scalebox{0.75}{
	$
	\NiceMatrixOptions{code-for-first-row = \scriptstyle \color{blue},
                   	   code-for-first-col = \scriptstyle \color{blue}
	}
	\begin{bNiceArray}{*{3}{c}|*{3}{c}|*{3}{c}}[first-row,first-col]
				& 1,1 	& 1,2	& 1,3	& 2,1 	& 2,2	& 2,3	& 3,1 	& 3,2	& 3,3	\\
		1		& 0	& 1	& 0	& 0	& 0	& 0	& 0	& 0	& 0	\\
		2		& 0	& 0	& 0	& 1	& 0	& 0	& 0	& 0	& 0	\\
		3		& 0	& 0	& 0	& 0	& 0	& 0	& 0	& 0	& 0
	\end{bNiceArray}
	$}
	\; \text{;} \;
		\scalebox{0.75}{
	$
	\NiceMatrixOptions{code-for-first-row = \scriptstyle \color{blue},
                   	   code-for-first-col = \scriptstyle \color{blue}
	}
	\begin{bNiceArray}{*{3}{c}|*{3}{c}|*{3}{c}}[first-row,first-col]
				& 1,1 	& 1,2	& 1,3	& 2,1 	& 2,2	& 2,3	& 3,1 	& 3,2	& 3,3	\\
		1		& 1	& 0	& 0	& 0	& 0	& 0	& 0	& 0	& 0	\\
		2		& 0	& 0	& 0	& 0	& 1	& 0	& 0	& 0	& 0	\\
		3		& 0	& 0	& 0	& 0	& 0	& 0	& 0	& 0	& 0
	\end{bNiceArray}
	$}
	\; \text{;} \;
		\scalebox{0.75}{
	$
	\NiceMatrixOptions{code-for-first-row = \scriptstyle \color{blue},
                   	   code-for-first-col = \scriptstyle \color{blue}
	}
	\begin{bNiceArray}{*{3}{c}|*{3}{c}|*{3}{c}}[first-row,first-col]
				& 1,1 	& 1,2	& 1,3	& 2,1 	& 2,2	& 2,3	& 3,1 	& 3,2	& 3,3	\\
		1		& 0	& 0	& 0	& 1	& 0	& 0	& 0	& 0	& 0	\\
		2		& 0	& 1	& 0	& 0	& 0	& 0	& 0	& 0	& 0	\\
		3		& 0	& 0	& 0	& 0	& 0	& 0	& 0	& 0	& 0
	\end{bNiceArray}
	$}
\end{equation}
\begin{equation}
	\scalebox{0.75}{
	$
	\NiceMatrixOptions{code-for-first-row = \scriptstyle \color{blue},
                   	   code-for-first-col = \scriptstyle \color{blue}
	}
	\begin{bNiceArray}{*{3}{c}|*{3}{c}|*{3}{c}}[first-row,first-col]
				& 1,1 	& 1,2	& 1,3	& 2,1 	& 2,2	& 2,3	& 3,1 	& 3,2	& 3,3	\\
		1		& 1	& 0	& 0	& 0	& 0	& 0	& 0	& 0	& 0	\\
		2		& 0	& 0	& 0	& 0	& 1	& 0	& 0	& 0	& 0	\\
		3		& 0	& 0	& 0	& 0	& 0	& 0	& 0	& 0	& 0
	\end{bNiceArray}
	$}
	\; \text{;} \;
	\scalebox{0.75}{
	$
	\NiceMatrixOptions{code-for-first-row = \scriptstyle \color{blue},
                   	   code-for-first-col = \scriptstyle \color{blue}
	}
	\begin{bNiceArray}{*{3}{c}|*{3}{c}|*{3}{c}}[first-row,first-col]
				& 1,1 	& 1,2	& 1,3	& 2,1 	& 2,2	& 2,3	& 3,1 	& 3,2	& 3,3	\\
		1		& 0	& 0	& 0	& 0	& 1	& 0	& 0	& 0	& 0	\\
		2		& 1	& 0	& 0	& 0	& 0	& 0	& 0	& 0	& 0	\\
		3		& 0	& 0	& 0	& 0	& 0	& 0	& 0	& 0	& 0
	\end{bNiceArray}
	$}
	\; \text{;} \;
	\scalebox{0.75}{
	$
	\NiceMatrixOptions{code-for-first-row = \scriptstyle \color{blue},
                   	   code-for-first-col = \scriptstyle \color{blue}
	}
	\begin{bNiceArray}{*{3}{c}|*{3}{c}|*{3}{c}}[first-row,first-col]
				& 1,1 	& 1,2	& 1,3	& 2,1 	& 2,2	& 2,3	& 3,1 	& 3,2	& 3,3	\\
		1		& 1	& 0	& 0	& 0	& 0	& 0	& 0	& 0	& 0	\\
		2		& 0	& 0	& 0	& 0	& 1	& 0	& 0	& 0	& 0	\\
		3		& 0	& 0	& 0	& 0	& 0	& 0	& 0	& 0	& 0
	\end{bNiceArray}
	$}
\end{equation}

For $\bm{E_0}$, we have that
\begin{itemize}
	\item The number of red vertices, $c$, in $\bm{E_0}$ is $3$. Hence the number of edges in the complete graph $K_3$, $e$, is $3$. 
	\item We now create all possible length $e=3$ strings having values in $0 \rightarrow 2m = 2$. Remove the all $0$ string. 
\end{itemize}
There are $26$ such strings. If we label the red vertices in $\bm{E_0}$,
without loss of generality,
as $1$, $2$ and $3$, from left to right, then we can
let the indices in each string refer to possible connections between
vertices $1$ and $2$, vertices $1$ and $3$, and vertices $2$ and $3$, written
$(12)$, $(13)$ and $(23)$, also from left to right and
without loss of generality.

\begin{table}[htbp]
\centering
\caption{All Possible Length $3$ Strings With Entries in $0 \rightarrow 2m = 2$}
\label{tab:length3strings}
\begin{tabular}{|c|c|c|}
\hline
	\textcolor{red}{(12)(13)(23)} & \textcolor{red}{(12)(13)(23)} & \textcolor{red}{(12)(13)(23)} \\
\hline
$000$ & $001$ & $002$ \\
$010$ & $011$ & $012$ \\
$020$ & $021$ & $022$ \\
$100$ & $101$ & $102$ \\
$110$ & $111$ & $112$ \\
$120$ & $121$ & $122$ \\
$200$ & $201$ & $202$ \\
$210$ & $211$ & $212$ \\
$220$ & $221$ & $222$ \\
\hline
\end{tabular}
\end{table}

To create the $26$ new $(3,0)$--bilabelled graph diagrams,
we now add in the edges (and new red vertices) between the labelled 
red vertices in $\bm{E_0}$, according
to the entries in the string.

They are

\begin{equation} \label{S2internaledges000}
	\begin{aligned}
		\bm{E_{001}} = \scalebox{0.6}{\tikzfig{graph21Internal001}}
		\; \text{;} \;
		\bm{E_{002}} = \scalebox{0.6}{\tikzfig{graph21Internal002}}
	\end{aligned}
\end{equation}

\begin{equation} \label{S2internaledges010}
	\begin{aligned}
		\bm{E_{010}} = \scalebox{0.6}{\tikzfig{graph21Internal010}}
		\; \text{;} \;
		\bm{E_{011}} = \scalebox{0.6}{\tikzfig{graph21Internal011}}
		\; \text{;} \;
		\bm{E_{012}} = \scalebox{0.6}{\tikzfig{graph21Internal012}}
	\end{aligned}
\end{equation}

\begin{equation} \label{S2internaledges020}
	\begin{aligned}
		\bm{E_{020}} = \scalebox{0.6}{\tikzfig{graph21Internal020}}
		\; \text{;} \;
		\bm{E_{021}} = \scalebox{0.6}{\tikzfig{graph21Internal021}}
		\; \text{;} \;
		\bm{E_{022}} = \scalebox{0.6}{\tikzfig{graph21Internal022}}
	\end{aligned}
\end{equation}

\begin{equation} \label{S2internaledges100}
	\begin{aligned}
		\bm{E_{100}} = \scalebox{0.6}{\tikzfig{graph21Internal100}}
		\; \text{;} \;
		\bm{E_{101}} = \scalebox{0.6}{\tikzfig{graph21Internal101}}
		\; \text{;} \;
		\bm{E_{102}} = \scalebox{0.6}{\tikzfig{graph21Internal102}}
	\end{aligned}
\end{equation}

\begin{equation} \label{S2internaledges110}
	\begin{aligned}
		\bm{E_{110}} = \scalebox{0.6}{\tikzfig{graph21Internal110}}
		\; \text{;} \;
		\bm{E_{111}} = \scalebox{0.6}{\tikzfig{graph21Internal111}}
		\; \text{;} \;
		\bm{E_{112}} = \scalebox{0.6}{\tikzfig{graph21Internal112}}
	\end{aligned}
\end{equation}

\begin{equation} \label{S2internaledges120}
	\begin{aligned}
		\bm{E_{120}} = \scalebox{0.6}{\tikzfig{graph21Internal120}}
		\; \text{;} \;
		\bm{E_{121}} = \scalebox{0.6}{\tikzfig{graph21Internal121}}
		\; \text{;} \;
		\bm{E_{122}} = \scalebox{0.6}{\tikzfig{graph21Internal122}}
	\end{aligned}
\end{equation}

\begin{equation} \label{S2internaledges200}
	\begin{aligned}
		\bm{E_{200}} = \scalebox{0.6}{\tikzfig{graph21Internal200}}
		\; \text{;} \;
		\bm{E_{201}} = \scalebox{0.6}{\tikzfig{graph21Internal201}}
		\; \text{;} \;
		\bm{E_{202}} = \scalebox{0.6}{\tikzfig{graph21Internal202}}
	\end{aligned}
\end{equation}

\begin{equation} \label{S2internaledges110}
	\begin{aligned}
		\bm{E_{210}} = \scalebox{0.6}{\tikzfig{graph21Internal210}}
		\; \text{;} \;
		\bm{E_{211}} = \scalebox{0.6}{\tikzfig{graph21Internal211}}
		\; \text{;} \;
		\bm{E_{212}} = \scalebox{0.6}{\tikzfig{graph21Internal212}}
	\end{aligned}
\end{equation}

\begin{equation} \label{S2internaledges120}
	\begin{aligned}
		\bm{E_{220}} = \scalebox{0.6}{\tikzfig{graph21Internal220}}
		\; \text{;} \;
		\bm{E_{221}} = \scalebox{0.6}{\tikzfig{graph21Internal221}}
		\; \text{;} \;
		\bm{E_{222}} = \scalebox{0.6}{\tikzfig{graph21Internal222}}
	\end{aligned}
\end{equation}

and correspond, after Frobenius duality, to the $A$-homomorphism matrices

\begin{equation}
	\scalebox{0.75}{
	$
	\NiceMatrixOptions{code-for-first-row = \scriptstyle \color{blue},
                   	   code-for-first-col = \scriptstyle \color{blue}
	}
	\begin{bNiceArray}{*{3}{c}|*{3}{c}|*{3}{c}}[first-row,first-col]
				& 1,1 	& 1,2	& 1,3	& 2,1 	& 2,2	& 2,3	& 3,1 	& 3,2	& 3,3	\\
		1		& 0	& 1	& 0	& 1	& 0	& 0	& 0	& 0	& 0	\\
		2		& 0	& 1	& 0	& 1	& 0	& 0	& 0	& 0	& 0	\\
		3		& 0	& 1	& 0	& 1	& 0	& 0	& 0	& 0	& 0
	\end{bNiceArray}
	$}
	\; \text{;} \;
	\scalebox{0.75}{
	$
	\NiceMatrixOptions{code-for-first-row = \scriptstyle \color{blue},
                   	   code-for-first-col = \scriptstyle \color{blue}
	}
	\begin{bNiceArray}{*{3}{c}|*{3}{c}|*{3}{c}}[first-row,first-col]
				& 1,1 	& 1,2	& 1,3	& 2,1 	& 2,2	& 2,3	& 3,1 	& 3,2	& 3,3	\\
		1		& 1	& 0	& 0	& 0	& 1	& 0	& 0	& 0	& 0	\\
		2		& 1	& 0	& 0	& 0	& 1	& 0	& 0	& 0	& 0	\\
		3		& 1	& 0	& 0	& 0	& 1	& 0	& 0	& 0	& 0
	\end{bNiceArray}
	$}
\end{equation}
\begin{equation}
	\scalebox{0.75}{
	$
	\NiceMatrixOptions{code-for-first-row = \scriptstyle \color{blue},
                   	   code-for-first-col = \scriptstyle \color{blue}
	}
	\begin{bNiceArray}{*{3}{c}|*{3}{c}|*{3}{c}}[first-row,first-col]
				& 1,1 	& 1,2	& 1,3	& 2,1 	& 2,2	& 2,3	& 3,1 	& 3,2	& 3,3	\\
		1		& 0	& 1	& 0	& 0	& 1	& 0	& 0	& 1	& 0	\\
		2		& 1	& 0	& 0	& 1	& 0	& 0	& 1	& 0	& 0	\\
		3		& 0	& 0	& 0	& 0	& 0	& 0	& 0	& 0	& 0
	\end{bNiceArray}
	$}
	\; \text{;} \;
	\scalebox{0.75}{
	$
	\NiceMatrixOptions{code-for-first-row = \scriptstyle \color{blue},
                   	   code-for-first-col = \scriptstyle \color{blue}
	}
	\begin{bNiceArray}{*{3}{c}|*{3}{c}|*{3}{c}}[first-row,first-col]
				& 1,1 	& 1,2	& 1,3	& 2,1 	& 2,2	& 2,3	& 3,1 	& 3,2	& 3,3	\\
		1		& 0	& 1	& 0	& 0	& 1	& 0	& 0	& 0	& 0	\\
		2		& 1	& 0	& 0	& 1	& 0	& 0	& 0	& 0	& 0	\\
		3		& 0	& 0	& 0	& 0	& 0	& 0	& 0	& 0	& 0
	\end{bNiceArray}
	$}
	\; \text{;} \;
	\scalebox{0.75}{
	$
	\NiceMatrixOptions{code-for-first-row = \scriptstyle \color{blue},
                   	   code-for-first-col = \scriptstyle \color{blue}
	}
	\begin{bNiceArray}{*{3}{c}|*{3}{c}|*{3}{c}}[first-row,first-col]
				& 1,1 	& 1,2	& 1,3	& 2,1 	& 2,2	& 2,3	& 3,1 	& 3,2	& 3,3	\\
		1		& 0	& 0	& 0	& 0	& 1	& 0	& 0	& 0	& 0	\\
		2		& 1	& 0	& 0	& 0	& 0	& 0	& 0	& 0	& 0	\\
		3		& 0	& 0	& 0	& 0	& 0	& 0	& 0	& 0	& 0
	\end{bNiceArray}
	$}
\end{equation}
\begin{equation}
	\scalebox{0.75}{
	$
	\NiceMatrixOptions{code-for-first-row = \scriptstyle \color{blue},
                   	   code-for-first-col = \scriptstyle \color{blue}
	}
	\begin{bNiceArray}{*{3}{c}|*{3}{c}|*{3}{c}}[first-row,first-col]
				& 1,1 	& 1,2	& 1,3	& 2,1 	& 2,2	& 2,3	& 3,1 	& 3,2	& 3,3	\\
		1		& 1	& 0	& 0	& 1	& 0	& 0	& 1	& 0	& 0	\\
		2		& 0	& 1	& 0	& 0	& 1	& 0	& 0	& 1	& 0	\\
		3		& 0	& 0	& 0	& 0	& 0	& 0	& 0	& 0	& 0
	\end{bNiceArray}
	$}
	\; \text{;} \;
	\scalebox{0.75}{
	$
	\NiceMatrixOptions{code-for-first-row = \scriptstyle \color{blue},
                   	   code-for-first-col = \scriptstyle \color{blue}
	}
	\begin{bNiceArray}{*{3}{c}|*{3}{c}|*{3}{c}}[first-row,first-col]
				& 1,1 	& 1,2	& 1,3	& 2,1 	& 2,2	& 2,3	& 3,1 	& 3,2	& 3,3	\\
		1		& 0	& 0	& 0	& 1	& 0	& 0	& 0	& 0	& 0	\\
		2		& 0	& 1	& 0	& 0	& 0	& 0	& 0	& 0	& 0	\\
		3		& 0	& 0	& 0	& 0	& 0	& 0	& 0	& 0	& 0
	\end{bNiceArray}
	$}
	\; \text{;} \;
	\scalebox{0.75}{
	$
	\NiceMatrixOptions{code-for-first-row = \scriptstyle \color{blue},
                   	   code-for-first-col = \scriptstyle \color{blue}
	}
	\begin{bNiceArray}{*{3}{c}|*{3}{c}|*{3}{c}}[first-row,first-col]
				& 1,1 	& 1,2	& 1,3	& 2,1 	& 2,2	& 2,3	& 3,1 	& 3,2	& 3,3	\\
		1		& 1	& 0	& 0	& 0	& 0	& 0	& 0	& 0	& 0	\\
		2		& 0	& 0	& 0	& 0	& 1	& 0	& 0	& 0	& 0	\\
		3		& 0	& 0	& 0	& 0	& 0	& 0	& 0	& 0	& 0
	\end{bNiceArray}
	$}
\end{equation}
\begin{equation}
	\scalebox{0.75}{
	$
	\NiceMatrixOptions{code-for-first-row = \scriptstyle \color{blue},
                   	   code-for-first-col = \scriptstyle \color{blue}
	}
	\begin{bNiceArray}{*{3}{c}|*{3}{c}|*{3}{c}}[first-row,first-col]
				& 1,1 	& 1,2	& 1,3	& 2,1 	& 2,2	& 2,3	& 3,1 	& 3,2	& 3,3	\\
		1		& 0	& 0	& 0	& 1	& 1	& 1	& 0	& 0	& 0	\\
		2		& 1	& 1	& 1	& 0	& 0	& 0	& 0	& 0	& 0	\\
		3		& 0	& 0	& 0	& 0	& 0	& 0	& 0	& 0	& 0
	\end{bNiceArray}
	$}
	\; \text{;} \;
	\scalebox{0.75}{
	$
	\NiceMatrixOptions{code-for-first-row = \scriptstyle \color{blue},
                   	   code-for-first-col = \scriptstyle \color{blue}
	}
	\begin{bNiceArray}{*{3}{c}|*{3}{c}|*{3}{c}}[first-row,first-col]
				& 1,1 	& 1,2	& 1,3	& 2,1 	& 2,2	& 2,3	& 3,1 	& 3,2	& 3,3	\\
		1		& 0	& 0	& 0	& 1	& 0	& 0	& 0	& 0	& 0	\\
		2		& 0	& 1	& 0	& 0	& 0	& 0	& 0	& 0	& 0	\\
		3		& 0	& 0	& 0	& 0	& 0	& 0	& 0	& 0	& 0
	\end{bNiceArray}
	$}
	\; \text{;} \;
	\scalebox{0.75}{
	$
	\NiceMatrixOptions{code-for-first-row = \scriptstyle \color{blue},
                   	   code-for-first-col = \scriptstyle \color{blue}
	}
	\begin{bNiceArray}{*{3}{c}|*{3}{c}|*{3}{c}}[first-row,first-col]
				& 1,1 	& 1,2	& 1,3	& 2,1 	& 2,2	& 2,3	& 3,1 	& 3,2	& 3,3	\\
		1		& 0	& 0	& 0	& 0	& 1	& 0	& 0	& 0	& 0	\\
		2		& 1	& 0	& 0	& 0	& 0	& 0	& 0	& 0	& 0	\\
		3		& 0	& 0	& 0	& 0	& 0	& 0	& 0	& 0	& 0
	\end{bNiceArray}
	$}
\end{equation}
\begin{equation}
	\scalebox{0.75}{
	$
	\NiceMatrixOptions{code-for-first-row = \scriptstyle \color{blue},
                   	   code-for-first-col = \scriptstyle \color{blue}
	}
	\begin{bNiceArray}{*{3}{c}|*{3}{c}|*{3}{c}}[first-row,first-col]
				& 1,1 	& 1,2	& 1,3	& 2,1 	& 2,2	& 2,3	& 3,1 	& 3,2	& 3,3	\\
		1		& 0	& 0	& 0	& 0	& 1	& 0	& 0	& 0	& 0	\\
		2		& 1	& 0	& 0	& 0	& 0	& 0	& 0	& 0	& 0	\\
		3		& 0	& 0	& 0	& 0	& 0	& 0	& 0	& 0	& 0
	\end{bNiceArray}
	$}
	\; \text{;} \;
	\scalebox{0.75}{
	$
	\NiceMatrixOptions{code-for-first-row = \scriptstyle \color{blue},
                   	   code-for-first-col = \scriptstyle \color{blue}
	}
	\begin{bNiceArray}{*{3}{c}|*{3}{c}|*{3}{c}}[first-row,first-col]
				& 1,1 	& 1,2	& 1,3	& 2,1 	& 2,2	& 2,3	& 3,1 	& 3,2	& 3,3	\\
		1		& 0	& 0	& 0	& 0	& 0	& 0	& 0	& 0	& 0	\\
		2		& 0	& 0	& 0	& 0	& 0	& 0	& 0	& 0	& 0	\\
		3		& 0	& 0	& 0	& 0	& 0	& 0	& 0	& 0	& 0
	\end{bNiceArray}
	$}
	\; \text{;} \;
	\scalebox{0.75}{
	$
	\NiceMatrixOptions{code-for-first-row = \scriptstyle \color{blue},
                   	   code-for-first-col = \scriptstyle \color{blue}
	}
	\begin{bNiceArray}{*{3}{c}|*{3}{c}|*{3}{c}}[first-row,first-col]
				& 1,1 	& 1,2	& 1,3	& 2,1 	& 2,2	& 2,3	& 3,1 	& 3,2	& 3,3	\\
		1		& 0	& 0	& 0	& 0	& 1	& 0	& 0	& 0	& 0	\\
		2		& 1	& 0	& 0	& 0	& 0	& 0	& 0	& 0	& 0	\\
		3		& 0	& 0	& 0	& 0	& 0	& 0	& 0	& 0	& 0
	\end{bNiceArray}
	$}
\end{equation}
\begin{equation}
	\scalebox{0.75}{
	$
	\NiceMatrixOptions{code-for-first-row = \scriptstyle \color{blue},
                   	   code-for-first-col = \scriptstyle \color{blue}
	}
	\begin{bNiceArray}{*{3}{c}|*{3}{c}|*{3}{c}}[first-row,first-col]
				& 1,1 	& 1,2	& 1,3	& 2,1 	& 2,2	& 2,3	& 3,1 	& 3,2	& 3,3	\\
		1		& 0	& 0	& 0	& 1	& 0	& 0	& 0	& 0	& 0	\\
		2		& 0	& 1	& 0	& 0	& 0	& 0	& 0	& 0	& 0	\\
		3		& 0	& 0	& 0	& 0	& 0	& 0	& 0	& 0	& 0
	\end{bNiceArray}
	$}
	\; \text{;} \;
	\scalebox{0.75}{
	$
	\NiceMatrixOptions{code-for-first-row = \scriptstyle \color{blue},
                   	   code-for-first-col = \scriptstyle \color{blue}
	}
	\begin{bNiceArray}{*{3}{c}|*{3}{c}|*{3}{c}}[first-row,first-col]
				& 1,1 	& 1,2	& 1,3	& 2,1 	& 2,2	& 2,3	& 3,1 	& 3,2	& 3,3	\\
		1		& 0	& 0	& 0	& 1	& 0	& 0	& 0	& 0	& 0	\\
		2		& 0	& 1	& 0	& 0	& 0	& 0	& 0	& 0	& 0	\\
		3		& 0	& 0	& 0	& 0	& 0	& 0	& 0	& 0	& 0
	\end{bNiceArray}
	$}
	\; \text{;} \;
	\scalebox{0.75}{
	$
	\NiceMatrixOptions{code-for-first-row = \scriptstyle \color{blue},
                   	   code-for-first-col = \scriptstyle \color{blue}
	}
	\begin{bNiceArray}{*{3}{c}|*{3}{c}|*{3}{c}}[first-row,first-col]
				& 1,1 	& 1,2	& 1,3	& 2,1 	& 2,2	& 2,3	& 3,1 	& 3,2	& 3,3	\\
		1		& 0	& 0	& 0	& 0	& 0	& 0	& 0	& 0	& 0	\\
		2		& 0	& 0	& 0	& 0	& 0	& 0	& 0	& 0	& 0	\\
		3		& 0	& 0	& 0	& 0	& 0	& 0	& 0	& 0	& 0
	\end{bNiceArray}
	$}
\end{equation}
\begin{equation}
	\scalebox{0.75}{
	$
	\NiceMatrixOptions{code-for-first-row = \scriptstyle \color{blue},
                   	   code-for-first-col = \scriptstyle \color{blue}
	}
	\begin{bNiceArray}{*{3}{c}|*{3}{c}|*{3}{c}}[first-row,first-col]
				& 1,1 	& 1,2	& 1,3	& 2,1 	& 2,2	& 2,3	& 3,1 	& 3,2	& 3,3	\\
		1		& 1	& 1	& 1	& 0	& 0	& 0	& 0	& 0	& 0	\\
		2		& 0	& 0	& 0	& 1	& 1	& 1	& 0	& 0	& 0	\\
		3		& 0	& 0	& 0	& 0	& 0	& 0	& 0	& 0	& 0
	\end{bNiceArray}
	$}
	\; \text{;} \;
	\scalebox{0.75}{
	$
	\NiceMatrixOptions{code-for-first-row = \scriptstyle \color{blue},
                   	   code-for-first-col = \scriptstyle \color{blue}
	}
	\begin{bNiceArray}{*{3}{c}|*{3}{c}|*{3}{c}}[first-row,first-col]
				& 1,1 	& 1,2	& 1,3	& 2,1 	& 2,2	& 2,3	& 3,1 	& 3,2	& 3,3	\\
		1		& 0	& 1	& 0	& 0	& 0	& 0	& 0	& 0	& 0	\\
		2		& 0	& 0	& 0	& 1	& 0	& 0	& 0	& 0	& 0	\\
		3		& 0	& 0	& 0	& 0	& 0	& 0	& 0	& 0	& 0
	\end{bNiceArray}
	$}
	\; \text{;} \;
	\scalebox{0.75}{
	$
	\NiceMatrixOptions{code-for-first-row = \scriptstyle \color{blue},
                   	   code-for-first-col = \scriptstyle \color{blue}
	}
	\begin{bNiceArray}{*{3}{c}|*{3}{c}|*{3}{c}}[first-row,first-col]
				& 1,1 	& 1,2	& 1,3	& 2,1 	& 2,2	& 2,3	& 3,1 	& 3,2	& 3,3	\\
		1		& 1	& 0	& 0	& 0	& 0	& 0	& 0	& 0	& 0	\\
		2		& 0	& 0	& 0	& 0	& 1	& 0	& 0	& 0	& 0	\\
		3		& 0	& 0	& 0	& 0	& 0	& 0	& 0	& 0	& 0
	\end{bNiceArray}
	$}
\end{equation}
\begin{equation}
	\scalebox{0.75}{
	$
	\NiceMatrixOptions{code-for-first-row = \scriptstyle \color{blue},
                   	   code-for-first-col = \scriptstyle \color{blue}
	}
	\begin{bNiceArray}{*{3}{c}|*{3}{c}|*{3}{c}}[first-row,first-col]
				& 1,1 	& 1,2	& 1,3	& 2,1 	& 2,2	& 2,3	& 3,1 	& 3,2	& 3,3	\\
		1		& 0	& 1	& 0	& 0	& 0	& 0	& 0	& 0	& 0	\\
		2		& 0	& 0	& 0	& 1	& 0	& 0	& 0	& 0	& 0	\\
		3		& 0	& 0	& 0	& 0	& 0	& 0	& 0	& 0	& 0
	\end{bNiceArray}
	$}
	\; \text{;} \;
	\scalebox{0.75}{
	$
	\NiceMatrixOptions{code-for-first-row = \scriptstyle \color{blue},
                   	   code-for-first-col = \scriptstyle \color{blue}
	}
	\begin{bNiceArray}{*{3}{c}|*{3}{c}|*{3}{c}}[first-row,first-col]
				& 1,1 	& 1,2	& 1,3	& 2,1 	& 2,2	& 2,3	& 3,1 	& 3,2	& 3,3	\\
		1		& 0	& 1	& 0	& 0	& 0	& 0	& 0	& 0	& 0	\\
		2		& 0	& 0	& 0	& 1	& 0	& 0	& 0	& 0	& 0	\\
		3		& 0	& 0	& 0	& 0	& 0	& 0	& 0	& 0	& 0
	\end{bNiceArray}
	$}
	\; \text{;} \;
	\scalebox{0.75}{
	$
	\NiceMatrixOptions{code-for-first-row = \scriptstyle \color{blue},
                   	   code-for-first-col = \scriptstyle \color{blue}
	}
	\begin{bNiceArray}{*{3}{c}|*{3}{c}|*{3}{c}}[first-row,first-col]
				& 1,1 	& 1,2	& 1,3	& 2,1 	& 2,2	& 2,3	& 3,1 	& 3,2	& 3,3	\\
		1		& 1	& 0	& 0	& 0	& 0	& 0	& 0	& 0	& 0	\\
		2		& 0	& 0	& 0	& 0	& 1	& 0	& 0	& 0	& 0	\\
		3		& 0	& 0	& 0	& 0	& 0	& 0	& 0	& 0	& 0
	\end{bNiceArray}
	$}
\end{equation}
\begin{equation}
	\scalebox{0.75}{
	$
	\NiceMatrixOptions{code-for-first-row = \scriptstyle \color{blue},
                   	   code-for-first-col = \scriptstyle \color{blue}
	}
	\begin{bNiceArray}{*{3}{c}|*{3}{c}|*{3}{c}}[first-row,first-col]
				& 1,1 	& 1,2	& 1,3	& 2,1 	& 2,2	& 2,3	& 3,1 	& 3,2	& 3,3	\\
		1		& 1	& 0	& 0	& 0	& 0	& 0	& 0	& 0	& 0	\\
		2		& 0	& 0	& 0	& 0	& 1	& 0	& 0	& 0	& 0	\\
		3		& 0	& 0	& 0	& 0	& 0	& 0	& 0	& 0	& 0
	\end{bNiceArray}
	$}
	\; \text{;} \;
	\scalebox{0.75}{
	$
	\NiceMatrixOptions{code-for-first-row = \scriptstyle \color{blue},
                   	   code-for-first-col = \scriptstyle \color{blue}
	}
	\begin{bNiceArray}{*{3}{c}|*{3}{c}|*{3}{c}}[first-row,first-col]
				& 1,1 	& 1,2	& 1,3	& 2,1 	& 2,2	& 2,3	& 3,1 	& 3,2	& 3,3	\\
		1		& 0	& 0	& 0	& 0	& 0	& 0	& 0	& 0	& 0	\\
		2		& 0	& 0	& 0	& 0	& 0	& 0	& 0	& 0	& 0	\\
		3		& 0	& 0	& 0	& 0	& 0	& 0	& 0	& 0	& 0
	\end{bNiceArray}
	$}
	\; \text{;} \;
	\scalebox{0.75}{
	$
	\NiceMatrixOptions{code-for-first-row = \scriptstyle \color{blue},
                   	   code-for-first-col = \scriptstyle \color{blue}
	}
	\begin{bNiceArray}{*{3}{c}|*{3}{c}|*{3}{c}}[first-row,first-col]
				& 1,1 	& 1,2	& 1,3	& 2,1 	& 2,2	& 2,3	& 3,1 	& 3,2	& 3,3	\\
		1		& 1	& 0	& 0	& 0	& 0	& 0	& 0	& 0	& 0	\\
		2		& 0	& 0	& 0	& 0	& 1	& 0	& 0	& 0	& 0	\\
		3		& 0	& 0	& 0	& 0	& 0	& 0	& 0	& 0	& 0
	\end{bNiceArray}
	$}
\end{equation}

\textbf{Step 3:} We calculate all possible $(3,0)$--bilabelled graph diagrams that
have only external red edges between red vertices from the five bilabelled graph diagrams
given in Step $1$.

For $\bm{A_0}$, as the number of red vertices is $1$, we create all length one strings
having entries in $0$ to $m=1$, removing the all $0$ string. As a result, we create a new
$(3,0)$--bilabelled graph diagram from the string $1$ by adding $1$ new red edge outwards from
the red vertex in $\bm{A_0}$, adding in a new red vertex to make this new 
red edge possible.
Hence we obtain
\begin{equation} \label{S2step3externaledges1}
	\begin{aligned}
		\bm{A_{2}} = \scalebox{0.6}{\tikzfig{graph21External1A0}}
	\end{aligned}
\end{equation}
which, after Frobenius duality, corresponds to the $A$-homomorphism matrix
\begin{equation}
	\scalebox{0.75}{
	$
	\NiceMatrixOptions{code-for-first-row = \scriptstyle \color{blue},
                   	   code-for-first-col = \scriptstyle \color{blue}
	}
	\begin{bNiceArray}{*{3}{c}|*{3}{c}|*{3}{c}}[first-row,first-col]
				& 1,1 	& 1,2	& 1,3	& 2,1 	& 2,2	& 2,3	& 3,1 	& 3,2	& 3,3	\\
		1		& 1	& 0	& 0	& 0	& 0	& 0	& 0	& 0	& 0	\\
		2		& 0	& 0	& 0	& 0	& 1	& 0	& 0	& 0	& 0	\\
		3		& 0	& 0	& 0	& 0	& 0	& 0	& 0	& 0	& 0
	\end{bNiceArray}
	$}
\end{equation}

However, for $\bm{B_0}$, $\bm{C_0}$ and $\bm{D_0}$, 
calling these generically by $\bm{H}$, we know that
the number of red vertices, $c$, in $\bm{H}$ is $2$. 
As a result, we create all length two strings
having entries in $0$ to $m=1$, removing the all $0$ string.
There are three such strings, $01$, $10$ and $11$.
Hence we create three new $(3,0)$--bilabelled graph diagrams using these strings
from each of $\bm{B_0}$, $\bm{C_0}$ and $\bm{D_0}$,
by adding $1$ new red edge outwards from a red vertex that corresponds to a $1$ in
the string, adding in a new red vertex to make this new red edge possible.
They are given by
\begin{equation} \label{S2step3externaledges2}
	\begin{aligned}
		\bm{B_{2,1}} = \scalebox{0.6}{\tikzfig{graph21External1B0}}
		\; \text{;} \;
		\bm{B_{2,2}} = \scalebox{0.6}{\tikzfig{graph21External2B0}}
		\; \text{;} \;
		\bm{B_{2,3}} = \scalebox{0.6}{\tikzfig{graph21External3B0}}
	\end{aligned}
\end{equation}

\begin{equation} \label{S2step3externaledges3}
	\begin{aligned}
		\bm{C_{2,1}} = \scalebox{0.6}{\tikzfig{graph21External1C0}}
		\; \text{;} \;
		\bm{C_{2,2}} = \scalebox{0.6}{\tikzfig{graph21External2C0}}
		\; \text{;} \;
		\bm{C_{2,3}} = \scalebox{0.6}{\tikzfig{graph21External3C0}}
	\end{aligned}
\end{equation}

\begin{equation} \label{S2step3externaledges4}
	\begin{aligned}
		\bm{D_{2,1}} = \scalebox{0.6}{\tikzfig{graph21External1D0}}
		\; \text{;} \;
		\bm{D_{2,2}} = \scalebox{0.6}{\tikzfig{graph21External2D0}}
		\; \text{;} \;
		\bm{D_{2,3}} = \scalebox{0.6}{\tikzfig{graph21External3D0}}
	\end{aligned}
\end{equation}
which, after Frobenius duality, correspond to the $A$-homomorphism matrices
\begin{equation}
	\scalebox{0.75}{
	$
	\NiceMatrixOptions{code-for-first-row = \scriptstyle \color{blue},
                   	   code-for-first-col = \scriptstyle \color{blue}
	}
	\begin{bNiceArray}{*{3}{c}|*{3}{c}|*{3}{c}}[first-row,first-col]
				& 1,1 	& 1,2	& 1,3	& 2,1 	& 2,2	& 2,3	& 3,1 	& 3,2	& 3,3	\\
		1		& 1	& 1	& 0	& 0	& 0	& 0	& 0	& 0	& 0	\\
		2		& 0	& 0	& 0	& 1	& 1	& 0	& 0	& 0	& 0	\\
		3		& 0	& 0	& 0	& 0	& 0	& 0	& 1	& 1	& 0
	\end{bNiceArray}
	$}
	\; \text{;} \;
	\scalebox{0.75}{
	$
	\NiceMatrixOptions{code-for-first-row = \scriptstyle \color{blue},
                   	   code-for-first-col = \scriptstyle \color{blue}
	}
	\begin{bNiceArray}{*{3}{c}|*{3}{c}|*{3}{c}}[first-row,first-col]
				& 1,1 	& 1,2	& 1,3	& 2,1 	& 2,2	& 2,3	& 3,1 	& 3,2	& 3,3	\\
		1		& 1	& 1	& 1	& 0	& 0	& 0	& 0	& 0	& 0	\\
		2		& 0	& 0	& 0	& 1	& 1	& 1	& 0	& 0	& 0	\\
		3		& 0	& 0	& 0	& 0	& 0	& 0	& 0	& 0	& 0
	\end{bNiceArray}
	$}
	\; \text{;} \;
	\scalebox{0.75}{
	$
	\NiceMatrixOptions{code-for-first-row = \scriptstyle \color{blue},
                   	   code-for-first-col = \scriptstyle \color{blue}
	}
	\begin{bNiceArray}{*{3}{c}|*{3}{c}|*{3}{c}}[first-row,first-col]
				& 1,1 	& 1,2	& 1,3	& 2,1 	& 2,2	& 2,3	& 3,1 	& 3,2	& 3,3	\\
		1		& 1	& 1	& 0	& 0	& 0	& 0	& 0	& 0	& 0	\\
		2		& 0	& 0	& 0	& 1	& 1	& 0	& 0	& 0	& 0	\\
		3		& 0	& 0	& 0	& 0	& 0	& 0	& 0	& 0	& 0
	\end{bNiceArray}
	$}
\end{equation}
\begin{equation}
	\scalebox{0.75}{
	$
	\NiceMatrixOptions{code-for-first-row = \scriptstyle \color{blue},
                   	   code-for-first-col = \scriptstyle \color{blue}
	}
	\begin{bNiceArray}{*{3}{c}|*{3}{c}|*{3}{c}}[first-row,first-col]
				& 1,1 	& 1,2	& 1,3	& 2,1 	& 2,2	& 2,3	& 3,1 	& 3,2	& 3,3	\\
		1		& 1	& 0	& 0	& 1	& 0	& 0	& 0	& 0	& 0	\\
		2		& 0	& 1	& 0	& 0	& 1	& 0	& 0	& 0	& 0	\\
		3		& 0	& 0	& 1	& 0	& 0	& 1	& 0	& 0	& 0
	\end{bNiceArray}
	$}
	\; \text{;} \;
	\scalebox{0.75}{
	$
	\NiceMatrixOptions{code-for-first-row = \scriptstyle \color{blue},
                   	   code-for-first-col = \scriptstyle \color{blue}
	}
	\begin{bNiceArray}{*{3}{c}|*{3}{c}|*{3}{c}}[first-row,first-col]
				& 1,1 	& 1,2	& 1,3	& 2,1 	& 2,2	& 2,3	& 3,1 	& 3,2	& 3,3	\\
		1		& 1	& 0	& 0	& 1	& 0	& 0	& 1	& 0	& 0	\\
		2		& 0	& 1	& 0	& 0	& 1	& 0	& 0	& 1	& 0	\\
		3		& 0	& 0	& 0	& 0	& 0	& 0	& 0	& 0	& 0
	\end{bNiceArray}
	$}
	\; \text{;} \;
	\scalebox{0.75}{
	$
	\NiceMatrixOptions{code-for-first-row = \scriptstyle \color{blue},
                   	   code-for-first-col = \scriptstyle \color{blue}
	}
	\begin{bNiceArray}{*{3}{c}|*{3}{c}|*{3}{c}}[first-row,first-col]
				& 1,1 	& 1,2	& 1,3	& 2,1 	& 2,2	& 2,3	& 3,1 	& 3,2	& 3,3	\\
		1		& 1	& 0	& 0	& 1	& 0	& 0	& 0	& 0	& 0	\\
		2		& 0	& 1	& 0	& 0	& 1	& 0	& 0	& 0	& 0	\\
		3		& 0	& 0	& 0	& 0	& 0	& 0	& 0	& 0	& 0
	\end{bNiceArray}
	$}
\end{equation}
\begin{equation}
	\scalebox{0.75}{
	$
	\NiceMatrixOptions{code-for-first-row = \scriptstyle \color{blue},
                   	   code-for-first-col = \scriptstyle \color{blue}
	}
	\begin{bNiceArray}{*{3}{c}|*{3}{c}|*{3}{c}}[first-row,first-col]
				& 1,1 	& 1,2	& 1,3	& 2,1 	& 2,2	& 2,3	& 3,1 	& 3,2	& 3,3	\\
		1		& 1	& 0	& 0	& 0	& 1	& 0	& 0	& 0	& 1	\\
		2		& 1	& 0	& 0	& 0	& 1	& 0	& 0	& 0	& 1	\\
		3		& 0	& 0	& 0	& 0	& 0	& 0	& 0	& 0	& 0
	\end{bNiceArray}
	$}
	\; \text{;} \;
	\scalebox{0.75}{
	$
	\NiceMatrixOptions{code-for-first-row = \scriptstyle \color{blue},
                   	   code-for-first-col = \scriptstyle \color{blue}
	}
	\begin{bNiceArray}{*{3}{c}|*{3}{c}|*{3}{c}}[first-row,first-col]
				& 1,1 	& 1,2	& 1,3	& 2,1 	& 2,2	& 2,3	& 3,1 	& 3,2	& 3,3	\\
		1		& 1	& 0	& 0	& 0	& 1	& 0	& 0	& 0	& 0	\\
		2		& 1	& 0	& 0	& 0	& 1	& 0	& 0	& 0	& 0	\\
		3		& 1	& 0	& 0	& 0	& 1	& 0	& 0	& 0	& 0
	\end{bNiceArray}
	$}
	\; \text{;} \;
	\scalebox{0.75}{
	$
	\NiceMatrixOptions{code-for-first-row = \scriptstyle \color{blue},
                   	   code-for-first-col = \scriptstyle \color{blue}
	}
	\begin{bNiceArray}{*{3}{c}|*{3}{c}|*{3}{c}}[first-row,first-col]
				& 1,1 	& 1,2	& 1,3	& 2,1 	& 2,2	& 2,3	& 3,1 	& 3,2	& 3,3	\\
		1		& 1	& 0	& 0	& 0	& 1	& 0	& 0	& 0	& 0	\\
		2		& 1	& 0	& 0	& 0	& 1	& 0	& 0	& 0	& 0	\\
		3		& 0	& 0	& 0	& 0	& 0	& 0	& 0	& 0	& 0
	\end{bNiceArray}
	$}
\end{equation}

For $\bm{E_0}$, we know that
the number of red vertices, $c$, in $\bm{E_0}$ is $3$.
As a result, we create all length three strings
having entries in $0$ to $m=1$, removing the all $0$ string.
There are seven such strings, $001$, $010$, $011$, $100$, $101$, $110$ and $111$.
Hence we create seven new $(3,0)$--bilabelled graph diagrams using these strings
from $\bm{E_0}$, by adding $1$ new red edge outwards from a red vertex 
that corresponds to a $1$ in
the string, adding in a new red vertex to make this new red edge possible.
They are given by

\begin{equation} \label{S2step3externaledges5}
	\begin{aligned}
		\bm{E_{2,1}} = \scalebox{0.6}{\tikzfig{graph21External1E0}}
		\; \text{;} \;
		\bm{E_{2,2}} = \scalebox{0.6}{\tikzfig{graph21External2E0}}
		\; \text{;} \;
		\bm{E_{2,3}} = \scalebox{0.6}{\tikzfig{graph21External3E0}}
		\; \text{;} \;
		\bm{E_{2,4}} = \scalebox{0.6}{\tikzfig{graph21External4E0}}
	\end{aligned}
\end{equation}

\begin{equation} \label{S2step3externaledges5}
	\begin{aligned}
		\bm{E_{2,5}} = \scalebox{0.6}{\tikzfig{graph21External5E0}}
		\; \text{;} \;
		\bm{E_{2,6}} = \scalebox{0.6}{\tikzfig{graph21External6E0}}
		\; \text{;} \;
		\bm{E_{2,7}} = \scalebox{0.6}{\tikzfig{graph21External7E0}}
	\end{aligned}
\end{equation}

which, after Frobenius duality, correspond to the $A$-homomorphism matrices

\begin{equation}
	\scalebox{0.75}{
	$
	\NiceMatrixOptions{code-for-first-row = \scriptstyle \color{blue},
                   	   code-for-first-col = \scriptstyle \color{blue}
	}
	\begin{bNiceArray}{*{3}{c}|*{3}{c}|*{3}{c}}[first-row,first-col]
				& 1,1 	& 1,2	& 1,3	& 2,1 	& 2,2	& 2,3	& 3,1 	& 3,2	& 3,3	\\
		1		& 1	& 1	& 0	& 1	& 1	& 0	& 1	& 1	& 0	\\
		2		& 1	& 1	& 0	& 1	& 1	& 0	& 1	& 1	& 0	\\
		3		& 1	& 1	& 0	& 1	& 1	& 0	& 1	& 1	& 0
	\end{bNiceArray}
	$}
	\; \text{;} \;
	\scalebox{0.75}{
	$
	\NiceMatrixOptions{code-for-first-row = \scriptstyle \color{blue},
                   	   code-for-first-col = \scriptstyle \color{blue}
	}
	\begin{bNiceArray}{*{3}{c}|*{3}{c}|*{3}{c}}[first-row,first-col]
				& 1,1 	& 1,2	& 1,3	& 2,1 	& 2,2	& 2,3	& 3,1 	& 3,2	& 3,3	\\
		1		& 1	& 1	& 1	& 1	& 1	& 1	& 0	& 0	& 0	\\
		2		& 1	& 1	& 1	& 1	& 1	& 1	& 0	& 0	& 0	\\
		3		& 1	& 1	& 1	& 1	& 1	& 1	& 0	& 0	& 0
	\end{bNiceArray}
	$}
\end{equation}
\begin{equation}
	\scalebox{0.75}{
	$
	\NiceMatrixOptions{code-for-first-row = \scriptstyle \color{blue},
                   	   code-for-first-col = \scriptstyle \color{blue}
	}
	\begin{bNiceArray}{*{3}{c}|*{3}{c}|*{3}{c}}[first-row,first-col]
				& 1,1 	& 1,2	& 1,3	& 2,1 	& 2,2	& 2,3	& 3,1 	& 3,2	& 3,3	\\
		1		& 1	& 1	& 0	& 1	& 1	& 0	& 0	& 0	& 0	\\
		2		& 1	& 1	& 0	& 1	& 1	& 0	& 0	& 0	& 0	\\
		3		& 1	& 1	& 0	& 1	& 1	& 0	& 0	& 0	& 0
	\end{bNiceArray}
	$}
	\; \text{;} \;
	\scalebox{0.75}{
	$
	\NiceMatrixOptions{code-for-first-row = \scriptstyle \color{blue},
                   	   code-for-first-col = \scriptstyle \color{blue}
	}
	\begin{bNiceArray}{*{3}{c}|*{3}{c}|*{3}{c}}[first-row,first-col]
				& 1,1 	& 1,2	& 1,3	& 2,1 	& 2,2	& 2,3	& 3,1 	& 3,2	& 3,3	\\
		1		& 1	& 1	& 1	& 1	& 1	& 1	& 1	& 1	& 1	\\
		2		& 1	& 1	& 1	& 1	& 1	& 1	& 1	& 1	& 1	\\
		3		& 0	& 0	& 0	& 0	& 0	& 0	& 0	& 0	& 0
	\end{bNiceArray}
	$}
\end{equation}
\begin{equation}
	\scalebox{0.75}{
	$
	\NiceMatrixOptions{code-for-first-row = \scriptstyle \color{blue},
                   	   code-for-first-col = \scriptstyle \color{blue}
	}
	\begin{bNiceArray}{*{3}{c}|*{3}{c}|*{3}{c}}[first-row,first-col]
				& 1,1 	& 1,2	& 1,3	& 2,1 	& 2,2	& 2,3	& 3,1 	& 3,2	& 3,3	\\
		1		& 1	& 1	& 0	& 1	& 1	& 0	& 1	& 1	& 0	\\
		2		& 1	& 1	& 0	& 1	& 1	& 0	& 1	& 1	& 0	\\
		3		& 0	& 0	& 0	& 0	& 0	& 0	& 0	& 0	& 0
	\end{bNiceArray}
	$}
	\; \text{;} \;
	\scalebox{0.75}{
	$
	\NiceMatrixOptions{code-for-first-row = \scriptstyle \color{blue},
                   	   code-for-first-col = \scriptstyle \color{blue}
	}
	\begin{bNiceArray}{*{3}{c}|*{3}{c}|*{3}{c}}[first-row,first-col]
				& 1,1 	& 1,2	& 1,3	& 2,1 	& 2,2	& 2,3	& 3,1 	& 3,2	& 3,3	\\
		1		& 1	& 1	& 1	& 1	& 1	& 1	& 0	& 0	& 0	\\
		2		& 1	& 1	& 1	& 1	& 1	& 1	& 0	& 0	& 0	\\
		3		& 0	& 0	& 0	& 0	& 0	& 0	& 0	& 0	& 0
	\end{bNiceArray}
	$}
	\; \text{;} \;
		\scalebox{0.75}{
	$
	\NiceMatrixOptions{code-for-first-row = \scriptstyle \color{blue},
                   	   code-for-first-col = \scriptstyle \color{blue}
	}
	\begin{bNiceArray}{*{3}{c}|*{3}{c}|*{3}{c}}[first-row,first-col]
				& 1,1 	& 1,2	& 1,3	& 2,1 	& 2,2	& 2,3	& 3,1 	& 3,2	& 3,3	\\
		1		& 1	& 1	& 0	& 1	& 1	& 0	& 0	& 0	& 0	\\
		2		& 1	& 1	& 0	& 1	& 1	& 0	& 0	& 0	& 0	\\
		3		& 0	& 0	& 0	& 0	& 0	& 0	& 0	& 0	& 0
	\end{bNiceArray}
	$}
\end{equation}

\textbf{Step 4:} We calculate all possible $(3,0)$--bilabelled graph diagrams that
have external red edges from the $32$ bilabelled graph diagrams that were given in Step $2$.

None of the bilabelled graph diagrams given in 
(\ref{S2internaledgesi})
and
(\ref{S2internaledgesii})
will lead to new bilabelled graph diagrams, since each red vertex in each bilabelled 
graph diagram is connected with another red vertex in the diagram.

Only six bilabelled graph diagrams in (\ref{S2internaledges000}) - (\ref{S2internaledges120})
give rise to new bilabelled graph diagrams:
they are
$\bm{E_{001}}$,
$\bm{E_{002}}$,
$\bm{E_{010}}$,
$\bm{E_{020}}$,
$\bm{E_{100}}$, and
$\bm{E_{200}}$,
as they are the only bilabelled graph diagrams that have at least one (in this example, exactly one) red vertex that is not connected with any other vertex.

As each bilabelled graph diagram has three red vertices having only a single
red vertex that is not connected with any other vertex, and since $m = 1$,
this step produces exactly one string of length three, where each position
in the string corresponds to a red vertex in the bilabelled graph diagram.

Hence, from 
$\bm{E_{001}}$,
$\bm{E_{002}}$, and
$\bm{E_{010}}$,
we obtain
\begin{equation} \label{S2externalinternaledgesi}
	\begin{aligned}
		\bm{E_{001, E}} = \scalebox{0.6}{\tikzfig{graph21ExInternal001}}
		\; \text{;} \;
		\bm{E_{002, E}} = \scalebox{0.6}{\tikzfig{graph21ExInternal002}}
		\; \text{;} \;
		\bm{E_{010, E}} = \scalebox{0.6}{\tikzfig{graph21ExInternal010}}
	\end{aligned}
\end{equation}

and from 
$\bm{E_{020}}$,
$\bm{E_{100}}$, and
$\bm{E_{200}}$,
we obtain
\begin{equation} \label{S2externalinternaledgesii}
	\begin{aligned}
		\bm{E_{020, E}} = \scalebox{0.6}{\tikzfig{graph21ExInternal020}}
		\; \text{;} \;
		\bm{E_{100, E}} = \scalebox{0.6}{\tikzfig{graph21ExInternal100}}
		\; \text{;} \;
		\bm{E_{200, E}} = \scalebox{0.6}{\tikzfig{graph21ExInternal200}}
	\end{aligned}
\end{equation}

These six bilabelled graph diagrams correspond, after Frobenius duality, to the following $A$-homomorphism matrices.

\begin{equation}
	\scalebox{0.75}{
	$
	\NiceMatrixOptions{code-for-first-row = \scriptstyle \color{blue},
                   	   code-for-first-col = \scriptstyle \color{blue}
	}
	\begin{bNiceArray}{*{3}{c}|*{3}{c}|*{3}{c}}[first-row,first-col]
				& 1,1 	& 1,2	& 1,3	& 2,1 	& 2,2	& 2,3	& 3,1 	& 3,2	& 3,3	\\
		1		& 0	& 1	& 0	& 1	& 0	& 0	& 0	& 0	& 0	\\
		2		& 0	& 1	& 0	& 1	& 0	& 0	& 0	& 0	& 0	\\
		3		& 0	& 0	& 0	& 0	& 0	& 0	& 0	& 0	& 0
	\end{bNiceArray}
	$}
	\; \text{;} \;
	\scalebox{0.75}{
	$
	\NiceMatrixOptions{code-for-first-row = \scriptstyle \color{blue},
                   	   code-for-first-col = \scriptstyle \color{blue}
	}
	\begin{bNiceArray}{*{3}{c}|*{3}{c}|*{3}{c}}[first-row,first-col]
				& 1,1 	& 1,2	& 1,3	& 2,1 	& 2,2	& 2,3	& 3,1 	& 3,2	& 3,3	\\
		1		& 1	& 0	& 0	& 0	& 1	& 0	& 0	& 0	& 0	\\
		2		& 1	& 0	& 0	& 0	& 1	& 0	& 0	& 0	& 0	\\
		3		& 0	& 0	& 0	& 0	& 0	& 0	& 0	& 0	& 0
	\end{bNiceArray}
	$}
	\; \text{;} \;
	\scalebox{0.75}{
	$
	\NiceMatrixOptions{code-for-first-row = \scriptstyle \color{blue},
                   	   code-for-first-col = \scriptstyle \color{blue}
	}
	\begin{bNiceArray}{*{3}{c}|*{3}{c}|*{3}{c}}[first-row,first-col]
				& 1,1 	& 1,2	& 1,3	& 2,1 	& 2,2	& 2,3	& 3,1 	& 3,2	& 3,3	\\
		1		& 0	& 0	& 0	& 0	& 1	& 0	& 0	& 0	& 0	\\
		2		& 1	& 0	& 0	& 0	& 0	& 0	& 0	& 0	& 0	\\
		3		& 0	& 0	& 0	& 0	& 0	& 0	& 0	& 0	& 0
	\end{bNiceArray}
	$}
\end{equation}
\begin{equation}
	\scalebox{0.75}{
	$
	\NiceMatrixOptions{code-for-first-row = \scriptstyle \color{blue},
                   	   code-for-first-col = \scriptstyle \color{blue}
	}
	\begin{bNiceArray}{*{3}{c}|*{3}{c}|*{3}{c}}[first-row,first-col]
				& 1,1 	& 1,2	& 1,3	& 2,1 	& 2,2	& 2,3	& 3,1 	& 3,2	& 3,3	\\
		1		& 1	& 0	& 0	& 0	& 0	& 0	& 0	& 0	& 0	\\
		2		& 0	& 0	& 0	& 0	& 1	& 0	& 0	& 0	& 0	\\
		3		& 0	& 0	& 0	& 0	& 0	& 0	& 0	& 0	& 0
	\end{bNiceArray}
	$}
	\; \text{;} \;
	\scalebox{0.75}{
	$
	\NiceMatrixOptions{code-for-first-row = \scriptstyle \color{blue},
                   	   code-for-first-col = \scriptstyle \color{blue}
	}
	\begin{bNiceArray}{*{3}{c}|*{3}{c}|*{3}{c}}[first-row,first-col]
				& 1,1 	& 1,2	& 1,3	& 2,1 	& 2,2	& 2,3	& 3,1 	& 3,2	& 3,3	\\
		1		& 0	& 0	& 0	& 1	& 1	& 0	& 0	& 0	& 0	\\
		2		& 1	& 1	& 0	& 0	& 0	& 0	& 0	& 0	& 0	\\
		3		& 0	& 0	& 0	& 0	& 0	& 0	& 0	& 0	& 0
	\end{bNiceArray}
	$}
	\; \text{;} \;
	\scalebox{0.75}{
	$
	\NiceMatrixOptions{code-for-first-row = \scriptstyle \color{blue},
                   	   code-for-first-col = \scriptstyle \color{blue}
	}
	\begin{bNiceArray}{*{3}{c}|*{3}{c}|*{3}{c}}[first-row,first-col]
				& 1,1 	& 1,2	& 1,3	& 2,1 	& 2,2	& 2,3	& 3,1 	& 3,2	& 3,3	\\
		1		& 1	& 1	& 0	& 0	& 0	& 0	& 0	& 0	& 0	\\
		2		& 0	& 0	& 0	& 1	& 1	& 0	& 0	& 0	& 0	\\
		3		& 0	& 0	& 0	& 0	& 0	& 0	& 0	& 0	& 0
	\end{bNiceArray}
	$}
\end{equation}

\textbf{Step 5:} There are no new bilabelled graph diagrams to consider in this step as $A$ does not have any loops. 

In total, this gives us $60$ $(3,0)$--bilabelled graph diagrams that are appropriate for $A$.

We have implicitly applied \textbf{Step 6} in calculating the $A$-homomorphism matrices
during Steps 1 through 5 inclusive since we had to convert 
each $(3,0)$--bilabelled graph diagram into a 
$(2,1)$--bilabelled graph diagram first before applying the functor $\mathcal{F}^{G}$ 
to obtain each $A$-homomorphism matrix.
Hence we move onto the second half of \textbf{Step 7}:
we remove all duplicate matrices and the all zero matrices.
We see that we are left with $31$ matrices in the spanning set.
After this, 
to obtain the weight matrix for an $\Aut(A) \cong S_2$-equivariant linear
layer function from 
$(\mathbb{R}^3)^{\otimes 2}$ to 
$\mathbb{R}^3$,
we would weight each of the $31$ matrices in the spanning set and then add them together.

%\textbf{Step 5:} We now apply Frobenius Duality to each $(3,0)$--bilabelled graph
%found in steps $1$ to $4$ inclusive to obtain their form as $(2,1)$--bilabelled graphs.
%
%To save space, we will not print these here, but, to give an example of this procedure,
%we see that $\bm{C_{1,1}}$, which, as a $(3,0)$--bilabelled graph
%is
%\begin{equation}
	%\begin{aligned}
		%\scalebox{0.6}{\tikzfig{graph21sq8line}}
	%\end{aligned}
%\end{equation}
%becomes the $(2,1)$--bilabelled graph
%\begin{equation}
	%\begin{aligned}
		%\scalebox{0.6}{\tikzfig{graph21sq8line21}}
	%\end{aligned}
%\end{equation}
%where we have dragged vertex $1$ to the top row and dragged the remaining vertices 
%in the bottom row to be in ascending order.

%\textbf{Step 6:} We calculate the $A$-homomorphism matrices 
%corresponding to the $(2,1)$--bilabelled graph diagrams given in Step 5.
%We remove all duplicate matrices, weight those that are remaining, and add
%them together to obtain the weight matrix for an $\Aut(A) \cong S_2$-equivariant linear
%layer function from 
%$(\mathbb{R}^3)^{\otimes 2}$ to 
%$\mathbb{R}^3$.

%The weight matrix $W = (W_{I \mid J})$ for an $\Aut(A) \cong S_2$-equivariant linear
%layer function from 
%$(\mathbb{R}^3)^{\otimes 2}$ to 
%$\mathbb{R}^3$ has the following entries:
%\begin{itemize}
	%\item $W_{1 \mid 1,1} = w_{1,2,3,4,5,7,11,14,16,17,18,19,20,21,22,23,24,25,26,27,28,29}$
	%\item $W_{1 \mid 1,2} = w_{2,5,6,9,11,12,15,16,17,22,23,24,25,26,27,28,29,30}$
%\end{itemize}
%where $w_{K} \coloneqq \sum_{k \in K} w_k$ for index sets $K \subseteq [31]$.
%
%\textbf{TRY DOING THE WEIGHT MATRIX IN FULL?}

It is important to highlight that the vector space 
$
	\Hom((\mathbb{R}^{3})^{\otimes 2},
	\mathbb{R}^{3})
$,
in which
$
	\Hom_{S_2}((\mathbb{R}^{3})^{\otimes 2},
	\mathbb{R}^{3})
$ 
lives, is of dimension $27$; hence, there must be linear dependencies amongst the $31$
spanning set elements that we have found.
However, determining these linear dependencies in general 
at the graph level 
\textit{a priori}
is left for future work.

We can also find the elements of the spanning set for
$
	\Hom_{S_2}((\mathbb{R}^{3})^{\otimes 2},
	\mathbb{R}^{3})
$
for the other two embeddings of $S_2$ in $S_3$, 
given by $\Aut(B)$ and $\Aut(C)$.
For $\Aut(B)$, we perform the permutation $(13)$ on each index of the rows and columns 
of the spanning set matrices found for $\Aut(A)$, and for $\Aut(C)$,
we perform the permutation $(23)$ instead.

\end{example}

\section{Adding Features and Biases}

\subsection{Features} 

We have assumed throughout that the feature dimension for all of the layers
that appear in a graph automorphism group equivariant neural network is one.  
We can adapt all of the results that
%have been 
we have shown for the case where the feature dimension of the layers is
greater than one.

Let $G$ be a graph having $n$ vertices, and suppose that an $r$-order tensor
has a feature space of dimension $d_r$.  We now wish to find a spanning set for
\begin{equation} \label{HomGraphfeatures}
	\Hom_{\Aut(G)}((\mathbb{R}^{n})^{\otimes k} \otimes \mathbb{R}^{d_k},
(\mathbb{R}^{n})^{\otimes l} \otimes \mathbb{R}^{d_l}) 
\end{equation} 
in the standard basis of $\mathbb{R}^{n}$. 

The spanning set can be found by adapting the result given in
(\ref{autspanres}), and is \begin{equation} \{X_{\bm{H}, i, j}^G \mid [\bm{H}]
\in \mathcal{G}(k,l), i \in [d_l], j \in [d_k]\} \end{equation} where now, if
$\bm{H} \coloneqq (H, \bm{k}, \bm{l})$, then $X_{\bm{H}, i, j}^G$ is the $(n^l
\times d_l) \times (n^k \times d_k)$ matrix that has $(I,i,J,j)$--entry given
by the number of graph homomorphisms from $H$ to $G$ such that $\bm{l}$ is
mapped to $I$ and $\bm{k}$ is mapped to $J$, and is $0$ otherwise.

\subsection{Biases} 

Including bias terms in the layer functions of an $\Aut(G)$-equivariant neural
network is also possible.  If we consider a learnable linear layer in
$\Hom_{\Aut(G)}((\mathbb{R}^{n})^{\otimes k}, (\mathbb{R}^{n})^{\otimes l})$,
\citet{pearcecrump} shows that the $\Aut(G)$-equivariance of the bias function,
$\beta : ((\mathbb{R}^{n})^{\otimes k}, \rho_k) \rightarrow
((\mathbb{R}^{n})^{\otimes l}, \rho_l)$, needs to satisfy 
\begin{equation} \label{AutGbiasequiv} 
	c = \rho_l(g)c 
\end{equation} 
for all $g \in \Aut(G)$ and for all $c \in (\mathbb{R}^{n})^{\otimes l}$.

Since any $c \in (\mathbb{R}^{n})^{\otimes l}$ satisfying (\ref{AutGbiasequiv})
can be viewed as an element of $\Hom_{\Aut(G)}(\mathbb{R},
(\mathbb{R}^{n})^{\otimes l})$, to find the matrix form of $c$, all we need to
do is to find a spanning set for $\Hom_{\Aut(G)}(\mathbb{R},
(\mathbb{R}^{n})^{\otimes l})$.

But this is simply a matter of applying Theorem \ref{graphmainthrm}, setting $k
= 0$.

%%%%%%%%%%%%%%%%%%%%%%%%%%%%%%%%%%%%%%%%%%%%%%%%%%%%%%%%%%%%%%%%%%%%%%%%%%%%%%%
%%%%%%%%%%%%%%%%%%%%%%%%%%%%%%%%%%%%%%%%%%%%%%%%%%%%%%%%%%%%%%%%%%%%%%%%%%%%%%%

\end{document}